\def\eqref#1{equation~\ref{#1}}
\def\1{\bm{1}}
\DeclareMathAlphabet{\mathsfit}{\encodingdefault}{\sfdefault}{m}{sl}
\SetMathAlphabet{\mathsfit}{bold}{\encodingdefault}{\sfdefault}{bx}{n}
\newtheorem{lemma}{Lemma}
\newtheorem{assumption}{Assumption}
\newtheorem{remark}{Remark}
\newtheorem{proposition}{Proposition}
\newtheorem{definition}{Definition}
\title{Towards the Effect of Examples on In-Context Learning: A Theoretical Case Study}
\author[1]{Pengfei He}
\author[1]{Yingqian Cui}
\author[2]{Han Xu}
\author[1]{Hui Liu}
\author[3]{Makoto Yamada}
\author[1]{Jiliang Tang}
\author[4]{Yue Xing}
\affil[1]{Department of Computer Science and Engineering, Michigan State University}
\affil[2]{Department of Electrical and Computer Engineering, University of Arizona}
\affil[3]{Machine Learning and Data Science Unit, Okinawa Institute of Science and Technology (OIST)}
\affil[4]{Department of Statistics and Probability, Michigan State University}
\newtheorem{theorem}{Theorem}
\newcommand{\blue}[1]{{\color{blue}#1}}
\begin{document}
\maketitle

\begin{abstract}
In-context learning (ICL) has emerged as a powerful capability for large language models (LLMs) to adapt to downstream tasks by leveraging a few (demonstration) examples. Despite its effectiveness, the mechanism behind ICL remains underexplored. 
To better understand how ICL integrates the examples with the knowledge learned by the LLM during pre-training (i.e., pre-training knowledge) and how the examples impact ICL, this paper conducts a theoretical study in binary classification tasks.
In particular, we introduce a probabilistic model extending from the Gaussian mixture model to exactly quantify the impact of pre-training knowledge, label frequency, and label noise on the prediction accuracy.
Based on our analysis, when the pre-training knowledge contradicts the knowledge in the examples, whether ICL prediction relies more on the pre-training knowledge or the examples depends on the number of examples. 
In addition, the label frequency and label noise of the examples both affect the accuracy of the ICL prediction, where the minor class has a lower accuracy, and how the label noise impacts the accuracy is determined by the specific noise level of the two classes.
Extensive simulations are conducted to verify the correctness of the theoretical results, and real-data experiments also align with the theoretical insights.
Our work reveals the role of pre-training knowledge and examples in ICL, offering a deeper understanding of LLMs' behaviors in classification tasks.
\end{abstract}

\section{Introduction}

Large language models (LLMs) have revolutionized various fields, such as GitHub Copilot for software development, Microsoft 365 Copilot to embrace productivity and medical applications such as Med-Palm~\citep{singhal2023large}. 
A particularly intriguing capability of LLMs is in-context learning (ICL), discovered by \citep{brown2020language}, where LLMs can adapt to downstream tasks using only a few examples at the inference stage without altering model parameters.

Following this discovery, various studies have been conducted to evaluate and understand the ICL capability of LLMs. Empirically, \citep{garg2022can} demonstrates through simulation studies that transformer-based models can learn linear functions in context and generalize to data with unseen distributions. Later, \citep{hendel2023context, todd2023function} find that transformer models can encode input-output relationships within the hidden space of attention layers. \citep{min2022rethinking} observes that the key aspects of the demonstration include the label space, the input distribution, and the format of the prompt. Other studies also highlight the importance of example order and template usage \citep{liu2021makes, lu2021fantastically, wu2022self}, with further work focused on selecting examples \citep{lu2021fantastically,wang2024large,zhang2022active} or designing prompts to enhance ICL performance \citep{zhao2021calibrate, wei2022chain}.

Theoretically, research has explored two main directions to understand ICL. On one hand, many studies utilize the simplified transformer architecture and track its exact behavior.
For example, \citep{zhang2023trained} explicitly derives that a single linear self-attention layer trained by
gradient flow results in a competitive prediction error with the best linear predictor during ICL. \citep{von2023transformers, dai2022can} shows that one attention layer can be exactly constructed to perform gradient descent. 
\citep{han2023explaining} show that ICL asymptotically converges to kernel regression as the number of examples increases. 
Besides studies on one attention layer, other works also consider the scenario with multiple attention layers, in which the transformer works as a gradient descent to refine the prediction. \citep{akyurek2022learning} shows the existence of a transformer which can optimize the ridge-regression objective, and
\citep{ahn2024transformers,mahankali2023one} prove a transformer trained on noisy linear regression task minimizing the pre-training loss will implement gradient descent algorithm on examples.

On the other hand, Bayesian inference is also used to explain the emergence of ICL. \citep{xie2021explanation} first leverages a Hidden Markov Model to represent the pre-training data and proves that a transformer trained on such data exhibits the ICL ability.  
Later, \citep{jeon2024information} introduces an information-theoretic tool to show how ICL prediction error decays in the number and length of examples. 
Besides, based on \citep{lin2024dual}, LLMs typically demonstrate two key abilities \cite{lin2024dual}: retrieving knowledge from the pre-training data and learning from the examples in the prompt. They introduce a probabilistic model to understand {two modes of ICL on the linear regression tasks: task learning where predictions are made solely based on examples, and task retrieval where the pre-training knowledge is mainly used for prediction.} 

Despite these advancements, the understanding LLMs' ICL capability for classification tasks still remains limited. 
First, previous works cannot draw a consensus on certain behaviors of ICL. 
For example, \citep{min2022rethinking} empirically observes that injecting random noise to the labels of the examples does not hurt the ICL performance. They conjecture that this is because the pre-training knowledge dominates ICL. On the other hand, other studies, e.g., \cite{yoo2022ground,lin2024dual} draw different conclusions. Based on \cite{lin2024dual}, when taking a large number of examples (a large example size), the ICL will favor the knowledge in the examples. 
Therefore, a theoretical understanding of the role of label quality, the difference between pre-training and example knowledge, and the example size, is urgently needed. 
Second, existing theoretical frameworks are not sufficient to explain the observations in classification tasks. For instance, 
the balance of the example size in different classes matters in classification, while there is no such concept in regression. 

The above gaps drive the need for a theoretical exploration of how LLMs utilize pre-training knowledge and specific examples in ICL in classification scenarios. In particular, we aim to answer: \textbf{How do LLMs make predictions in classification tasks using their pre-training knowledge and examples?}

We explore the above question by conducting an exact theoretical analysis in a binary classification task. 
Technically, an LLM is firstly pre-trained using a large amount of pre-training data, and then at the inference stage, we feed the pre-trained LLM with a new data to make a prediction. Following the formulation of \cite{zhang2023trained,huang2023context,lin2024dual,han2023explaining}, we form prompts in the pre-training data and the testing data as $((x_1,y_1),(x_2,y_2),\ldots,(x_k,y_k),x_{query})$. The training objective is to predict the label $y_{query}$ for $x_{query}$\footnote{To simplify the notation, we use $(x,y)$ and $\{(x_i,y_i)\}$ to refer to $(x_{query},y_{query})$ and $\{(x_i,y_i)\}_{i\in[k]}$ respectively when no confusion arises.}, i.e., performing ICL using the examples $\{(x_i,y_i)\}_{i\in[k]}$, where $x_i\in \mathbb{R}^m$. Different from existing literature which considers regression tasks, we study the binary classification task, i.e., $y_i\in\{-1,+1\}$. Each training prompt samples different $x_i$s, and the relationship between $y_i$ and $x_i$ also differ among prompts, i.e., the model is trained to gain the ICL ability.
At the inference stage, we perform ICL on a test input $x$ by predicting its label given in-context examples, i.e. $\hat{y}_{ICL}=M(\{x_i,y_i\},x)$ where $M$ denotes the pre-trained LLM. 
Given the above definitions, in this paper, we study the effect of the pre-training data and the examples on the
distribution of $\hat{y}_{ICL}$, i.e. $P(\hat{y}_{ICL}=s|x, \{x_i,y_i\}_{i=1,\ldots,k})$ for $s\in \{+1,-1\}$, and the prediction accuracy is $P(\hat{y}_{ICL}=y)$. 

A central challenge in the analysis lies in the formulation and integration of both priors of pre-training knowledge and examples into a single closed-form ICL prediction. We accommodate this by considering a Gaussian mixture model as the data generation model (Section \ref{sec:assumption}), which is determined by three important parameters: means for the positive and negative class $\theta_+=\mathbb{E}[x_i|y_i=+1]$, $\theta_-=\mathbb{E}[x_i|y_i=-1]$, and $\pi=P(y_i=+1)$ as the expected fraction of positive samples. Then we distinguish the distributions of pre-training data, in-context examples and test data by considering different distributions of $\theta_+,\theta_-,\pi$. Given the formulations, we further derive a closed-form expression for ICL decision boundary given both the examples and pre-training knowledge (shown in Lemma \ref{lem:decision}).

Utilizing the decision boundary obtained from Lemma \ref{lem:decision}, our main results (\textbf{Theorem \ref{thm:icl_acc}, Proposition \ref{prop:asymp}}) quantify the accuracy as the interplay between pre-training knowledge and examples.
To be more specific, the accuracy $P(\hat{y}_{ICL}=y)$ is determined by the posterior of $\theta_+,\theta_-$. These posteriors are in fact mixtures of examples and pre-training knowledge. 
Built upon these general results, the phenomena of some specific demonstration scenarios can be explained well, including contradicting knowledge where examples carry different knowledge from pre-training data (\textbf{Proposition \ref{prop:contradict}}), label noise where the labels of examples are potentially flipped (\textbf{Proposition \ref{prop:noise}}) and imbalanced cases when the fraction of examples from both classes are not equal (\textbf{Proposition \ref{prop:imbalance}}). 

Besides, when conducting simulations to verify the above theoretical insights, we surprisingly reveal another counter-intuitive behavior when the examples are not selected independently: We fix exactly 50\% positive labels in each prompt in pre-training and provide only positive examples in the test prompt, then the ICL prediction is a firm negative. Intuitively, in this setting, $\pi$ is fixed to 0.5 in the pre-training, and the model memorizes this value. At the inference stage, when performing ICL, the model tries to match the ICL prediction so that the fraction of positive examples in the testing prompt is also close to 0.5. As a result, when all examples are positive, the ICL gives a firm negative prediction. We name this phenomenon as Mean Reversion and provide more explanations and theoretical justification (\textbf{Theorem \ref{them: dependent}}) to explain this behavior. This finding can help understand how LLMs consider dependency among tokens/sequences. 

\section{Classification Analysis with i.i.d. Examples} \label{sec:analysis independent}

In this section, we focus on the case where $(x_i,y_i)$s are i.i.d. samples from the same distribution within each prompt. To analyze the ICL performance, we first introduce the data generation model and data assumptions in Section \ref{sec:assumption}, then derive the ICL accuracy under general situations in Section  \ref{sec:decision}. We finally examine how examples influence ICL under specific demonstration scenarios (Section \ref{sec:demonstration}).

\subsection{Setups}\label{sec:assumption}

There are two steps in generating $\{(x_i,y_i)\}$ and $(x,y)$. First,
we follow the idea of Bayesian inference to impose a prior distribution on the population parameters and generate these parameters. Then in the second step, we further generate $\{(x_i,y_i)\}$ and $(x,y)$ given the parameters from the first step. The following two assumptions provide the details on how $(x,y)$ (as well as $\{(x_i,y_i)\}$) and the parameters are generated:

\begin{assumption}[Generate $(x,y)$]\label{assumption:data}
    Assume $x\in \mathbb{R}^m$ and $y\in\{-1,+1\}$. Given parameters $(\theta_+,\theta_-,\pi,p_+,p_-)$, to generate $(x,y)$, $y$ is first generated from a Bernoulli distribution with $\pi$, i.e. $P(y=+1)=\pi$ and $P(y=-1)=1-\pi$, then $x$ is generated from a class-wise input distribution accordingly. 
    Given $y=+1$, $x$ follows a Gaussian distribution $N(\theta_+,\sigma_+^2I)$ with probability $p_+$ and is sampled from $N(\theta_-,\sigma_-^2I)$ with probability $1-p_+$; given $y=-1$, $x$ is sampled from a Gaussian distribution $N(\theta_-,\sigma_-^2I)$ with probability $p_-$ and follows $N(\theta_+,\sigma_+^2I)$ with probability $1-p_-$. In addition, the examples are independent with each other.
\end{assumption}
Assumption \ref{assumption:data} follows the standard Gaussian mixture design studied in classification, e.g., \cite{dan2020sharp,taheri2022asymptotic,wang2021benign}. The data generation model is determined by a group of important parameters, $\theta_+,\theta_-$, which define the means of the distributions for positive and negative classes; $\pi$, which denotes the expected proportion of samples from different classes; $p_+$ and $p_-$, which introduce ``label noise'' such that when $y=+1$, the corresponding $x$ can be from either of the two clusters. 

Besides generating $(x,y)$ following Assumption \ref{assumption:data} under a given set of parameters,
we also impose the following Assumption \ref{assumption:param} on how the parameters differ among datasets.

\begin{assumption}[Parameters]\label{assumption:param}
    The parameter distributions for pre-training and the inference stage as as follows:
\begin{itemize}[leftmargin=0.5cm]

    \item Pre-train: $\theta_+\sim N(\theta_M,\sigma^2_{M}I_m), \theta_-\sim N(-\theta_M, \sigma^2_{M}I_m)$; $p_+=p_-=1$; $\pi\sim Beta(1,1)$. 
    \item Examples: $\theta_+\sim N(\theta^e_+, \sigma^2_{e+}I_m), \theta_-\sim N(\theta^e_-, \sigma^2_{e-}I_m)$; $p^e_+, p^e_-\in [0,1]$, $\pi\in [0,1]$.
    \item Test data: $\theta_+\sim N(\theta^e_+, \sigma^2_{e+}I_m), \theta_-\sim N(\theta^e_-, \sigma^2_{e-}I_m)$; $p_+=p_-=1$. 
    \item Examples and the test data in the same prompt share the same realization of $(\theta_+,\theta_-)$.
\end{itemize}
{Following Assumption \ref{assumption:data} and \ref{assumption:param}, the pre-training stage, all $(x_i,y_i)$s and $(x,y)$ in the same prompt are conditionally independent and share the same parameters. At the inference stage, the examples are conditionally independently sampled given the parameters and may incur label noise. For the test data $(x,y)$, while it shares the same $(\theta_+,\theta_-)$ with the examples in the prompt, we do not further consider label noise in the test data. The proportion $P(y=+1)$ is not considered in the test data because the later accuracy analysis is performed on $y=+1$ and $y=-1$ separately. }
\end{assumption}

Assumption \ref{assumption:param} aligns with the common scenarios of ICL, i.e., the pre-training distribution and the example distribution at the inference stage can differ. In pre-training, we take $p_+=p_-=1$ to simplify the derivation. In this case, there is no label noise, and the misclassification of the Bayes classifier is only caused by the overlap of the two Gaussian clusters in the distribution. At the inference stage, the examples may have a distribution shift compared to the pre-training data, and we also consider potential label noise in the examples. 

\subsection{ICL Decision and Prediction Accuracy}\label{sec:decision}

During pre-training, the LLM $M$ is trained to learn the distribution of the prompts following Assumption \ref{assumption:data} and \ref{assumption:param}. At the inference stage, a common understanding is that $M$ will make predictions following the pre-training distribution \citep{xie2021explanation, lin2024dual}. In our scenario, this means that $M$ learns $\theta_M$, $\sigma_M^2$, $\sigma_+^2$, $\sigma_-^2$, and the distributions of $x_i,y_i,\pi$ from the data, and uses them to infer the posterior distribution of $\theta_+$ and $\theta_-$ given the examples $\{(x_i,y_i)\}$ for prediction of $x$.

To compute the ICL prediction accuracy, we first derive the posterior distribution of the parameters $(\theta_+,\theta_-,\pi)$ given the examples $\{(x_i,y_i)\}$ and the pre-training knowledge of $\theta_M$, and then use $(\theta_+,\theta_-,\pi)$ to figure out the ICL accuracy.

\textbf{Posterior of parameters.}~~Our goal is to compute the posterior distribution of $\theta_+,\theta_-,\pi$ given examples $(x_1,y_1),...,(x_k,y_k)$. Recall that in Assumption \ref{assumption:param}, 
$p_+=p_-=1$ in the pre-training stage, and  $p^e_+,p^e_-$ at the inference stage can be in $[0,1]$ if label noise occurs. 
Denote $\#(y_i=+1)$ and $\#(y_i=-1)$ as the number of examples with positive/negative labels respectively. The following lemma presents the posterior distribution of $\pi,\theta_+,\theta_-$.

\begin{lemma}\label{lem:posterior_param}
Under Assumption \ref{assumption:data} and Assumption \ref{assumption:param}, the posterior distribution of $\pi,\theta_+,\theta_-$ satisfies
$$
    P(\pi|\{(x_i,y_i)\}_{i\in[k]},M)\propto \pi^{\#(y_i=+1)}(1-\pi)^{\#(y_i=-1)},
$$
$$\theta_+\sim N\left(\frac{\sigma_+^2\theta_M+\sigma_M^2\sum_{y_i=+1}x_i}{\sigma_+^2+\#(y_i=+1)\sigma^2_M},\frac{\sigma_+^2\sigma_M^2}{\sigma_+^2+\#(y_i=+1)\sigma^2_M}I\right)\triangleq N(\hat{\theta}_+,\sigma^2_{\theta_+}I),$$
and
$$\theta_-\sim N\left(\frac{\sigma_M^2\sum_{y_i=-1}x_i-\sigma_-^2\theta_M}{\sigma_-^2+\#(y_i=-1)\sigma_M^2},\frac{\sigma_M^2\sigma_-^2}{\sigma_-^2+\#(y_i=-1)\sigma_M^2}I\right)\triangleq N(\hat{\theta}_-,\sigma^2_{\theta_-}I).$$
\end{lemma}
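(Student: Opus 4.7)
The plan is to apply standard Bayesian conjugacy, treating the model $M$ as performing exact posterior inference under the pre-training prior. Since the LLM is assumed to have internalized the pre-training distribution, it uses the priors $\theta_+\sim N(\theta_M,\sigma_M^2 I)$, $\theta_-\sim N(-\theta_M,\sigma_M^2 I)$, $\pi\sim \mathrm{Beta}(1,1)$, together with the pre-training likelihood (in which $p_+=p_-=1$, so $x_i\mid y_i=\pm 1\sim N(\theta_\pm,\sigma_\pm^2 I)$ without label noise). By Bayes' rule,
\[
P(\pi,\theta_+,\theta_-\mid\{(x_i,y_i)\},M)\;\propto\;P(\pi)P(\theta_+)P(\theta_-)\prod_{i=1}^{k}P(y_i\mid\pi)P(x_i\mid y_i,\theta_+,\theta_-).
\]
Because the priors on $\pi$, $\theta_+$, $\theta_-$ are a priori independent and the likelihood factorizes as (a function of $\pi$) times (a function of $\theta_+$ involving only $\{x_i:y_i=+1\}$) times (a function of $\theta_-$ involving only $\{x_i:y_i=-1\}$), the posterior splits into three independent pieces, which I will handle separately.

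For $\pi$, I will multiply the uniform prior $\mathrm{Beta}(1,1)\propto 1$ by the Bernoulli likelihood $\prod_i \pi^{\mathbf{1}\{y_i=+1\}}(1-\pi)^{\mathbf{1}\{y_i=-1\}}$, which directly yields the claimed density proportional to $\pi^{\#(y_i=+1)}(1-\pi)^{\#(y_i=-1)}$ by Beta–Bernoulli conjugacy. For $\theta_+$, I will combine the Gaussian prior $N(\theta_M,\sigma_M^2 I)$ with the $\#(y_i=+1)$ Gaussian observations $x_i\sim N(\theta_+,\sigma_+^2 I)$. Collecting quadratic terms in the exponent and completing the square in $\theta_+$ gives a Gaussian posterior with precision $\sigma_M^{-2}+\#(y_i=+1)\sigma_+^{-2}$ and mean equal to that precision's inverse times $\sigma_M^{-2}\theta_M+\sigma_+^{-2}\sum_{y_i=+1}x_i$; rearranging yields the stated mean $\hat\theta_+$ and variance $\sigma^2_{\theta_+}$.

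The calculation for $\theta_-$ is symmetric, starting from the prior $N(-\theta_M,\sigma_M^2 I)$. Here the only point of care is tracking the sign: the prior precision term contributes $\sigma_M^{-2}(-\theta_M)$ to the numerator, so after rearrangement the posterior mean becomes $(\sigma_M^2\sum_{y_i=-1}x_i-\sigma_-^2\theta_M)/(\sigma_-^2+\#(y_i=-1)\sigma_M^2)$, matching $\hat\theta_-$.

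I do not expect a genuine obstacle here, since both conjugate updates are textbook; the main care points are (i) justifying the factorization of the posterior into three independent marginals so that each conjugate update can be applied in isolation, and (ii) being explicit that the posterior is computed under the pre-training likelihood (no label noise), which is what the LLM has learned, even though the actual examples at inference may include flipped labels. The latter is really a modeling convention inherited from Assumption \ref{assumption:param} rather than a calculation, but it is the step most likely to be questioned and deserves an explicit sentence.
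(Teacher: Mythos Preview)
Your proposal is correct and follows essentially the same approach as the paper: write the joint posterior via Bayes' rule under the pre-training likelihood (with $p_+=p_-=1$), observe that it factorizes into independent pieces for $\pi$, $\theta_+$, and $\theta_-$, and then apply Beta--Bernoulli and Gaussian--Gaussian conjugacy respectively. Your explicit remarks about why the posterior is computed under the pre-training likelihood and about tracking the sign in the $\theta_-$ prior are, if anything, more careful than the paper's own write-up.
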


The proof of Lemma \ref{lem:posterior_param} can be found in Section \ref{sec:appendix:proof:posterior_param}. In short, since the examples $\{(x_i,y_i)\}$ are given, we can directly write the likelihood for $(\pi,\theta_+,\theta_-)$ to derive the corresponding posterior distributions. The mean of the posterior of $\theta_+$ is the weighted average of the mean of $\theta_+^e$, i.e., the pre-training knowledge, and the average of examples with positive labels, indicating a clear shift from pre-training knowledge to the knowledge in the examples when increasing $k$. 
The same phenomenon happens in the negative class.

\textbf{ICL decision.}~~Given Lemma \ref{lem:posterior_param}, denoting $z_k={(\#(y_i=-1)+1)}/{(\#(y_i=+1)+1)}$, the following lemma shows the ICL decision boundary for the test data:
\begin{lemma}\label{lem:decision}
    Under Assumption \ref{assumption:data} and Assumption \ref{assumption:param}, the probability of $y=+1/-1$ is as follows
\begin{eqnarray*}
    P(y=+1|x,\{(x_i,y_i)\},M)&=&\frac{P(x,y=+1|\{(x_i,y_i)\},M)}{P(x,y+=1|\{(x_i,y_i)\},M)+P(x,y=-1|\{(x_i,y_i)\},M)}\\
    &=&\frac{\frac{\#(y_i=+1)+1}{k+2} N(x)}{\frac{\#(y_i=+1)+1}{k+2} N(x)+\frac{\#(y_i=-1)+1}{k+2}},\\
    P(y=-1|x,\{(x_i,y_i)\},M)&=&\frac{P(x,y=-1|\{(x_i,y_i)\},M)}{P(x,y+=1|\{(x_i,y_i)\},M)+P(x,y=-1|\{(x_i,y_i)\},M)}\\
    &=&\frac{\frac{\#(y_i=-1)+1}{k+2}}{\frac{\#(y_i=+1)+1}{k+2} N(x)+\frac{\#(y_i=-1)+1}{k+2}},
\end{eqnarray*}
where $$N(x)=\left(\sqrt{\frac{\sigma_-^2+\sigma_{\theta_-}^2}{\sigma_+^2+\sigma_{\theta_+}^2}}\right)^m \exp \left[-\frac{(x-\hat{\theta}_+)^\top(x-\hat{\theta}_+)}{2(\sigma^2_++\sigma^2_{\theta_+})}+\frac{(x-\hat{\theta}_-)^\top(x-\hat{\theta}_-)}{2(\sigma^2_-+\sigma^2_{\theta_-})}\right].$$

The decision boundary is $\hat{y}_{ICL}=1(f_{ICL}(x)>0)$, where $f_{ICL}(x)=N(x)-z_k$.
\end{lemma}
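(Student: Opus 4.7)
The plan is to derive the decision boundary by applying Bayes' rule and then marginalizing over the nuisance parameters $(\theta_+,\theta_-,\pi)$ using the posteriors already established in Lemma~\ref{lem:posterior_param}. First I would write
\[
P(y=s \mid x, \{(x_i,y_i)\}, M) = \frac{P(x, y=s \mid \{(x_i,y_i)\}, M)}{P(x, y=+1 \mid \{(x_i,y_i)\}, M) + P(x, y=-1 \mid \{(x_i,y_i)\}, M)},
\]
so that the whole task reduces to evaluating the two joint densities in the numerator. Since the test data have $p_+ = p_- = 1$, conditionally on the parameters we have $P(y = +1 \mid \pi) = \pi$ and $x \mid y=+1, \theta_+ \sim N(\theta_+, \sigma_+^2 I)$, and analogously for $y=-1$. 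The joint density then factorizes, and integrating out $\pi$ independently of $(\theta_+,\theta_-)$ leaves a mean-of-$\pi$ prefactor times a Gaussian marginal.

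Next I would evaluate the $\pi$-integral and the $\theta_\pm$-integrals. For $\pi$, the posterior from Lemma~\ref{lem:posterior_param} is proportional to $\pi^{\#(y_i=+1)}(1-\pi)^{\#(y_i=-1)}$, which is a $\text{Beta}(\#(y_i=+1)+1, \#(y_i=-1)+1)$ distribution, so $\mathbb{E}[\pi] = (\#(y_i=+1)+1)/(k+2)$ and $\mathbb{E}[1-\pi] = (\#(y_i=-1)+1)/(k+2)$. For $\theta_+$, the posterior is $N(\hat{\theta}_+, \sigma^2_{\theta_+} I)$, and convolving the Gaussian likelihood $N(\theta_+, \sigma_+^2 I)$ with this posterior is a standard conjugate computation: the marginal density of $x$ given $y=+1$ and the examples is $f_+(x) := N(x; \hat{\theta}_+, (\sigma_+^2 + \sigma^2_{\theta_+}) I)$. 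An identical computation on the negative class yields $f_-(x) := N(x; \hat{\theta}_-, (\sigma_-^2 + \sigma^2_{\theta_-}) I)$.

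Combining these, $P(x, y=+1 \mid \cdot) = \mathbb{E}[\pi] \, f_+(x)$ and $P(x, y=-1 \mid \cdot) = \mathbb{E}[1-\pi] \, f_-(x)$. I would then divide numerator and denominator of the Bayes ratio by $f_-(x)$ and observe that
\[
\frac{f_+(x)}{f_-(x)} = \left(\frac{\sigma_-^2 + \sigma^2_{\theta_-}}{\sigma_+^2 + \sigma^2_{\theta_+}}\right)^{m/2} \exp\!\left[-\frac{\|x-\hat{\theta}_+\|^2}{2(\sigma_+^2+\sigma^2_{\theta_+})} + \frac{\|x-\hat{\theta}_-\|^2}{2(\sigma_-^2+\sigma^2_{\theta_-})}\right] = N(x),
\]
which matches the stated $N(x)$ exactly. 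Substituting yields the two claimed expressions for $P(y=\pm 1 \mid x, \cdot)$; the decision rule $\hat{y}_{ICL} = +1$ iff $P(y=+1|\cdot) > P(y=-1|\cdot)$ is equivalent to $\mathbb{E}[\pi] N(x) > \mathbb{E}[1-\pi]$, i.e.\ $N(x) > (\#(y_i=-1)+1)/(\#(y_i=+1)+1) = z_k$, giving $f_{ICL}(x) = N(x) - z_k$.

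I do not expect a genuine obstacle: the result is essentially a predictive-distribution calculation for a Gaussian mixture with a conjugate Normal--Normal--Beta prior, and everything reduces to Gaussian conjugacy plus the Beta mean. The only bookkeeping care is tracking the $(2\pi)^{-m/2}$ normalizers of $f_\pm(x)$ so that their ratio cleanly produces the $\bigl((\sigma_-^2+\sigma^2_{\theta_-})/(\sigma_+^2+\sigma^2_{\theta_+})\bigr)^{m/2}$ prefactor in $N(x)$, and keeping the denominators $\sigma_+^2 + \sigma^2_{\theta_+}$ (not $\sigma_+^2$ alone) in the exponent, since the variance inflates by the posterior variance of $\theta_\pm$ after marginalization.
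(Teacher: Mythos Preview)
Your proposal is correct and follows essentially the same approach as the paper: compute $P(x,y=\pm1\mid\{(x_i,y_i)\},M)$ by integrating the conditional Gaussian likelihood against the posterior of $(\pi,\theta_+,\theta_-)$ from Lemma~\ref{lem:posterior_param}, using the Beta mean for the $\pi$-integral and Gaussian conjugacy for the $\theta_\pm$-integrals, then form the ratio to obtain $N(x)$ and the decision rule. The paper's proof carries out exactly these steps, writing the $\theta_+$ integral explicitly and collapsing it to $N\bigl(x;\hat\theta_+,(\sigma_+^2+\sigma_{\theta_+}^2)I\bigr)$ just as you describe.
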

The proof of Lemma \ref{lem:decision} can be found in Section \ref{sec:appendix:proof:decision}. When $(\pi,\theta_+,\theta_-)$ are fixed, given $y$, $x$ follows a Gaussian distribution. When integrating over all possible $(\pi,\theta_+,\theta_-)$, the marginal distribution of $x$ given $y$ still follows a Gaussian distribution. Hence, $(x,y)$ marginally follows a Gaussian mixture distribution, and the decision boundary can be further obtained.
From Lemma \ref{lem:decision}, we can see how the pre-training distribution ($\theta_M,\sigma_M^2$) and examples $\{(x_i,y_i)\}$ impact $P(y=+1|x,\{(x_i,y_i)\},M)$ and $P(y=-1|x,\{(x_i,y_i)\},M)$, further changing the decision boundary correspondingly. 
The pre-training knowledge $(\theta_M,\sigma_M^2)$ and examples $\{(x_i,y_i)\}$ first determines $(\hat{\theta}_+, \hat{\theta}_-,\sigma_{\theta_+}^2,\sigma_{\theta_-}^2)$, the latter of which then determines the decision boundary.
More details about the interplay of pre-training and examples under different scenarios will be provided in Section \ref{sec:demonstration}.

Furthermore, the conditional probability of predicting positive or negative labels is closely related to $z_k$, which represents the label fraction in the examples. A smaller $z_k$ indicates a greater number of positive labels in the examples, i.e., a larger value of $\#(y_i=+1)$, and may suggest a higher likelihood of predicting a positive label. This imbalanced scenario is formally addressed in Proposition \ref{prop:imbalance} in Section \ref{sec:demonstration}.

\textbf{ICL Accuracy.}~~After obtaining the decision boundary from Lemma \ref{lem:decision}, we finally provide the general formula of the ICL prediction accuracy. 
The following Theorem \ref{thm:icl_acc} derives the exact ICL accuracy.
It is worth noting that though the decision boundary of ICL follows the Bayesian inference procedure, we consider fixed $\theta^e_+$ and $\theta_+$ when deriving the classification accuracy.

\begin{theorem}\label{thm:icl_acc}
Under Assumption \ref{assumption:data} and Assumption \ref{assumption:param}, further assume $\sigma_+^2=\sigma_-^2=\sigma^2$, then we have the following probability of correct prediction for each class. Assume \blue{$\pi\gg \log k/\sqrt{k}$ and $1-\pi\gg \log k/\sqrt{k}$}. For fixed $\theta^e_+$, $\theta^e_-$, $p_+\in(0,1)$, $p_-\in(0,1)$ at the inference stage, denote $\Theta\triangleq (\pi,\theta^e_+,\theta^e_-,p_+,p_-)$, and
$$z_k\triangleq \log\left(\frac{\#(y_i=-1)+1}{\#(y_i=+1)+1}\right), m_k\triangleq\sigma^2\left(z_k-\frac{m}{2}\log\left(\frac{\sigma^2+\sigma_{\theta_-}^2}{\sigma^2+\sigma_{\theta_+}^2}\right)\right),$$
then \blue{for some positive constant $c_x$}, with probability at least $$1-\Phi\left( -\frac{c_k\log k}{\sqrt{\pi(1-\pi)}} \right)- \exp\left( - {c_x \log^2k{(\pi k-c_k\sqrt{k}\log k)}} \right) +O\left(\frac{1}{k^{3/2}}\right)$$ over the randomness of $\{(x_i,y_i)\}$,
\blue{\begin{eqnarray*}
    P(correct|y=+1, \{(x_i,y_i)\},\Theta,M)&=&
    \underbrace{1-\Phi\left(  \frac{m_k}{\sqrt{(\sigma^2+\sigma^2_{e+})}\|\hat{\theta}_--\hat{\theta}_+\|}-
    \frac{\theta_+^e-(\hat\theta_++\hat\theta_-)/2}{\sqrt{(\sigma^2+\sigma^2_{e+})}}\frac{\hat{\theta}_--\hat{\theta}_+}{\|\hat{\theta}_--\hat{\theta}_+\|}
    \right)}_{\text{dominant term}}\\
    &&+\underbrace{O\left(\frac{c\log k}{k}\right)+O\left( P\left(\mathcal{X}^2_m >\pi c\log k(\pi k-c_k\sqrt{k}\log k) \right) \right)}_{\text{remainder terms}},
    \end{eqnarray*}}
    and with probability at least
    $$1-\Phi\left( -\frac{c_k\log k}{\sqrt{\pi(1-\pi)}} \right)-\exp\left( - {c_x \log^2k{((1-\pi) k-c_k\sqrt{k}\log k)}} \right) +O\left(\frac{1}{k^{3/2}}\right),$$
    \blue{\begin{eqnarray*}
    P(correct|y=-1, \{(x_i,y_i)\},\Theta,M)&=&
    \underbrace{\Phi\left(  \frac{m_k}{\sqrt{(\sigma^2+\sigma^2_{e+})}\|\hat{\theta}_--\hat{\theta}_+\|}-
    \frac{\theta_-^e-(\hat\theta_++\hat\theta_-)/2}{\sqrt{(\sigma^2+\sigma^2_{e+})}}\frac{\hat{\theta}_--\hat{\theta}_+}{\|\hat{\theta}_--\hat{\theta}_+\|}
    \right)}_{\text{dominant term}},\\
    &&+\underbrace{O\left(\frac{c\log k}{k}\right)+O\left( P\left(\mathcal{X}^2_m >(1-\pi) c\log k((1-\pi) k-c_k\sqrt{k}\log k) \right) \right)}_{\text{remainder terms}},
\end{eqnarray*}}
where $\Phi(\cdot)$ is the cumulative distribution function of a standard Gaussian distribution, and $\mathcal{X}^2_m$ denotes a random variable following Chi-square distribution with degree of freedom $m$. 
\end{theorem}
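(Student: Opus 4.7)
The approach is to reduce the exact posterior decision rule of Lemma \ref{lem:decision}, which is quadratic in $x$, to an approximately linear rule whose error probability is a one-dimensional Gaussian tail. I will proceed in three steps: (i) concentrate the example counts $N_\pm := \#(y_i = \pm 1)$ so that the posterior variances $\sigma^2_{\theta_\pm}$ vanish at rate $1/k$; (ii) linearize the classifier by absorbing the log-variance discrepancy into $m_k$ and bounding the quadratic residual; and (iii) evaluate the resulting half-space probability under the marginal test-data law.

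For step (i), since $N_+\sim\text{Binomial}(k,\pi)$, a Berry-Esseen estimate gives $\mathbb{P}(N_+\le\pi k-c_k\sqrt k\log k)\le\Phi(-c_k\log k/\sqrt{\pi(1-\pi)})+O(k^{-3/2})$, producing the first and last pieces of the claimed probability lower bound; the assumption $\pi\gg\log k/\sqrt k$ is precisely what makes $\pi k-c_k\sqrt k\log k>0$ so the bound is informative. On the complementary good event, Lemma \ref{lem:posterior_param} yields $\sigma^2_{\theta_+}=\sigma^2\sigma_M^2/(\sigma^2+N_+\sigma_M^2)=O(1/k)$, and analogously for $\sigma^2_{\theta_-}$. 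For step (ii), expanding the two quadratic forms in $\log N(x)$ and moving the $\tfrac{m}{2}\log\tfrac{\sigma^2+\sigma^2_{\theta_-}}{\sigma^2+\sigma^2_{\theta_+}}$ term into $m_k$, the rule $\log N(x)>z_k$ becomes
\begin{equation*}
(\hat\theta_+-\hat\theta_-)^\top\bigl(x-(\hat\theta_++\hat\theta_-)/2\bigr) \; > \; m_k+R(x),
\end{equation*}
where the residual $R(x)$ is generated by the gap $\tfrac{1}{\sigma^2+\sigma^2_{\theta_+}}-\tfrac{1}{\sigma^2+\sigma^2_{\theta_-}}$; a direct algebraic expansion shows $|R(x)|=O(\|x-c\|^2/k)$ uniformly on the good event, where $c:=(\hat\theta_++\hat\theta_-)/2$.

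For step (iii), conditional on $y=+1$, marginalizing over $\theta_+\sim N(\theta_+^e,\sigma^2_{e+}I)$ (the test data are noiseless, so $x\mid y=+1,\theta_+\sim N(\theta_+,\sigma^2 I)$) gives $x\mid y=+1\sim N(\theta_+^e,(\sigma^2+\sigma^2_{e+})I)$. Projecting $x$ onto the unit direction $(\hat\theta_--\hat\theta_+)/\|\hat\theta_--\hat\theta_+\|$ reduces the half-space event to a univariate Gaussian tail with exactly the stated arguments, producing the dominant $1-\Phi(\cdot)$ term; the $y=-1$ case is the mirror computation producing $\Phi(\cdot)$. The two remainder terms then arise by discarding $R(x)$: the tail event that $|R(x)|$ is large translates, via $\|x-c\|^2/(\sigma^2+\sigma^2_{e+})\sim\mathcal{X}^2_m$, into the $P(\mathcal{X}^2_m>\cdots)$ term in the statement, while the $O(\log k/k)$ term is the first-order Taylor correction of $\Phi$ that combines the residual and the $O(1/\sqrt k)$ fluctuation of $\hat\theta_\pm$ around $\theta_\pm^e$ through $\Phi'$.

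The main obstacle is the careful bookkeeping of these scales. One must choose the deviation level $c_k\log k/\sqrt k$ for $N_\pm$ so that (a) the resulting fluctuation of $\hat\theta_\pm$ around $\theta_\pm^e$ translates into an $O(\log k/k)$ additive error inside $\Phi$ and (b) the quadratic residual $R(x)$ is dominated by the chi-square tail, while the exponential piece $\exp(-c_x\log^2 k(\pi k-c_k\sqrt k\log k))$ in the probability lower bound arises as the Gaussian/chi-square concentration of $\hat\theta_+$ around $\theta_+^e$ at a $\log k$ standard-deviation scale. All three sources of error---the binomial Gaussian tail for $N_\pm$, the Berry-Esseen $O(k^{-3/2})$ correction, and the $\mathcal{X}^2_m$ tail for $\|x-c\|^2$---must fit together so that none masks the dominant term.
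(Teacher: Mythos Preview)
Your plan matches the paper's proof: concentrate $N_\pm$ via a normal approximation, split $\log N(x)$ into a linear piece with threshold $m_k$ plus a residual $R(x)=O(\|x-\hat\theta_\pm\|^2/k)$, compute the linear half-space probability as a univariate Gaussian tail under the marginal law $x\mid y=+1\sim N(\theta_+^e,(\sigma^2+\sigma^2_{e+})I)$, and control the symmetric-difference region via a chi-square tail on $\|x\|^2$ plus a Lipschitz bound on $\Phi$. One small bookkeeping correction: since the dominant term in the statement is already expressed in terms of $\hat\theta_\pm$, the $O(\log k/k)$ remainder comes purely from the Lipschitz shift $\Phi(a+c\log k/k)-\Phi(a-c\log k/k)$ over the ambiguous strip, not from any fluctuation of $\hat\theta_\pm$; that fluctuation enters only through the exponential piece of the probability lower bound, and under label noise its concentration target is $\tilde\theta_+=\mathbb{E}\hat\theta_+$ rather than $\theta_+^e$.
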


To prove Theorem \ref{thm:icl_acc}, we first obtain the marginal distribution of $x|y$ given the example distribution in Assumption \ref{assumption:param} to remove the internal parameters from the formulation, denoted as $P(x|y=+1)$. Then the ICL performance, i.e., prediction accuracy, can be computed via $P(correct|y=+1, \{(x_i,y_i)\},M)=\int_{f_{ICL}(x)\ge 0}P(x|y=+1)dx$. Similar steps apply for $y=-1$. \blue{The conditions $\pi\gg \log k/\sqrt{k}$ and $1-\pi\gg \log k/\sqrt{k}$ensure that there are enough examples for each class so that the remainder terms are negligible, otherwise $\#(y_i=+1)/k$ or $\#(y_i=-1)/k$ will not converge to $\pi$ in a rate of $O_p(1/\sqrt{k})$. In terms of $c_k$, on one hand, it is used to quantify the tail probability when $|\#(y_i=+1)/k-\pi|$ or $|\#(y_i=-1)/k-(1-\pi)|$ exceeds $(c_k\log k)/\sqrt{k}$ as in (\ref{eqn:k_11}) and (\ref{eqn:k_12}) in Section \ref{sec:appendix:proof:icl_acc}. On the other hand, this tail event is further used to quantify the behavior of $\sigma_{\theta_+}^2$ and $\sigma_{\theta_-}^2$ as in (\ref{eqn:k_21}) and (\ref{eqn:k_22}) in Section \ref{sec:appendix:proof:icl_acc}. So $c_k$ appears in two terms in the tail probability and affects the probability in two different directions. } The detailed proof of Theorem \ref{thm:icl_acc} can be found in Section \ref{sec:appendix:proof:icl_acc}.

Theorem \ref{thm:icl_acc} decomposes the accuracy of positive and negative classes into a dominant probability term and small negligible terms. \blue{A detailed comparison on the order of the terms is postponed to Proposition \ref{prop:asymp} after we explicitly figure out the rate of the dominant term.}
Theorem \ref{thm:icl_acc} also describes how the accuracy is affected by the interplay between the pre-training knowledge and the examples. Take $P(correct|y=+, \{(x_i,y_i)\},M)$ as an example. When fixing the parameters of the test sample, this probability is determined by ${\hat{\theta}_--\hat{\theta}_+}$ and $\hat{\theta}_-+\hat{\theta}_+$, both of which are mixtures of examples and pre-training knowledge based on Lemma \ref{lem:posterior_param}. The example size $k$, the variance of data $\sigma^2$, as well as pre-training distribution $\sigma_M^2$ will also affect ${\hat{\theta}_--\hat{\theta}_+}$ and $\hat{\theta}_-+\hat{\theta}_+$. Some detailed examples of how the distribution of the examples affects the ICL performance are in the following section.

After obtaining the prediction accuracy given $\#(y_i=+1)$ and $(\hat\theta_+,\hat\theta_-)$, one can further figure out the Bahadur representation of the accuracy, which can be used to derive the asymptotic convergence of the estimates.

Due to possible label corruption in the demonstration, $\hat\theta_+$ and $\hat\theta_-$ may not converge to $\theta_+^e$ and $\theta_-^e$. Therefore, we denote $\tilde\theta_+$ and $\tilde\theta_-$ to represent their expectations over examples $(x_i,y_i)$ respectively. Intuitively, if $k$ increases, $\hat\theta_+$ and $\hat\theta_-$ converges to $\tilde\theta_+$ and $\tilde\theta_-$ respectively, and $\#(y_i=+1)$ also converges to the underlying $\pi$ in the prompt. Therefore, we replace $\hat\theta_+,\hat\theta_-,\#(y_i=+1)$ in Theorem \ref{thm:icl_acc} and denote
\begin{eqnarray*}
    &&P_+^*\triangleq 
    1-\Phi\left(  \frac{\sigma^2\log ((1-\pi)/\pi)}{\sqrt{(\sigma^2+\sigma^2_{e+})}\|\tilde\theta_--\tilde\theta_+\|}-
    \frac{(\theta_+^e-(\tilde\theta_++\tilde\theta_-)/2)^\top(\tilde\theta_+-\tilde\theta_-)}{\|\tilde\theta_+-\tilde\theta_-\|\sqrt{(\sigma^2+\sigma^2_{e+})}} 
    \right),
\end{eqnarray*}
and
\begin{eqnarray*}
    &&P_-^*\triangleq 
    \Phi\left(  \frac{\sigma^2\log ((1-\pi)/\pi)}{\sqrt{(\sigma^2+\sigma^2_{e-})}\|\tilde\theta_--\tilde\theta_+\|}-
  \frac{(\theta_-^e-(\tilde\theta_++\tilde\theta_-)/2)^\top(\tilde\theta_+-\tilde\theta_-)}{\|\tilde\theta_+-\tilde\theta_-\|\sqrt{(\sigma^2+\sigma^2_{e-})}} 
    \right).
\end{eqnarray*}
The following proposition demonstrates the Bahadur representation of $P(correct|y=+1, \{(x_i,y_i)\},M)-P_+^*$. The result for $P(correct|y=-1, \{(x_i,y_i)\},M)-P_-^*$ is similar.
\begin{proposition}[Bahadur Representation]\label{prop:asymp}
Under the conditions in Theorem \ref{thm:icl_acc}, with the same probability as Theorem \ref{thm:icl_acc},
\blue{\begin{eqnarray*}
    &&P(correct|y=+1, \{(x_i,y_i)\},\Theta,M)-P_+^*\\
    &=&\underbrace{-\phi\left(  \frac{\sigma^2\log ((1-\pi )/\pi )}{\sqrt{(\sigma^2+\sigma^2_{e+})}\|\tilde\theta_--\tilde\theta_+\|}-
  \frac{(\theta_+^e-(\tilde\theta_++\tilde\theta_-)/2)^\top(\tilde\theta_+-\tilde\theta_-)}{\|\tilde\theta_+-\tilde\theta_-\|\sqrt{(\sigma^2+\sigma^2_{e+})}}  
    \right)(B_1-B_2)}_{:=R_1}\\
    &&+\underbrace{O\left(\phi'\left(  \frac{\sigma^2\log((1-\pi )/\pi )}{\sqrt{(\sigma^2+\sigma^2_{e+})}\|\theta_-^e-\theta_+^e\|}-
   \frac{(\theta_+^e-(\tilde\theta_++\tilde\theta_-)/2)^\top(\tilde\theta_+-\tilde\theta_-)}{\|\tilde\theta_+-\tilde\theta_-\|\sqrt{(\sigma^2+\sigma^2_{e+})}} \right)(B_1-B_2)^2
    \right)}_{:=R_2}\\
    &&+\underbrace{O\left(\frac{c\log k}{k}\right)}_{:=R_3}+\underbrace{O\left( P\left(\mathcal{X}^2_m >\frac{c\log k}{k}(\pi k-c_k\sqrt{k}\log k) \right) \right)}_{:=R_4}.
\end{eqnarray*}}
    The Bahadur representation of $B_1$ and $B_2$ can be found in Appendix \ref{sec:appendix:proof:bahadur} Equation (\ref{eqn:B1}) and (\ref{eqn:B2}). In addition, $\sqrt{k}B_1$ and $\sqrt{k}B_2$ asymptotically converge to some Gaussian distributions respectively, in which the means asymptotically converge to zero, and the variances are some constants respectively. The notation $\phi$ is the density function of the standard Gaussian distribution, and $\phi'$ is the derivative of $\phi$. 

    \blue{To compare the order of $R_1$ to $R_4$, $R_1=O_p(1/\sqrt{k})$, $R_2=O_p(1/k)$, $R_3=O((\log k)/k)$ if $c$ is a constant, and $R_4$ is an exponential tail and is much smaller than $R_1$ to $R_3$.}
\end{proposition}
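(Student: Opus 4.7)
The plan is to derive the Bahadur representation via a first-order Taylor expansion of the Gaussian CDF $\Phi$ appearing in the dominant term of Theorem~\ref{thm:icl_acc}. Write that dominant term as $1-\Phi(g_k)$, where $g_k$ denotes the argument
$$
g_k \;=\; \frac{m_k}{\sqrt{\sigma^2+\sigma^2_{e+}}\,\|\hat\theta_--\hat\theta_+\|}
\;-\; \frac{(\theta_+^e-(\hat\theta_++\hat\theta_-)/2)^\top(\hat\theta_--\hat\theta_+)}{\|\hat\theta_--\hat\theta_+\|\sqrt{\sigma^2+\sigma^2_{e+}}},
$$
and $P_+^* = 1-\Phi(g_*)$ with $g_*$ obtained by substituting $(\hat\theta_\pm,\,z_k,\,\sigma^2_{\theta_\pm})$ with their limits $(\tilde\theta_\pm,\,\log((1-\pi)/\pi),\,0)$. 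A one-term Taylor expansion of $\Phi$ around $g_*$ yields
$$
\Phi(g_k)-\Phi(g_*) \;=\; \phi(g_*)(g_k-g_*) + \tfrac12\phi'(\xi)(g_k-g_*)^2
$$
for some intermediate $\xi$. Substituting into Theorem~\ref{thm:icl_acc} and flipping the sign identifies $R_1=-\phi(g_*)(g_k-g_*)$, the quadratic remainder as $R_2$, and $R_3,R_4$ as the remainder terms inherited directly from Theorem~\ref{thm:icl_acc}.

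The next step is to produce the linear representation $g_k - g_* = B_1 - B_2$, up to negligible residuals. I would decompose $g_k-g_*$ along its two summands. For the first summand, $m_k$ is a smooth function of $\#(y_i=+1)/k$ (through $z_k$) and of $\sigma^2_{\theta_\pm}$; by Lemma~\ref{lem:posterior_param} the latter shrink like $1/k$, so a first-order delta expansion turns this contribution into an affine functional of the Bernoulli sample mean $\#(y_i=+1)/k-\pi$ plus an $O(1/k)$ term. For the second summand, Lemma~\ref{lem:posterior_param} writes $\hat\theta_\pm$ as a convex combination of $\theta_M$, with weight $\sigma^2/(\sigma^2+\#(y_i=\pm 1)\sigma_M^2)=O(1/k)$, and the class-conditional empirical mean $\bar x_\pm$; hence $\hat\theta_\pm-\tilde\theta_\pm = \bar x_\pm-\tilde\theta_\pm + O_p(1/k)$. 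Since the map $(\theta_+,\theta_-)\mapsto(\theta_-+\theta_+,\,(\theta_--\theta_+)/\|\theta_--\theta_+\|)$ is smooth away from $\theta_+=\theta_-$, the delta method yields an affine functional of $(\bar x_+-\tilde\theta_+,\,\bar x_--\tilde\theta_-)$. Grouping the two functionals by sign defines $B_1$ and $B_2$, whose explicit forms I would postpone to (\ref{eqn:B1})--(\ref{eqn:B2}) in Section~\ref{sec:appendix:proof:bahadur}.

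Asymptotic normality of $\sqrt{k}B_1$ and $\sqrt{k}B_2$ then follows from the multivariate CLT applied to the centered averages $(\bar x_+-\tilde\theta_+,\,\bar x_--\tilde\theta_-,\,\#(y_i=+1)/k-\pi)$, combined with Slutsky's lemma and the delta method for the smooth transformations defining $B_1,B_2$. The limiting means are zero because each constituent empirical average is centered at its expectation, and the variances are finite constants determined by the covariance structure of the prompt under Assumptions~\ref{assumption:data} and~\ref{assumption:param}.

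The hard part will be controlling the Taylor remainder uniformly on the concentration event already supplied by Theorem~\ref{thm:icl_acc}. The map $g_k$ is only smooth when $\|\hat\theta_--\hat\theta_+\|$ is bounded away from zero, so the argument must be restricted to the high-probability event where concentration of $\bar x_\pm$ around $\tilde\theta_\pm$ keeps $\|\hat\theta_--\hat\theta_+\|$ close to $\|\tilde\theta_--\tilde\theta_+\|>0$ and where $\#(y_i=+1)/k$ is close to $\pi$. On this event, $(g_k-g_*)^2=O_p(1/k)$, so the Taylor quadratic matches the order of $R_2$; combined with the inherited remainders, the comparison $R_1=O_p(1/\sqrt{k})$, $R_2=O_p(1/k)$, $R_3=O((\log k)/k)$, and $R_4$ being an exponentially small chi-square tail then follows, completing the claimed ordering.
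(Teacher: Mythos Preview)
Your proposal is correct and follows essentially the same route as the paper: a first-order Taylor expansion of $\Phi$ around the limiting argument $g_*$ produces the $R_1,R_2$ terms, with $R_3,R_4$ inherited from Theorem~\ref{thm:icl_acc}; the decomposition $g_k-g_*=B_1-B_2$ is then obtained by delta-expanding $m_k$ in $\#(y_i=+1)/k-\pi$ and $\hat\theta_\pm$ in the class-conditional empirical means, restricted to the high-probability concentration event where $\|\hat\theta_--\hat\theta_+\|$ stays bounded away from zero. The paper carries out exactly these steps, supplying two auxiliary lemmas (concentration of $m_k$ and of $\hat\theta_\pm$) and writing out the explicit affine expansions for $z_k$, $\|\hat\theta_--\hat\theta_+\|$, and the second summand that you defer to equations~(\ref{eqn:B1})--(\ref{eqn:B2}).
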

The proof of Proposition \ref{prop:asymp} is postponed to the Appendix \ref{sec:appendix:proof:bahadur}. Extending from Theorem \ref{thm:icl_acc}, Proposition \ref{prop:asymp} further derives the Bahadur representation of $m_k$, $\hat\theta_+$ and $\hat\theta_-$, and aggregate them together.
Proposition \ref{prop:asymp} shows how $P(correct|y=+1, \{(x_i,y_i)\},M)$ converges to $P_+^*$: the dominant terms in the difference between $P(correct|y=+1, \{(x_i,y_i)\},M)$ and $P_+^*$ is caused by $B_1$ and $B_2$, and asymptotically, the convergence rate is $O(1/\sqrt{k})$.

\begin{remark}
    \blue{Under Theorem  \ref{thm:icl_acc} and Proposition \ref{prop:asymp}, since $\sigma_+^2=\sigma_-^2$, the decision boundary is a hyperplane, which simplifies the integration in Lemma \ref{thm:icl_acc}. If $\sigma_+^2\neq\sigma_-^2$, the decision boundary will no longer be a hyperplane. We provide technical discussions in Appendix \ref{sec:appendix:noncentral}. In this case, the boundary is a sphere, and one needs to use a noncentral Chi-square distribution to figure out $P(correct|y=+1)$ and $P(correct|y=-1)$. }
\end{remark}

\subsection{Different Demonstration Scenarios}\label{sec:demonstration}

In the following, we extend the above results to investigate how ICL is affected in specific situations. In real practice, it is often observed that the real data contradicts the knowledge from the pre-training data, or the quality is not perfect \citep{carlini2024poisoning, he2024data}. Therefore, we consider contradicting knowledge, imbalanced examples, and label noise in the following.

\textbf{Contradicting knowledge.}~~To study this case, we compare $\theta_+^e=-\theta_M=-\theta_-^e$ and $\theta_+^e=\theta_M=-\theta_-^e$, i.e., the input distribution in examples is the opposite/same to that of pre-training distribution. The following result is obtained based on Theorem \ref{thm:icl_acc} in these scenarios:
\begin{proposition}[Contradicting knowledge]\label{prop:contradict}
    Assume the conditions of Theorem \ref{thm:icl_acc} hold, and also assume $\sigma_{e_+}^2,\sigma_{e_-}^2\rightarrow 0$, and $\pi=0.5$ at the inference stage. Then when $k\sigma_M^2\ll\sigma^2$, i.e., insufficient example size,
    \begin{eqnarray*}
        &&P(correct|y=+1,\theta_+^e=\theta_M=-\theta_-^e,\Theta)-P(correct|y=+1,\theta_+^e=-\theta_M=-\theta_-^e,\Theta)\\
        &\rightarrow&\Phi\left(\frac{\|\theta_M\|}{\sqrt{\sigma^2}}\right)-\Phi\left(-\frac{\|\theta_M\|}{\sqrt{\sigma^2}}\right)>0.
    \end{eqnarray*}
    When $k\sigma_M^2\gg\sigma^2$, i.e., sufficient example size, both $P(correct|y=+1,\theta_+^e=-\theta_M=-\theta_-^e)$ and $P(correct|y=+1,\theta_+^e=\theta_M=-\theta_-^e,\Theta)$ converges to $1-\Phi\left(-{\|\theta_M\|}/{\sqrt{\sigma^2}}\right)$. The accuracy of $y=-1$ exhibits a similar behavior.
\end{proposition}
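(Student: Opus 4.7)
The plan is to substitute the two limiting scenarios directly into the dominant-term formula of Theorem \ref{thm:icl_acc} after simplifying the posterior means from Lemma \ref{lem:posterior_param}. With $\pi=1/2$, the quantity $z_k=\log((\#(y_i=-1)+1)/(\#(y_i=+1)+1))$ vanishes by concentration of the label counts, and the resulting near-equality $\sigma^2_{\theta_+}\approx\sigma^2_{\theta_-}$ forces the log-ratio piece of $m_k$ to vanish as well, so $m_k\to 0$. Combined with $\sigma^2_{e+},\sigma^2_{e-}\to 0$, the dominant term for $P(\mathrm{correct}\mid y=+1)$ reduces to $1-\Phi\!\left(-(\theta_+^e-(\hat\theta_++\hat\theta_-)/2)^\top(\hat\theta_--\hat\theta_+)/(\|\hat\theta_--\hat\theta_+\|\sqrt{\sigma^2})\right)$, with an analogous reduction for $y=-1$. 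What remains is to evaluate $\hat\theta_\pm$ in each of the two regimes.

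First I would specialize Lemma \ref{lem:posterior_param} using $\sigma_+^2=\sigma_-^2=\sigma^2$ to write $\hat\theta_+=(\sigma^2\theta_M+\#(y_i=+1)\sigma_M^2\,\bar x_+)/(\sigma^2+\#(y_i=+1)\sigma_M^2)$ and the analogous expression for $\hat\theta_-$, where $\bar x_\pm$ denote the class-wise example averages. The concentration estimates underlying Theorem \ref{thm:icl_acc} give $\#(y_i=\pm 1)=k/2+O_p(\sqrt{k}\log k)$, and $\sigma^2_{e\pm}\to 0$ forces $\bar x_\pm\to\theta_\pm^e$. Hence in the insufficient-example regime $k\sigma_M^2\ll\sigma^2$ the convex combination collapses to the prior, giving $\hat\theta_+\to\theta_M$ and $\hat\theta_-\to-\theta_M$ independently of $\theta_\pm^e$; in the sufficient-example regime $k\sigma_M^2\gg\sigma^2$ it collapses to the data, giving $\hat\theta_\pm\to\theta_\pm^e$.

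Substituting these limits into the reduced dominant-term expression delivers the conclusion. In the insufficient case, $\hat\theta_--\hat\theta_+\to -2\theta_M$ and $(\hat\theta_++\hat\theta_-)/2\to 0$, so $P(\mathrm{correct}\mid y=+1)\to 1-\Phi(-(\theta_+^e)^\top\theta_M/(\|\theta_M\|\sqrt{\sigma^2}))$. Evaluating this at $\theta_+^e=\theta_M$ and $\theta_+^e=-\theta_M$ and subtracting gives exactly $\Phi(\|\theta_M\|/\sqrt{\sigma^2})-\Phi(-\|\theta_M\|/\sqrt{\sigma^2})$, which is strictly positive since $\Phi$ is strictly increasing and $\theta_M\neq 0$. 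In the sufficient case, the antipodal assumption $\theta_+^e=-\theta_-^e$ yields $(\hat\theta_++\hat\theta_-)/2\to 0$ and $\hat\theta_--\hat\theta_+\to -2\theta_+^e$ with $\|\theta_+^e\|=\|\theta_M\|$, so the two inner products agree and both accuracies converge to $1-\Phi(-\|\theta_M\|/\sqrt{\sigma^2})$. The case $y=-1$ is symmetric by interchanging the roles of $\theta_+^e$ and $\theta_-^e$.

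The main obstacle is ensuring the remainder terms from Theorem \ref{thm:icl_acc} stay negligible under the joint limits $\sigma^2_{e\pm}\to 0$ and $k\sigma_M^2/\sigma^2\to 0$ or $\infty$. Because $\pi=1/2$ is bounded away from $0$ and $1$, the tail bounds $\Phi(-c_k\log k/\sqrt{\pi(1-\pi)})$ and the chi-square tail terms vanish with $k$, and the concentration of $\bar x_\pm$ only tightens as $\sigma^2_{e\pm}\to 0$; these controls follow routinely from what is already invoked in Theorem \ref{thm:icl_acc}. The slightly delicate point is handling the two-parameter asymptotics cleanly: I would first let $k\to\infty$ at fixed $\sigma_M^2/\sigma^2$ to kill stochastic fluctuations, and then let the deterministic ratio $k\sigma_M^2/\sigma^2$ tend to $0$ or $\infty$ to select the regime, which is the natural reading consistent with the asymptotic phrasing of Theorem \ref{thm:icl_acc}.
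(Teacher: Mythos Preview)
Your proposal is correct and follows essentially the same route as the paper: both arguments evaluate the posterior means $\hat\theta_\pm$ from Lemma~\ref{lem:posterior_param} in the two regimes $k\sigma_M^2\ll\sigma^2$ and $k\sigma_M^2\gg\sigma^2$ and substitute into the dominant-term formula of Theorem~\ref{thm:icl_acc} (the paper cites Proposition~\ref{prop:asymp} for the large-$k$ regime, but that is just Theorem~\ref{thm:icl_acc} repackaged). One small caveat: your two-step limit description (``fix $\sigma_M^2/\sigma^2$, send $k\to\infty$, then send $k\sigma_M^2/\sigma^2\to 0$'') is self-contradictory as written; the cleaner phrasing is to take $k\to\infty$ along a sequence with $k\sigma_M^2/\sigma^2\to 0$ or $\infty$, which is also how the paper implicitly reads the regimes.
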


The proof of Proposition \ref{prop:contradict} can be found in Section \ref{sec:appendix:proof:contradict}. We mainly follow the result in Theorem \ref{thm:icl_acc} and calculate the probabilities under the specific scenario. 

There are two observations in Proposition \ref{prop:contradict}. First, when there are no enough examples and the pre-training knowledge contradicts to the knowledge in the examples and the test data, there is an obvious drop in ICL performance compared to the case when the knowledge matches. Second, when there are enough examples, the knowledge from the examples will dominate, and ICL performance of contradicting knowledge converges to that of matching knowledge. Practically, in real-world LLM applications such as retrieval-augmented generation (RAG)\citep{gao2023retrieval} and LLM-based agents \citep{xi2023rise}, pre-trained LLMs might not be perfectly aligned with the external knowledge and may contain diverse information. Proposition \ref{prop:contradict} implies that with an adequate number of examples, the potential reliance from pre-training knowledge can be minimized, leading to improved performance.



\textbf{Imbalanced examples.}~~In the following, we consider the case where the two classes are imbalanced at the inference stage, i.e. $\pi\rightarrow 0$ or $\pi\rightarrow 1$. In this case, the value of $\pi$ will impact the ICL prediction.

\begin{proposition}[Imbalanced examples]\label{prop:imbalance} 
    Under Assumption \ref{assumption:data} and Assumption \ref{assumption:param}, assume $\sigma^2$ and $\sigma_M^2$ are constants,  $\pi k\rightarrow \infty$ and $(1-\pi)k\rightarrow \infty$, then when $\pi\rightarrow 0$, $P(correct|y=+1, \Theta, \{(x_i,y_i)\},M)\rightarrow 0$. When $\pi\rightarrow 1$, $P(correct|y=-1, \Theta, \{(x_i,y_i)\},M)\rightarrow 0$.
\end{proposition}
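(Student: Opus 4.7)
The plan is to deduce Proposition \ref{prop:imbalance} from the dominant-term expression for $P(\text{correct}|y=+1,\cdot)$ in Theorem \ref{thm:icl_acc}, tracking how each ingredient scales as $\pi\to 0$ (the $\pi\to 1$ case is symmetric). The guiding intuition is that when positive examples are overwhelmingly rare, the log-odds factor $z_k=\log((\#(y_i=-1)+1)/(\#(y_i=+1)+1))$ explodes, pushing the ICL decision boundary far into the positive cluster, so nearly every true positive is misclassified as negative.

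First I would invoke Theorem \ref{thm:icl_acc} and argue that, under $\pi k\to\infty$ and $(1-\pi)k\to\infty$ (with a strengthening to $\pi\gg\log k/\sqrt{k}$ if needed, as Theorem \ref{thm:icl_acc} assumes), the remainder terms in the accuracy expansion are $o(1)$, so it suffices to study the dominant term
\[
1-\Phi\!\left(\frac{m_k}{\sqrt{\sigma^2+\sigma^2_{e+}}\,\|\hat{\theta}_--\hat{\theta}_+\|}-\frac{(\theta_+^e-(\hat\theta_++\hat\theta_-)/2)^{\!\top}(\hat\theta_--\hat\theta_+)}{\sqrt{\sigma^2+\sigma^2_{e+}}\,\|\hat\theta_--\hat\theta_+\|}\right).
\]
Next I would analyze the scaling of each piece as $\pi\to 0$. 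By the law of large numbers under $\pi k\to\infty$ and $(1-\pi)k\to\infty$, we have $\#(y_i=+1)\asymp \pi k\to\infty$ and $\#(y_i=-1)\asymp (1-\pi)k\to\infty$, which has three consequences: (i) $z_k=\log((\#(y_i=-1)+1)/(\#(y_i=+1)+1))\to\log((1-\pi)/\pi)\to+\infty$, so $m_k\to+\infty$ at rate $\sigma^2\log(1/\pi)$; (ii) by Lemma \ref{lem:posterior_param}, $\sigma^2_{\theta_\pm}\to 0$, so the correction $(m/2)\log((\sigma^2+\sigma^2_{\theta_-})/(\sigma^2+\sigma^2_{\theta_+}))$ in $m_k$ stays $O(1)$ and is absorbed by $z_k$; (iii) $\hat\theta_+\to\tilde\theta_+$ and $\hat\theta_-\to\tilde\theta_-$, so $\|\hat\theta_--\hat\theta_+\|$ converges to a positive constant determined by $\Theta$, and the projection term $(\theta_+^e-(\hat\theta_++\hat\theta_-)/2)^\top(\hat\theta_--\hat\theta_+)/(\|\hat\theta_--\hat\theta_+\|\sqrt{\sigma^2+\sigma^2_{e+}})$ is $O(1)$.

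Combining these, the first summand inside $\Phi(\cdot)$ grows like $\log(1/\pi)$ while the second stays bounded, hence the argument of $\Phi$ diverges to $+\infty$ as $\pi\to 0$. Therefore $1-\Phi(\cdot)\to 0$, and collecting the $o(1)$ remainder gives $P(\text{correct}|y=+1,\Theta,\{(x_i,y_i)\},M)\to 0$. The $\pi\to 1$ direction is obtained by swapping the roles of $+$ and $-$: $z_k\to-\infty$, so the argument inside the $\Phi$ for the negative-class accuracy diverges to $-\infty$, but note that accuracy uses $\Phi(\cdot)$ directly (not $1-\Phi(\cdot)$), so with $z_k\to-\infty$ yielding $m_k\to-\infty$, the $\Phi$-argument goes to $-\infty$ and $\Phi(\cdot)\to 0$.

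The main obstacle I anticipate is handling the dependence between the fluctuations of $\#(y_i=\pm 1)$ and those of $\hat\theta_\pm$ in the regime $\pi\to 0$, since Theorem \ref{thm:icl_acc}'s concentration bounds were proved under $\pi\gg \log k/\sqrt{k}$ and its remainder rates depend on $\pi$. To avoid grinding through sharper tail estimates, I would either impose the (mild) extra hypothesis $\min(\pi,1-\pi)k/\log^2 k\to\infty$ so that the remainder probabilities $P(\mathcal{X}^2_m>\pi c\log k(\pi k-c_k\sqrt{k}\log k))$ and $\exp(-c_x\log^2 k(\pi k-c_k\sqrt{k}\log k))$ still vanish, or argue directly from the closed form in Lemma \ref{lem:decision} that when $\#(y_i=+1)\to\infty$ the prefactor $(\#(y_i=+1)+1)/(\#(y_i=-1)+1)\to 0$ while the likelihood ratio $N(x)$ for $x$ drawn from the positive Gaussian remains bounded in probability, so $P(\hat y_{ICL}=+1\mid y=+1)\to 0$ by the standard squeeze argument. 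This second route sidesteps the asymptotic $\Phi$ manipulation entirely and is the cleanest fallback.
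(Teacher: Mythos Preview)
Your proposal is correct and follows essentially the same approach as the paper: the paper's proof simply says ``directly obtain the result by discussing the value of $\pi$ in $P^*_+$ and $P^*_-$,'' which is precisely your observation that the $\sigma^2\log((1-\pi)/\pi)$ term in the $\Phi$-argument diverges while the remaining terms stay bounded. Your write-up is actually more careful than the paper's, which does not address the remainder-term technicalities you flag; your fallback via Lemma~\ref{lem:decision} is also a valid route and matches the intuition sketched in the paper's main text.
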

The proof of Proposition \ref{prop:imbalance} is in Section \ref{sec:appendix:proof:imbalance}. In short, we follow Lemma \ref{lem:decision} to obtain the decision boundary. 
Then, we repeat the steps of Theorem \ref{thm:icl_acc} to obtain the conclusion.

Based on Proposition \ref{prop:imbalance}, when the examples for one class are much fewer than the other class, ICL performance for the minor class will significantly drop. This suggests that in real practice, one may consider balancing the positive and negative examples when selecting the in-context examples. In practical applications where one class may naturally occur less frequently than the other, such as fraud detection \citep{abdallah2016fraud}, practitioners can mitigate the risk of underperformance for the minority class via strategically balancing the example set, achieving more reliable outcomes.

\textbf{Label noise.}~~It is common that there exist label noises in the examples for ICL. For example, an example $x_i$ sampled from $N(\theta_-^e,\sigma^2_{e-})$ may be labeled as $+1$. Therefore, we change the value of $p_+^e$ and $p_-^e$ in the examples to see how these changes affect the ICL performance, the result of which is summarized as follows:
\begin{proposition}[Label noise]\label{prop:noise}
    Under the conditions of Theorem \ref{thm:icl_acc}, assume $\sigma^2$ and $\sigma_M^2$ are constants, $\theta_M=\theta_+^e$ and $\theta_M=-\theta_-^e$, and $\sigma_{e+}^2,\sigma_{e-}^2\rightarrow 0$. Also assume $\pi=0.5$ and $k\rightarrow\infty$ at the inference stage. Then
    \begin{itemize}
        \item When $1-p_+^e-p_-^e<0$, $P(correct|y=+1, p_+^e,p_-^e)$ increases in $p_+^e$, and $P(correct|y=-1, p_+^e,p_-^e)$ increases in $p_-^e$. 
        \item When $1-p_+^e-p_-^e=0$, the decision boundary set $\{f_{ICL}(x)>0\}$ will degenerate to either $\emptyset$ or full space. The (positive accuracy, negative accuracy) is either (0,1), (1,0), or (0.5,0.5).
        \item When $1-p_+^e-p_-^e>0$, how $P(correct|y=+1, p_+^e,p_-^e)$ and $P(correct|y=-1, p_+^e,p_-^e)$ change depends on the specific $(p_+^e,p_-^e)$.
    \end{itemize}
\end{proposition}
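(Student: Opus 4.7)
The plan is to specialize Proposition~\ref{prop:asymp} to the setting of Proposition~\ref{prop:noise} ($\theta_+^e=\theta_M=-\theta_-^e$, $\pi=1/2$, $\sigma_{e\pm}^2\to 0$, $k\to\infty$), so that the remainders $R_1$--$R_4$ vanish and $P(\text{correct}\mid y=\pm 1)$ reduces to $P_\pm^*$. Under label noise the examples' effective class means collapse to
\[
    \tilde\theta_+=p_+^e\theta_+^e+(1-p_+^e)\theta_-^e=(2p_+^e-1)\theta_M,\qquad
    \tilde\theta_-=(1-2p_-^e)\theta_M,
\]
giving $\tilde\theta_+-\tilde\theta_-=-2s\,\theta_M$ and $(\tilde\theta_++\tilde\theta_-)/2=(p_+^e-p_-^e)\theta_M$, where $s:=1-p_+^e-p_-^e$. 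Geometrically, the ICL separating hyperplane always has normal direction $\pm\theta_M$, and its orientation (whether the classifier matches or reverses the true labeling) is determined entirely by $\sign(s)$.

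For $s\ne 0$, substituting these quantities into $P_+^*$ and $P_-^*$ and using $\log((1-\pi)/\pi)=0$ at $\pi=1/2$, the factor $|s|$ in the denominator $\|\tilde\theta_+-\tilde\theta_-\|=2|s|\|\theta_M\|$ cancels an $s$ from the inner product in the numerator, leaving the argument of $\Phi$ equal to $\sign(s)(1-p_+^e+p_-^e)\|\theta_M\|/\sigma$ for $P_+^*$ and to the analogous $\sign(s)(1+p_+^e-p_-^e)\|\theta_M\|/\sigma$ for $P_-^*$. Differentiating each in $p_+^e$ and $p_-^e$ gives a constant multiple of $\phi(\cdot)\,\|\theta_M\|/\sigma$ whose sign is controlled by $\sign(s)$ and by which variable is being moved; this immediately yields the claimed monotonicity under $s<0$, while under $s>0$ the $\sign(s)$ flip makes the derivative's sign sensitive to the interaction between $p_+^e$ and $p_-^e$, matching the ``depends on $(p_+^e,p_-^e)$'' statement (one closes this case by exhibiting two explicit points in $\{s>0\}$ at which the direction of monotonicity differs).

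The degenerate case $s=0$ cannot be handled by Proposition~\ref{prop:asymp} because its formula is singular as $\|\tilde\theta_+-\tilde\theta_-\|\to 0$, so I would return to Lemma~\ref{lem:decision} directly. With $\tilde\theta_+=\tilde\theta_-$ and $\sigma_+^2=\sigma_-^2=\sigma^2$, $N(x)$ loses its $x$-dependent exponential and reduces to the constant $((\sigma^2+\sigma_{\theta_-}^2)/(\sigma^2+\sigma_{\theta_+}^2))^{m/2}=1+O(1/k)$, so $f_{ICL}(x)=N(x)-z_k$ is constant in $x$ and the set $\{f_{ICL}>0\}$ is either $\mathbb{R}^m$ or empty, producing $(P_+^*,P_-^*)=(1,0)$ or $(0,1)$; the $(1/2,1/2)$ outcome arises when the $O(1/k)$ prefactor correction exactly balances $z_k-1$ in the limit. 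This $s=0$ branch is the main obstacle: the $\sigma_{\theta_\pm}^2$ ratio in $N(x)$ and the binomial fluctuation of $\#(y_i=+1)/k$ about $1/2$ contribute at the same $O(1/\sqrt{k})$ or $O(1/k)$ scale, so justifying the exact trichotomy requires a concentration argument that cleanly separates the three regimes. For $s\ne 0$ the rest is routine calculus on $\Phi$, with the only subtlety being to track the $\sign(s)$ factor accurately.
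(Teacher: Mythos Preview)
Your approach is essentially the paper's: compute the noisy limits $\tilde\theta_\pm$ (equivalently the limits of $\hat\theta_\pm$), substitute into the accuracy formula from Theorem~\ref{thm:icl_acc}/Proposition~\ref{prop:asymp}, and read off how the $\Phi$-argument depends on $p_+^e,p_-^e$ through the factor $\sign(s)$. The paper's proof is in fact terser than yours---it stops after displaying the simplified $P(\text{correct}\mid y=\pm1)$ formulas with the $\sign(s)$ factor and does not spell out the differentiation or treat the $s=0$ branch separately, so your plan to return to Lemma~\ref{lem:decision} for that degenerate case is a reasonable addition rather than an obstacle you are missing.
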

The proof of Proposition \ref{prop:noise} can be found in Section \ref{sec:appendix:proof:noise} and is a direct extension from Theorem \ref{thm:icl_acc}.

In Proposition \ref{prop:noise}, recall that $p_+^e=p_-^e=1$ implies no random flip on the example labels. Intuitively, when keeping $p_-^e=1$ and decreasing $p_+^e$, the positive class becomes a mixture of two Gaussian distributions. In this case, $\hat\theta_+$ is closer to zero, and the decision boundary will shift towards $-\theta_M$. Therefore, it is more likely that ICL predicts a negative label for $x$, which aligns with the change in $P(correct|y=+1, p_+^e,p_-^e)$ and $P(correct|y=-1, p_+^e,p_-^e)$ in Proposition \ref{prop:noise}. On the other hand, when $1-p_+^e-p_-^e\geq 0$, the large label noise will jeopardize the accuracy of $\hat\theta_+$ and $\hat\theta_-$, and it is hard to tell the exact prediction accuracy. To summarize, Proposition \ref{prop:noise} provides a quantification of the effect of label noise and characterizes the tendency of ICL predictions relative to the noise level, measured by $p^e_+$ and $p^e_-$. This helps clarify how varying levels of noise in example labels can affect the accuracy and reliability of ICL, highlighting the importance of high-quality examples when applying ICL in practical scenarios.
\section{Experiments}\label{sec:experiment}

In this section, we empirically verify the analysis in Section \ref{sec:analysis independent}. In summary, both simulation and real-data experiments are consistent Section \ref{sec:analysis independent}\footnote{Code is available in \url{https://anonymous.4open.science/r/ICL-understanding-classification-DC1C}}.

\subsection{Simulation}\label{exp:simulation}

To set up the experiment, we pre-train a decoder-only Transformer \citep{vaswani2017attention} from the GPT-2 \citep{radford2019language} family. We follow Section \ref{sec:assumption} to construct the pre-training data and follow \citep{garg2022can} to perform next-token prediction to estimate all $y_i$s and $y_{query}$. During the pre-training, we sample a new pair of $(\theta_+,\theta_-)$ for each iteration and generate corresponding demonstration examples. A detailed setting for simulation can be found in Appendix \ref{sec:appendix:simulation setup}. 
 
 We implement the scenarios as in Section \ref{sec:demonstration} as follows:

\begin{figure}[t]
\centering
\hfill
\begin{minipage}{0.35\textwidth}
    \includegraphics[width=\linewidth]{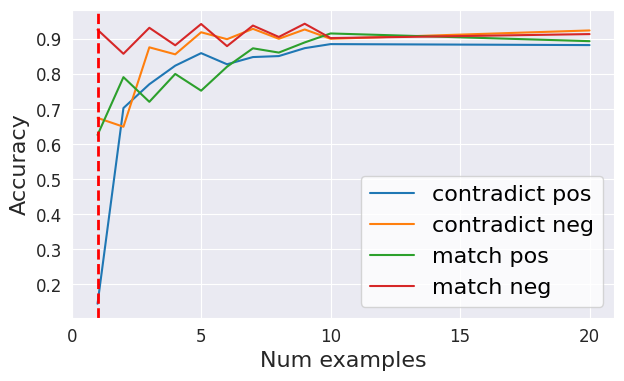} 
        \caption{Contradict knowledge, $\sigma^2=1$}
        \label{fig:contradict1}
\end{minipage}\hfill
\begin{minipage}{0.35\textwidth}
    \includegraphics[width=\linewidth]{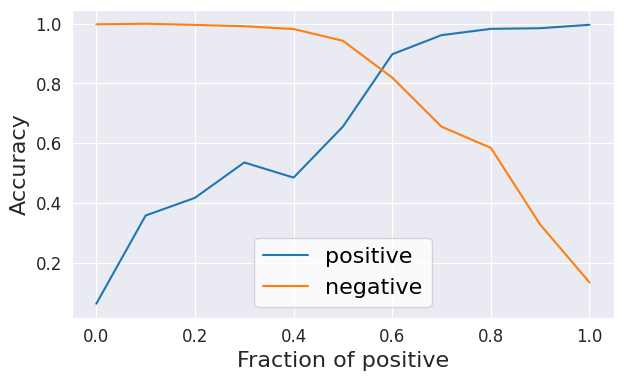}
    \caption{Imbalanced examples}
    \label{fig:imbalance}
\end{minipage}\hfill

\end{figure}

\textbf{Contradicting knowledge.}~~We pre-train the model with $\theta_M=0.5 \textbf{1}_5, \sigma^2_M=1, \sigma^2=1$. During the inference stage, we generate examples and test data with $\theta^e_+=-0.5\textbf{1}_5, \theta^e_-=0.5\textbf{1}_5$ which contradicts the pre-training distribution. We let $\sigma^2_{e+}=\sigma^2_{e-}=1$, and test on various $\sigma^2\in\{1, 2, 4\}$.
The results for $\sigma^2= 2, 4$ are postponed to Section \ref{sec:appendix:experiments:independent_simulation}. We compute the ICL accuracy for each class when $k$ increases. For comparison, we also generate examples with matched knowledge ($\theta^e_+=0.5\textbf{1}_5=-\theta^e_-$) and examine the ICL accuracy. 

The results can be found in {\color{blue}Figure \ref{fig:contradict1}}, where the X-axis represents the number of examples $k$, and the Y-axis is the ICL accuracy. The red dash denotes $\sigma^2/\sigma^2_M$ based on Proposition \ref{prop:contradict} and $\sigma^2/\sigma^2_M=1$ in our simulation. There are two observations. First, when $k\leq\sigma^2/\sigma^2_M$, the ICL performance of contradicting knowledge is worse than that of matching knowledge, verifying that the transformer heavily relies on the pre-training knowledge when there are limited examples. Second, when $k$ increases, the ICL performance for contradicting knowledge increases to around 87\% when $k=20$, indicating that the knowledge from the examples will dominate when $k$ is large.

\textbf{Imbalanced examples.}~~We pre-train the model with $\theta_M=0.5\textbf{1}_5, \sigma^2_M=1$, $\sigma^2=1$. During the inference stage, we generate examples and the test data with $\theta^e_+=0.5\textbf{1}_5, \theta^e_-=-0.5\textbf{1}_5$ and $\sigma^2_{e+}=\sigma^2_{e-}=1, \sigma^2=1$. We test with various fraction $\pi\in \{0,0.1,0.2,...,0.9,1.0\}$. In Figure \ref{fig:imbalance}, the X-axis represents the fraction of positive examples among all examples in the demonstration, i.e., $\pi$. We can observe that when the fraction of positive is increasing, ICL accuracy for positive inputs is increasing and finally reaches 100\%, while ICL accuracy for negative inputs is decreasing, which is the same as described in Proposition \ref{prop:imbalance}.

\textbf{Label noise.}~~We pre-train the model with $\theta_M=0.5\textbf{1}_5, \sigma^2_M=1$, $\sigma^2=1$. During the inference stage, we generate examples and the test data with $\theta^e_+=0.5\textbf{1}_5, \theta^e_-=-0.5\textbf{1}_5$, and fix $\sigma^2_{e+}=\sigma^2_{e-}=1, \sigma^2=0.01$. The example size $k$ is 100, and $\pi$ is 0.5. For the examples in each class, we randomly flip their label with probability $1-p_+^e, 1-p_-^e$ respectively, and test ICL accuracy for each class for $1-p_+^e,1-p_-^e \in \{1,0.9,0.8,...,0.1,0\}$. The ICL accuracies for each class and the overall result are summarized in \blue{Figure} \ref{fig:label_noise_sim}.

The phenomenon in these heatmaps is consistent with our conclusion in Proposition \ref{prop:noise}. Take the ICL accuracy of the positive class as an example (the left panel of Figure \ref{fig:label_noise_sim}); we observe that when the flipping probability in the negative class is fixed, smaller flipping probability (higher $p_+^e$) in the positive class usually leads to higher accuracy in the positive class. Moreover, the diagonal from the bottom left to the upper right represents cases when $p_+^e+p_-^e=1$ and it is obvious that the positive accuracy is either approximately 0,  0.5, or 1. Similar observations can be found for the negative class as well (the middle panel of Figure\ref{fig:label_noise_sim}). In terms of the overall accuracy, only when both $p_+^e$ and $p_-^e$ are close to 1, the overall accuracy is greater than 80\%.

\begin{figure*}[t]
\centering
\includegraphics[width=0.8\textwidth]{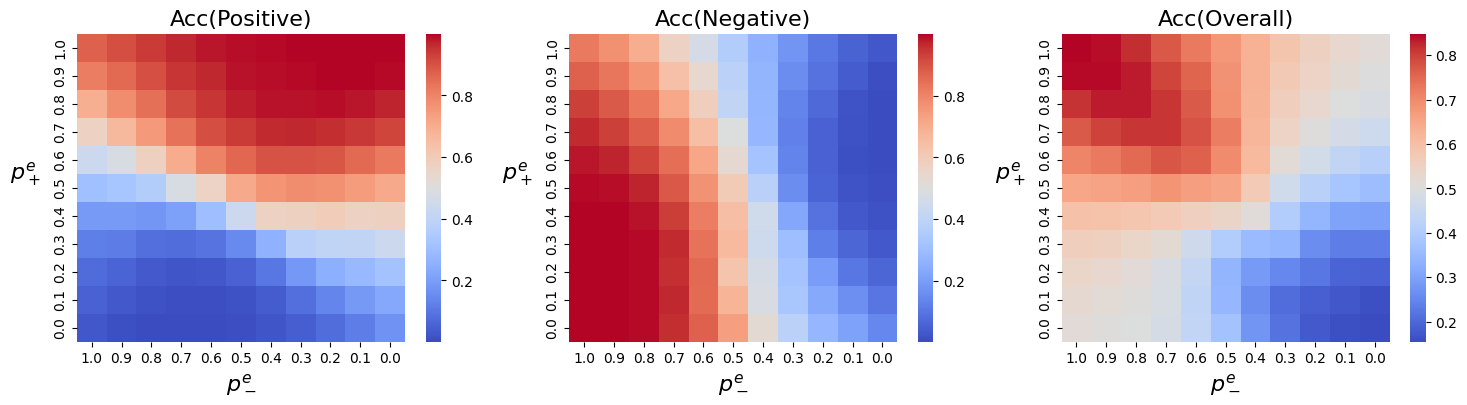}

\caption{Simulation results on positive and negative accuracy facing label noises.}
\label{fig:label_noise_sim}

\end{figure*}

\subsection{Real-Data Experiment}\label{exp:real data}

In this subsection, we conduct experiments on real datasets to show that the theoretical insights in Section \ref{sec:analysis independent} also align with the practical scenarios.

We consider two popular pre-trained LLMs, Pythia-6.9B \citep{biderman2023pythia} and Llama2-7B \citep{touvron2023llama}. We test on a sentiment analysis dataset, SST2 dataset \citep{wang2018glue}, which is also a binary classification task (labeled as ``positive'' and ``negative''). During the inference, we randomly select $k$ samples from the training set as examples and compute the ICL accuracy for each class. We repeat the process 10 times and record the average accuracy. If not specified, $k=50$.

\textbf{Imbalanced examples.}~~We conduct experiments when the fraction $\pi$ of examples from the positive class is not 0.5. Specifically, we test with $\pi\in\{0,0.1,0.2,0.3,0.4,0.5,0.6,0.7,0.8,0.9,1.0\}$. As depicted in Figure \ref{fig:imablance_pythia} and \ref{fig:imablance_llama}, when the number of positive examples increases, the accuracy of the positive class increases as that of the negative class decreases, which also supports our analysis in Proposition \ref{prop:imbalance}.
\begin{figure}[h]
    \centering
    \hfill
    \begin{minipage}{0.35\textwidth}
    \includegraphics[width=\textwidth]{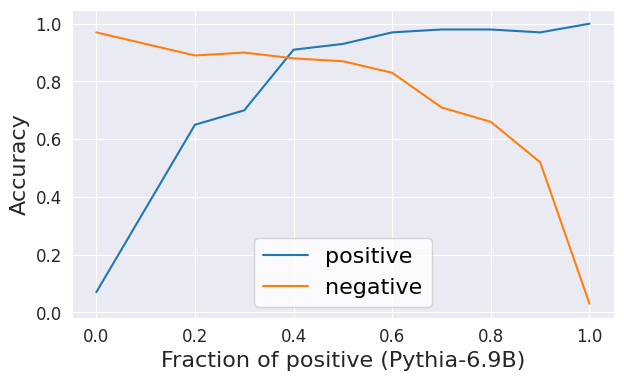}

    \caption{Imbalanced case, Pythia-6.9B.}
    \label{fig:imablance_pythia}
    \end{minipage}
    \hfill
    \begin{minipage}{0.35\textwidth}
    \includegraphics[width=\textwidth]{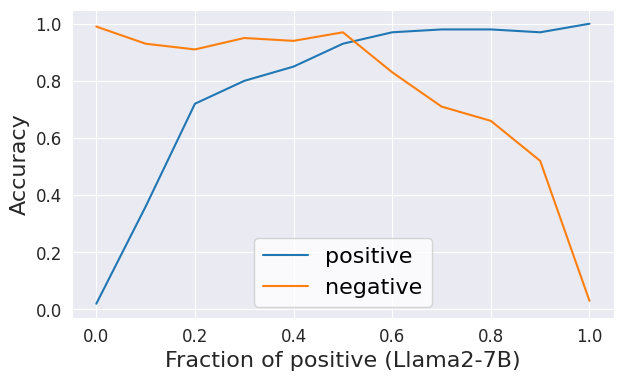}

    \caption{Imbalanced case, Llama2-7B.}
    \label{fig:imablance_llama}
    \end{minipage}\hfill

\end{figure}

\textbf{Label noise.}~~Similar to the simulation, we randomly flip the label of examples from positive and negative classes and the correct probability is $p_+^e,p_-^e$ respectively. We let $p^e_+,p^e_-\in \{0.0,0.1,...,0.9,1.0\}$ and record ICL accuracy in each class. Results are shown in Figure \ref{fig:pythia pos}, \ref{fig:pythia neg}, \ref{fig:llama pos}, \ref{fig:llama neg}. It can be consistently observed that when $p_-^e$ is fixed, larger $p_+^e$ leads to higher accuracy in the positive class (Figure \ref{fig:pythia pos} and \ref{fig:llama pos}); when $p_+^e$ is fixed, larger $p_-^e$ leads to higher accuracy in the negative class (Figure \ref{fig:pythia neg} and \ref{fig:llama neg}). This observation is consistent with our analysis in Proposition \ref{prop:noise}. 
\begin{figure}[!h]
    \centering
    \begin{minipage}{0.24\textwidth}
        \includegraphics[width=\linewidth]{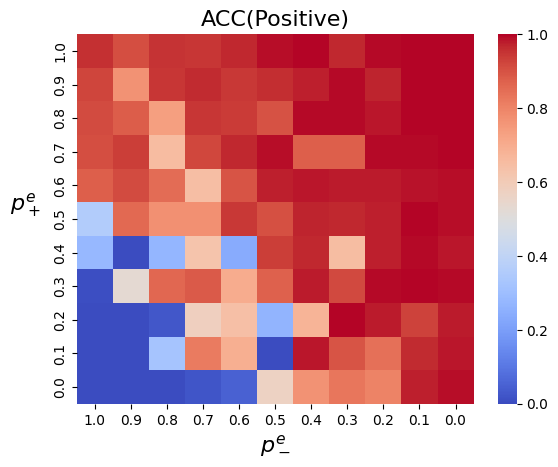}
        \caption{Pythia-6.9B: ICL acc, positive class.}
        \label{fig:pythia pos}
    \end{minipage}\hfill
    \begin{minipage}{0.24\textwidth}
        \includegraphics[width=\linewidth]{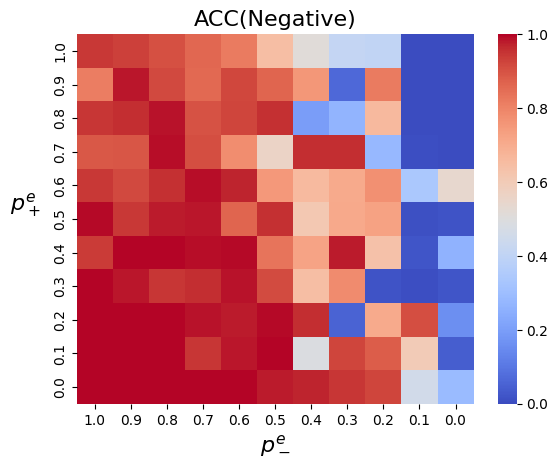}
        \caption{Pythia-6.9B: ICL acc, negative class.}
        \label{fig:pythia neg}
    \end{minipage}
\hfill
    \begin{minipage}{0.24\textwidth}
        \includegraphics[width=\linewidth]{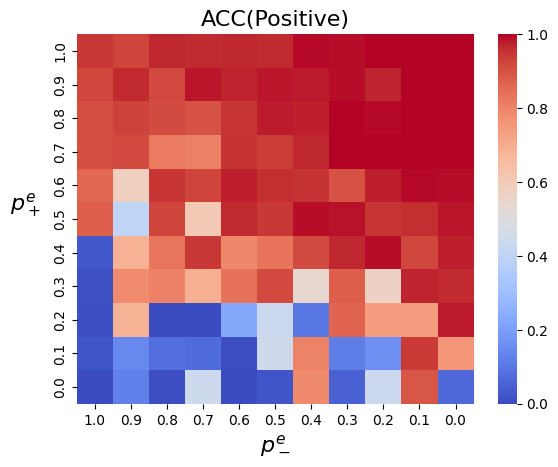}
        \caption{Llama2-7B: ICL acc, positive class.}
        \label{fig:llama pos}
    \end{minipage}
    \hfill
    \begin{minipage}{0.24\textwidth}
        \includegraphics[width=\linewidth]{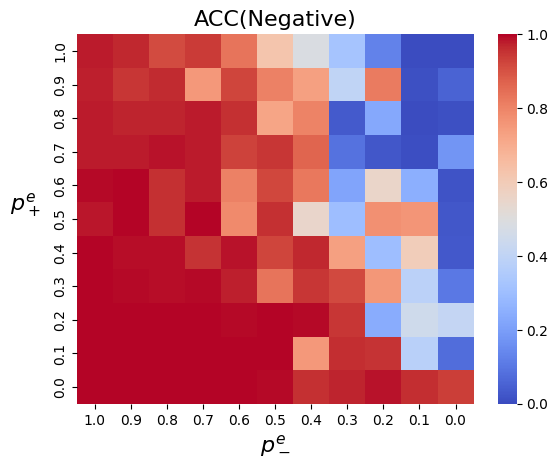}
        \caption{Llama2-7B: ICL acc, negative class.}
        \label{fig:llama neg}
    \end{minipage}
\end{figure}

\section{Mean Reversion in ICL with Dependent Examples} \label{sec:dependent}

The previous analysis and experiments provide a comprehensive understanding of the effect of pre-training and examples under the independent-example scenario. However, it is also common when examples are sampled dependently, especially when examples are strategically selected to serve a specific objective, such as ensuring a balanced representation of 50\% positive and 50\% negative examples to prevent dataset imbalance \citep{wang2024large, zhang2022active}. Surprisingly, we discover a counter-intuitive phenomenon, named as ``\textbf{mean reversion}'', under this scenario. We first empirically illustrate this phenomenon and then provide a theoretical analysis.

\textbf{Empirical illustration of Mean Reversion.}~~We follow a similar procedure as introduced in Section \ref{sec:experiment}, while the difference is that during the pre-training stage, we fix the fraction of positive labels (examples + test data) to be exactly 0.5. This differs from the independent case since the fraction may fluctuate around 0.5 instead of strictly equal to 0.5 when taking $\pi=0.5$ but the examples are i.i.d. generated. During the inference, in-context examples are sampled from both classes but are all labeled as positive or negative. We test with various fractions of positive examples in the prompt to see how the prediction for the test input $x$ is affected. Results are shown in Figure \ref{fig:dep all pos} \ref{fig:dep all neg}. 

In Figure \ref{fig:dep all pos}, all examples are labeled as positive and the X-axis reflects the fraction of examples truly from the positive class; while in \ref{fig:dep all neg}, all examples are labeled as negative and the X-axis reflects the fraction of examples truly from the negative class. There are four lines in the two figures. The ``Positive''/``Negative'' refers to the probability of the prediction being positive/negative when the correct label is positive/negative. The ``Total pos''/``Total neg'' represents the marginal probability of the prediction being positive/negative.

\begin{figure}[h]
    \centering\hfill
    \begin{minipage}{0.35\textwidth}
        \centering
        \includegraphics[width=\linewidth]{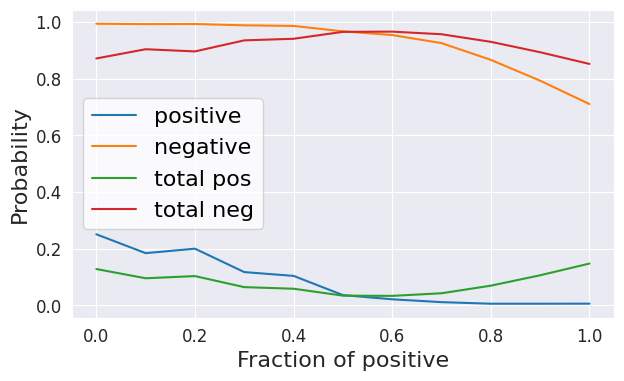}

        \caption{All $y_i$s are positive.}
        \label{fig:dep all pos}
    \end{minipage}\hfill
    \begin{minipage}{0.35\textwidth}
        \centering
        \includegraphics[width=\linewidth]{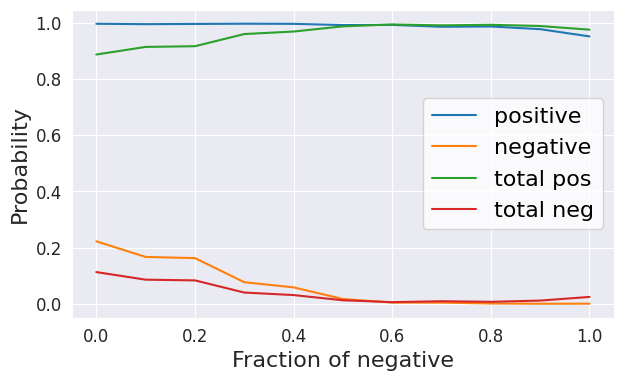} 

        \caption{All $y_i$s are negative.}
        \label{fig:dep all neg}
    \end{minipage}\hfill

\end{figure}

As in Figure \ref{fig:dep all pos}, we can see that regardless of the change in the fraction of examples' true classes (X-axis), 
we always obtain a low positive rate for the true positive test data, and the overall positive rate is low. 
This contradicts the independent case in Figure \ref{fig:imbalance}.
Intuitively, since LLM learns that the fraction of positive labels is exactly 0.5 in the pre-training, at the inference stage, the joint label distribution of examples and test input converges to 0.5-0.5 \footnote{This is similar to the ``mean reversion'' in certain stochastic differential equations (SDEs) where the variable tends to move toward a long-term average over time, thus we also name our observation as ``mean reversion''.}. 

In the following, we first provide the assumption for data distribution and then provide a rigorous theoretical analysis to explain this phenomenon.

\begin{assumption}[Dependent examples]\label{assum:dependent}
    Assume $\{y_i\}$ are not independent. Let $frac$ denote the fraction of $+1$ among the set of labels in the pre-training set. Assume $frac$ approximately\footnote{The fraction number given a finite number of examples follows a discrete distribution. Here we approximate it to the Beta distribution and focus on the intuition. Details can be found in the proof.} follows $\text{Beta}(\alpha,\beta)$ with $\alpha,\beta\rightarrow\infty$ and $\alpha/\beta\rightarrow \pi/(1-\pi)$ 
    for some constant $\pi\in (0,1)$, i.e.,  $frac$ is around $\pi$ with probability tending to 1. Given each $y_i$, the corresponding $x_i$ is sampled independently.
\end{assumption}

The key distinction between Assumption \ref{assum:dependent} and Assumption \ref{assumption:param} is that $frac$ is fixed in the pre-training prompts. For instance, if $frac=0.5$, there will consistently be an equal number of positive and negative examples. This can lead to a fundamentally different outcome, as the language model may learn the fixed fraction $frac$ instead of the true relationship between input $x$ and label $y$. The following theorem formally describes this phenomenon.

\begin{theorem}[Dependent examples]\label{them: dependent}
    Given Assumption \ref{assum:dependent} and further assume that all $y_i$s and $y$ have an equal chance of being positive in pre-training. Then when $P(x|y=+1,\{(x_i,y_i)\},M)$ and $P(x|y=-1,\{(x_i,y_i)\},M)$ are both bounded and bounded away from zero, the following holds:
    \begin{equation*}
    P(y=+1|x,\{(x_i,y_i)\},M)\rightarrow \begin{cases}
        1 &\text{if } frac<\frac{\lfloor \pi (k+1)\rfloor-1}{k+1}\\
        0 &\text{if } frac>\frac{\lceil \pi (k+1)\rceil+1}{k+1}
    \end{cases}.
\end{equation*}

\end{theorem}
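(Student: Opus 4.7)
The plan is to reduce $P(y=+1|x,\{(x_i,y_i)\},M)$ via Bayes' rule to a label prior ratio, rewrite that ratio in terms of the pre-training count distribution of positives via exchangeability, and then invoke concentration of the Beta prior on $frac$ to conclude. For the reduction, writing
\[
\frac{P(y=+1|x,\{(x_i,y_i)\},M)}{P(y=-1|x,\{(x_i,y_i)\},M)} = \frac{P(x|y=+1,\{(x_i,y_i)\},M)}{P(x|y=-1,\{(x_i,y_i)\},M)}\cdot\frac{P(y=+1|\{y_i\},M)}{P(y=-1|\{y_i\},M)},
\]
and invoking the hypothesis that the two likelihoods $P(x|y=\pm 1,\ldots)$ are bounded and bounded away from $0$, the theorem reduces to showing that the label prior ratio diverges to $\infty$ in the first case and vanishes in the second.

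The symmetry assumption on pre-training makes all $y_i$'s and $y$ exchangeable given the total count $n$ of positives in a length-$(k+1)$ prompt, so that $P(\{y_i\},y=+1\mid M)=P(n=\#(y_i=+1)+1)/\binom{k+1}{\#(y_i=+1)+1}$ and $P(\{y_i\},y=-1\mid M)=P(n=\#(y_i=+1))/\binom{k+1}{\#(y_i=+1)}$. Dividing,
\[
\frac{P(y=+1|\{y_i\},M)}{P(y=-1|\{y_i\},M)} = \frac{P(n=\#(y_i=+1)+1)}{P(n=\#(y_i=+1))} \cdot \frac{\#(y_i=+1)+1}{k+1-\#(y_i=+1)},
\]
in which the binomial correction is bounded and bounded away from $0$. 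Transferring the Beta prior on $frac$ to a prior on $n$ via $P(n=j)\approx f_{\alpha,\beta}(j/(k+1))/(k+1)$ with $f_{\alpha,\beta}(p)\propto p^{\alpha-1}(1-p)^{\beta-1}$, a direct computation gives
\[
\log\frac{P(n=j+1)}{P(n=j)} \approx (\alpha-1)\log\frac{j+1}{j}+(\beta-1)\log\frac{k-j}{k+1-j},
\]
whose sign is determined by whether $j/(k+1)$ lies below or above the Beta mode $(\alpha-1)/(\alpha+\beta-2)\to\pi$ and whose magnitude is of order $(\alpha+\beta)/(k+1)$. Since $k$ is fixed while $\alpha+\beta\to\infty$, this ratio diverges to $+\infty$ whenever $j+1\le\lfloor\pi(k+1)\rfloor$ and to $0$ whenever $j\ge\lceil\pi(k+1)\rceil$.

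Substituting $j=\#(y_i=+1)$ finishes both cases: $frac<(\lfloor\pi(k+1)\rfloor-1)/(k+1)$ gives $\#(y_i=+1)+1\le\lfloor\pi(k+1)\rfloor-1<\lfloor\pi(k+1)\rfloor$, so the ratio diverges and $P(y=+1|x,\{(x_i,y_i)\},M)\to 1$; the second case $frac>(\lceil\pi(k+1)\rceil+1)/(k+1)$ is symmetric and yields $P(y=+1|x,\{(x_i,y_i)\},M)\to 0$. The main obstacle is making the Beta-to-discrete transfer rigorous, since the assumption states only that $frac$ is \emph{approximately} Beta (see the footnote), and the ``$\pm 1$'' buffers in the theorem bounds must absorb both the rounding gap between $\pi(k+1)$ and its floor/ceiling and the error in the continuous approximation of $P(n=j)$; the key quantitative step is a uniform estimate on $\log P(n=j+1)/P(n=j)$ of diverging magnitude as $\alpha+\beta\to\infty$ at the fixed grid scale $1/(k+1)$.
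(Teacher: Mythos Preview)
Your proposal is correct and follows essentially the same route as the paper: Bayes' decomposition into likelihood ratio times label-prior ratio, the exchangeability argument yielding the binomial-coefficient correction $\frac{\#(y_i=+1)+1}{k+1-\#(y_i=+1)}$, and then the divergence of the Beta density ratio at adjacent grid points as $\alpha+\beta\to\infty$ with $k$ fixed. The only cosmetic difference is that the paper handles your flagged ``Beta-to-discrete transfer'' obstacle by explicitly defining $P(frac=i/(k+1))$ as the Beta mass on the interval $[\frac{2i-1}{2(k+1)},\frac{2i+1}{2(k+1)})$ and then invokes the monotonicity of the Beta density on either side of its mode $(\alpha-1)/(\alpha+\beta-2)\to\pi$, whereas you compute the log-ratio directly; both arrive at the same conclusion.
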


We direct the reader into Appendix \ref{app: proof dependent} for the detailed proof. In short, when calculating $P((x,y=+1)|\{(x_i,y_i)\},M)=P(x|y=+1, \{(x_i,y_i)\},M)P(y=+1|\{(x_i,y_i)\},M)$, since the examples are not independent, we need to follow the dependency among $y_i$s and $y$ to determine $P(y=+1|\{(x_i,y_i)\},M)$, which is determined by the relationship between ${\#(y_i=+1)}/{(k+1)}$ in the testing data and $frac$ in pre-training.

Theorem \ref{them: dependent} indicates that the conditional probability of $y$ is determined by the fraction of labels within the pre-training set and the examples during inference, in addition to the inputs. A direct corollary is that when the fraction of $y_i=+1$ is fixed as 0.5 during the pre-training, and all $y_i$s are negative in the inference stage, the prediction for $y$ is always positive. This is consistent with the observation in Figure \ref{fig:dep all pos}, \ref{fig:dep all neg}. While this may be different from how LLMs are pre-trained in real practice, we believe this observation can also help people better understand the mechanism of transformers: They not only learn how to compare examples to perform ICL but are also able to learn the pattern in the prompts.

\section{Conclusion}\label{sec:conclusion}

In this paper, we analyze the behavior of ICL in a binary classification model. We study the ICL performance under different scenarios, including contradicting knowledge, imbalanced examples, and label noise.
In addition to the above analysis in which we assume examples are independently chosen in pre-training, we also find a counter-intuitive phenomenon when the examples are selected in a dependent way. When fixing the number of positive labels and negative labels in the prompt, the ICL prediction behaves in a mean-reversion manner. We believe that our observations and theoretical results can provide deep insights into understanding the behaviors of transformers. A future direction could be to relax the conditions in this paper and consider more general data distributions.

\bibliographystyle{unsrtnat}  
\bibliography{references}  

\newpage

\appendix
The structure of the appendix is as follows. In Section \ref{sec:appendix:noncentral}, we provide the discussion when the decision boundary in Lemma \ref{lem:decision} is not a hyperplane. Section \ref{sec:appendix:proof} collects the proof for all lemmas and theorems in the main content. Section \ref{sec:appendix:simulation setup} describes the simulation setups, and Section \ref{sec:appendix:experiments} includes additional experiment results corresponding to Section \ref{sec:experiment} and Section \ref{sec:dependent} for both independent and dependent scenarios.

\section{Technical Discussion when $\sigma_+^2\neq\sigma_-^2$}\label{sec:appendix:noncentral}

\textbf{General scenario.}~~In Theorem \ref{thm:icl_acc} and the propositions in Section \ref{sec:demonstration}, our assumptions aim to simplify the analysis so that the decision boundary is a hyperplane. When the examples are provided such that $\sigma_{+}^2\neq\sigma_{-}^2$, the general intuition holds, but the decision boundary changes from a hyperplane into a sphere. To be specific, based on Lemma \ref{lem:decision}, we have
$$N(x)=\left(\sqrt{\frac{\sigma_-^2+\sigma_{\theta_-}^2}{\sigma_+^2+\sigma_{\theta_+}^2}}\right)^m \exp \left[-\frac{(x-\hat{\theta}_+)^T(x-\hat{\theta}_+)}{2(\sigma^2_++\sigma^2_{\theta_+})}+\frac{(x-\hat{\theta}_-)^T(x-\hat{\theta}_-)}{2(\sigma^2_-+\sigma^2_{\theta_-})}\right].$$
Then for some constant $a$,
\begin{eqnarray*}
    &&N(x)>a\\
    &\Leftrightarrow& -\frac{(x-\hat{\theta}_+)^T(x-\hat{\theta}_+)}{2(\sigma^2_++\sigma^2_{\theta_+})}+\frac{(x-\hat{\theta}_-)^T(x-\hat{\theta}_-)}{2(\sigma^2_-+\sigma^2_{\theta_-})}>\log(a)-m\log\left(\sqrt{\frac{\sigma_-^2+\sigma_{\theta_-}^2}{\sigma_+^2+\sigma_{\theta_+}^2}}\right),
\end{eqnarray*}
where the decision boundary
$$-\frac{(x-\hat{\theta}_+)^T(x-\hat{\theta}_+)}{2(\sigma^2_++\sigma^2_{\theta_+})}+\frac{(x-\hat{\theta}_-)^T(x-\hat{\theta}_-)}{2(\sigma^2_-+\sigma^2_{\theta_-})}=\log(a)-m\log\left(\sqrt{\frac{\sigma_-^2+\sigma_{\theta_-}^2}{\sigma_+^2+\sigma_{\theta_+}^2}}\right)$$
is a sphere.

In such a case, in Theorem \ref{thm:icl_acc}, for
\begin{equation*}
\begin{aligned}
    &P(correct|y=+1)=\int_{f_{ICL}(x)\ge 0}P(x|y=+1)dx,
\end{aligned}
\end{equation*}
instead of directly using $\Phi$ to represent the probability, we use the noncentral Chi-square distribution to write the probability. The following is the definition of noncentral Chi-square distribution.
\begin{definition}[Noncentral Chi-square distribution\footnote{\url{https://en.wikipedia.org/wiki/Noncentral_chi-squared_distribution}}]
    Let $(X_1,X_2,\ldots,X_k)$ be $k$ independent and normally distributed random variables with means $\mu_i$ and unit variances. Then the random variable 
    $$\sum_{i=1}^k X_i^2$$
    is distributed according to the noncentral chi-square distribution. It has two parameters: $k$ which specifies the number of degrees of freedom and $\lambda$ which is related to the mean of $X_i$s by
    $$\lambda=\sum_{i=1}^k\mu_i^2.$$
\end{definition}

\blue{In the following, we provide additional results corresponding to Section \ref{sec:demonstration}.

\noindent\textbf{Contradicting knowledge.} We take $k\rightarrow\infty$ and assume $\sigma_{e+}^2,\sigma_{e-}^2\rightarrow 0$. When $k\sigma_M^2 \gg\sigma_+^2$ and $k\sigma_M^2\gg\sigma_-^2$, $N(x)$ converges to the following:
\begin{eqnarray*}
    \lim_{k\rightarrow\infty}N(x)=\left(\frac{\sigma_-^2}{\sigma_+^2}\right)^{m/2}\exp\left[ -\frac{\|x-\theta_+^e\|^2}{2\sigma_+^2}+ \frac{\|x-\theta_-^e\|^2}{2\sigma_-^2}\right].
\end{eqnarray*}
When $k\sigma_M^2 \ll\sigma_+^2$ and $k\sigma_M^2\ll\sigma_-^2$, it becomes
\begin{eqnarray*}
    \lim_{k\rightarrow\infty}N(x)=\left(\frac{\sigma_-^2}{\sigma_+^2}\right)^{m/2}\exp\left[ -\frac{\|x-\theta_M\|^2}{4\sigma_+^2}+ \frac{\|x+\theta_M\|^2}{4\sigma_-^2}\right].
\end{eqnarray*}
From the above results, the implication of Proposition \ref{prop:contradict} still holds: when there is no enough examples, the decision mainly follows the prior knowledge. When there are enough examples, the knowledge from the examples will dominate and determine $N(x)$.

\noindent\textbf{Imbalanced examples.} We take $k\rightarrow\infty$ and assume all $\sigma^2,\sigma_+^2,\sigma_-^2$ are constants. Then when $\pi\rightarrow 1$, recall that
$\hat{y}_{ICL}=1(f_{ICL}(x)>0)$, where $f_{ICL}(x)=N(x)-z_k$. In this case, $z_k\rightarrow 0$ and $N(x)$ will be always larger than $0$, i.e., the ICL will always output $\hat{y}_{ICL}=+1$.

\noindent\textbf{Label noise.} When $\sigma_+^2\neq\sigma_-^2$, since the decision boundary is no longer a hyperplane, it is hard to exactly delineate the ICL accuracy into different regimes. Assume that $k\rightarrow\infty$, $\pi=1/2$.
If the noise level is large so that $\|\mathbb{E}\hat\theta_+-\mathbb{E}\hat\theta_-\|\rightarrow 0$, then in this case, denoting $\mathbb{E}\hat\theta_+=\theta$, we have
\begin{eqnarray*}
    \lim_{k\rightarrow\infty}N(x)=\left(\frac{\sigma_-^2}{\sigma_+^2}\right)^{m/2}\exp\left[ -\frac{\|x-\theta\|^2}{2\sigma_+^2}+ \frac{\|x-\theta\|^2}{2\sigma_-^2}\right].
\end{eqnarray*}
When $\sigma_-^2>\sigma_+^2$, if $\|x-\theta\|$ is small, then the prediction is $+1$. If $\|x-\theta\|$ is large, then the prediction is $-1$. For other cases of $p_e^+$ and $p_e^-$, whether the prediction is more likely to be $+1$ or $-1$ depends on the exact value of $x$.
}
\section{Proofs}\label{sec:appendix:proof}

\subsection{Proof of Lemma \ref{lem:posterior_param}}\label{sec:appendix:proof:posterior_param}

\begin{proof}[Proof of Lemma \ref{lem:posterior_param}]
Given the prior distribution of $\theta_-,\theta_+,\pi$ and the data $\{(x_i,y_i)\}$, the likelihood becomes
\begin{equation}
    \begin{aligned}
        &P(\theta_+,\theta_-,\pi|(x_1,y_1),...,(x_k,y_k),M)\\
        &\propto P((x_1,y_1),...,(x_k,y_k)|\theta_+,\theta_-, \pi,M)P(\theta_+,\theta_-,\pi|M)\\
        &=P(\theta_+,\theta_-,\pi)\prod_{i=1}^k P((x_i,y_i)|\theta_+,\theta_-,\pi) ~~\text{(Omit $M$ for simplicity)}\\
        &=P(\theta_+,\theta_-, \pi)\\
        &\qquad\cdot\prod_{y_i=+1}\pi\frac{1}{\left(\sqrt{2\pi\sigma_+^2}\right)^m}\exp{\left[-\frac{1}{2\sigma_+^2}(x_i-\theta_+)^\top(x_i-\theta_+)\right]}\\
        &\qquad\cdot\prod_{y_i=-1} (1-\pi)\frac{1}{\left(\sqrt{2\pi\sigma_+^2}\right)^m}\exp{\left[-\frac{1}{2\sigma_+^2}(x_i-\theta_+)^\top(x_i-\theta_+)\right]}\\
        &=[\pi^{\#(y_i=+1)}(1-\pi)^{\#(y_i=-1)}]P(\theta_+,\theta_-)\prod_{y_i=+1}\pi\frac{1}{\left(\sqrt{2\pi\sigma_+^2}\right)^m}\exp{\left[-\frac{1}{2\sigma_+^2}(x_i-\theta_+)^\top(x_i-\theta_+)\right]}\\
        &\qquad\cdot\prod_{y_i=-1}(1-\pi)\frac{1}{\left(\sqrt{2\pi\sigma_+^2}\right)^m}\exp{\left[-\frac{1}{2\sigma_+^2}(x_i-\theta_+)^\top(x_i-\theta_+)\right]}
    \end{aligned}.
\end{equation}

\textbf{Posterior of $\pi$}.
Since all parameters are independent, we can obtain the posterior distribution of $\pi$ as
$$
    P(\pi|\{(x_i,y_i)\},M)\propto P(\pi|M)\pi^{\#(y_i=+1)}(1-\pi)^{\#(y_i=-1)}\propto \pi^{\#(y_i=+1)}(1-\pi)^{\#(y_i=-1)}.
$$

Therefore, the posterior of $\pi$ is $Beta(\#(y_i=+1)+1, \#(y_i=-1)k+1)$.

\textbf{Posterior of $\theta_+,\theta_-$}. The likelihood of $\theta_+$ satisfies
$$
\begin{aligned}
    P(\theta_+|\{(x_i,y_i)\}|M)&\propto P(\theta_+|M)\prod_{y_i=+1}\pi\frac{1}{\left(\sqrt{2\pi\sigma_+^2}\right)^m}\exp{\left[-\frac{1}{2\sigma_+^2}(x_i-\theta_+)^\top(x_i-\theta_+)\right]}\\
    &\propto \exp \left[-\frac{1}{2\sigma_M^2}(\theta_+-\theta_M)^\top(\theta_+-\theta_M)-\frac{1}{2\sigma_+^2}\sum_{y_i=+1}(x_i-\theta_+)^\top(x_i-\theta_+)\right],
\end{aligned}
$$
which means that the posterior of $\theta_+$ follows a Gaussian distribution, i.e. $$\theta_+\sim N\left(\frac{\sigma_+^2\theta_M+\sigma_M^2\sum_{y_i=+1}x_i}{\sigma_+^2+\#(y_i=+1)\sigma^2_M},\frac{\sigma_+^2\sigma_M^2}{\sigma_+^2+\#(y_i=+1)\sigma^2_M}I\right)=N(\hat{\theta}_+,\sigma^2_{\theta_+}I).$$
Similarly, the posterior of $\theta_-$ follows 
$$\theta_-\sim N\left(\frac{\sigma_M^2\sum_{y_i=-1}x_i-\sigma_-^2\theta_M}{\sigma_-^2+\#(y_i=-1)\sigma_M^2},\frac{\sigma_M^2\sigma_-^2}{\sigma_-^2+\#(y_i=-1)\sigma_M^2}I\right)=N(\hat{\theta}_-,\sigma^2_{\theta_-}I).$$

\end{proof}

\subsection{Proof of Lemma \ref{lem:decision}}\label{sec:appendix:proof:decision}
\begin{proof}[Proof of Lemma \ref{lem:decision}]

Given $p_+=p_-=1$ and the posterior of $\theta_+,\theta_-,\pi$, we obtain that
\begin{eqnarray*}
    &&P(x,y=+1|\{(x_i,y_i)\},M)\\
    &=&\int_{\pi}\int_{\theta_+,\theta_-}\pi \frac{1}{\left(\sqrt{2\pi \sigma_+^2}\right)^m}\exp \left[-\frac{1}{2\sigma_+^2}(x-\theta_+)^\top(x-\theta_+)\right]\\
    &&\qquad\qquad\cdot P(\pi|\{(x_i,y_i)\},M)P(\theta_+|\{(x_i,y_i)\},M)P(\theta_-|\{(x_i,y_i)\},M) d\pi d\theta_+ d\theta_-\\
    &=&\int_{\pi}\int_{\theta_+}\pi \frac{1}{\left(\sqrt{2\pi \sigma_+^2}\right)^m}\exp \left[-\frac{1}{2\sigma_+^2}(x-\theta_+)^\top(x-\theta_+)\right]\\
    &&\qquad\qquad\cdot P(\pi|\{(x_i,y_i)\},M)P(\theta_+|\{(x_i,y_i)\},M) d\pi d\theta_+ \\
    &=&\int_{\pi}\int_{\theta_+}\pi \frac{1}{\left(\sqrt{2\pi \sigma_+^2}\right)^m\left(\sqrt{2\pi \sigma_{\theta_+}^2}\right)^m}\exp \left[-\frac{1}{2\sigma_+^2}(x-\theta_+)^\top(x-\theta_+)\right]\\
    &&\qquad\qquad \cdot P(\pi|\{(x_i,y_i)\},M)\exp\left[-\frac{1}{2{\sigma}_{\theta_+}^2}(\theta_+-\hat\theta_+)^\top(\theta_+-\hat\theta_+)\right] d\pi d\theta_+ \\
    &=&\frac{\#(y_i=+1)+1}{k+2}\int_{\theta_+} \frac{1}{\left(\sqrt{2\pi \sigma_+^2}\right)^m\left(\sqrt{2\pi \sigma_{\theta_+}^2}\right)^m}\exp \left[-\frac{1}{2\sigma_+^2}(x-\theta_+)^\top(x-\theta_+)\right]\\
    &&\qquad\qquad \cdot \exp\left[-\frac{1}{2{\sigma}_{\theta_+}^2}(\theta_+-\hat\theta_+)^\top(\theta_+-\hat\theta_+)\right] d\theta_+ \\
    &=&\frac{\#(y_i=+1)+1}{k+2}\frac{1}{\left(\sqrt{2\pi (\sigma_+^2+\sigma^2_{\theta_+})}\right)^m}\exp \left[-\frac{1}{2(\sigma_+^2+\sigma^2_{\theta_+})}(x-\hat{\theta}_+)^\top(x-\hat{\theta}_+)\right].    
\end{eqnarray*}

Similarly, we have $$P(x,y=-1|\{(x_i,y_i)\},M)=\frac{\#(y_i=-1)+1}{k+2}\frac{1}{\left(\sqrt{2\pi (\sigma_-^2+\sigma^2_{\theta_-})}\right)^m}\exp \left[-\frac{(x-\hat{\theta}_-)^\top(x-\hat{\theta}_-)}{2(\sigma_-^2+\sigma^2_{\theta_-})}\right].$$

Then we can obtain the predicted probability and decision boundary.
\begin{eqnarray*}
    P(y=+1|x,\{(x_i,y_i)\},M)&=&\frac{P(x,y=+1|\{(x_i,y_i)\},M)}{P(x,y+=1|\{(x_i,y_i)\},M)+P(x,y=-1|\{(x_i,y_i)\},M)}\\
    &=&\frac{\frac{\#(y_i=+1)+1}{k+2} N(x)}{\frac{\#(y_i=+1)+1}{k+2} N(x)+\frac{\#(y_i=-1)+1}{k+2}},\\
    P(y=-1|x,\{(x_i,y_i)\},M)&=&\frac{P(x,y=-1|\{(x_i,y_i)\},M)}{P(x,y+=1|\{(x_i,y_i)\},M)+P(x,y=-1|\{(x_i,y_i)\},M)}\\
    &=&\frac{\frac{\#(y_i=-1)+1}{k+2}}{\frac{\#(y_i=+1)+1}{k+2} N(x)+\frac{\#(y_i=-1)+1}{k+2}},
\end{eqnarray*}
where $$N(x)=\left(\sqrt{\frac{\sigma_-^2+\sigma_{\theta_-}^2}{\sigma_+^2+\sigma_{\theta_+}^2}}\right)^m \exp \left[-\frac{(x-\hat{\theta}_+)^\top(x-\hat{\theta}_+)}{2(\sigma^2_++\sigma^2_{\theta_+})}+\frac{(x-\hat{\theta}_-)^\top(x-\hat{\theta}_-)}{2(\sigma^2_-+\sigma^2_{\theta_-})}\right].$$

Finally, the decision boundary is $\hat{y}_{ICL}=1(f_{ICL}(x)>0)$, where $f_{ICL}(x)=N(x)-\frac{\#(y_i=-1)+1}{\#(y_i=+1)+1}$.
\end{proof}

\subsection{Proof of Theorem \ref{thm:icl_acc}}\label{sec:appendix:proof:icl_acc}


\begin{proof}[Proof of Theorem \ref{thm:icl_acc}]

{To prove Theorem \ref{thm:icl_acc}, a first note is that although we assume the LLM uses an idea of Bayesian inference to determine the posterior distribution of $\theta_+$, $\theta_-$ and $\pi$, when quantifying the ICL prediction accuracy at the inference stage, we assume a given $(\pi,\theta^e+,\theta^e_-,p_+,p_-)$. }

We first derive the marginal distribution for test input $x$.

    For any given $\theta_+$ and $\theta_-$, following the data generation assumption in Section \ref{sec:assumption}, we have
\begin{eqnarray*}
    P(x|y=+1)&=&\frac{1}{(\sqrt 2\pi \sigma^2)^m}\exp \left[-\frac{(x-\theta_+)^\top(x-\theta_+)}{2\sigma^2}\right],\\
    P(x|y=-1)&=&\frac{1}{(\sqrt {2\pi \sigma^2})^m}\exp \left[-\frac{(x-\theta_-)^\top(x-\theta_-)}{2\sigma^2}\right].
\end{eqnarray*}
As a result, when integrating over all possible $\theta_+$ and $\theta_-$, it becomes
\begin{eqnarray*}
    P(x|y=+1)&=&\int_{\theta_+,\theta_-}P(x|y=+1,\theta_+,\theta_-)P(\theta_+)P(\theta_-)d\theta_+d\theta_-\\
    &=&\int_{\theta_+,\theta_-}\frac{1}{(\sqrt{2\pi \sigma^2})^m}\exp \left[-\frac{1}{2\sigma^2}(x-\theta_+)^\top(x-\theta_+)\right]\\
    &&\qquad\qquad\cdot \frac{1}{\left(\sqrt{2\pi \sigma^2_{e+}}\right)^m}\exp \left[-\frac{1}{2\sigma^2_{e+}}(\theta_+-\theta_+^e)^\top(\theta_+-\theta_+^e)\right]\\
    &&\qquad\qquad\cdot\frac{1}{\left(\sqrt {2\pi \sigma^2_{e-}}\right)^m}\exp \left[-\frac{1}{2\sigma^2_{e-}}(\theta_--\theta_-^e)^\top(\theta_--\theta_-^e)\right] d\theta_+d\theta_-\\
&=&\int_{\theta_+}\frac{1}{(\sqrt{2\pi \sigma^2})^m}\exp \left[-\frac{1}{2\sigma^2}(x-\theta_+)^\top(x-\theta_+)\right]\\
    &&\qquad\qquad\cdot \frac{1}{\left(\sqrt{2\pi \sigma^2_{e+}}\right)^m}\exp \left[-\frac{1}{2\sigma^2_{e+}}(\theta_+-\theta_+^e)^\top(\theta_+-\theta_+^e)\right]d\theta_+\\
    &=&\frac{1}{\left(\sqrt{2\pi(\sigma^2+\sigma^2_{e+})}\right)^m}\exp \left[-\frac{1}{2(\sigma^2+\sigma^2_{e+})}(x-\theta_+^e)^\top(x-\theta_+^e)\right].
\end{eqnarray*}
Similarly,
\begin{eqnarray*}
    P(x|y=-1)&=&
    \frac{1}{\left(\sqrt{2\pi(\sigma^2+\sigma^2_{e-})}\right)^m}\exp \left[-\frac{1}{2(\sigma^2+\sigma^2_{e-})}(x-\theta_-^e)^\top(x-\theta_-^e)\right].
\end{eqnarray*}
Then, we compute the probability of correct prediction for each class respectively.
\begin{equation*}
\begin{aligned}
    &P(correct|y=+1, \{(x_i,y_i)\},\Theta,M)=\int_{f_{ICL}(x)\ge 0}P(x|y=+1)dx.
\end{aligned}
\end{equation*}
Recall that $f_{ICL}(x)=N(x)-\frac{\#(y_i=-1)+1}{\#(y_i=+1)+1}$, and
$$N(x)=\left(\sqrt{\frac{\sigma_-^2+\sigma_{\theta_-}^2}{\sigma_+^2+\sigma_{\theta_+}^2}}\right)^m \exp \left[-\frac{(x-\hat{\theta}_+)^\top(x-\hat{\theta}_+)}{2(\sigma^2_++\sigma^2_{\theta_+})}+\frac{(x-\hat{\theta}_-)^\top(x-\hat{\theta}_-)}{2(\sigma^2_-+\sigma^2_{\theta_-})}\right].$$
A difficulty of solving $f_{ICL}(x)>0$ is that, when $\sigma_-^2+\sigma_{\theta_-}^2\neq \sigma_-^2+\sigma_{\theta_-}^2$, the region of $\log(N(x))=\log[(\#(y_i=-1)+1)/\#(y_i=+1)+1]$ is not a hyperplane, and is not traceable. To overcome this difficulty, we provide an upper bound of a lower bound of $\log(N(x))$, both of which are hyperplane and can together bound $\int_{f_{ICL}(x)\ge 0}P(x|y=+1)dx$ with probability tending to 1.

To bound $N(x)$, since $\sigma_+^2=\sigma_-^2=\sigma^2$, we have the following decomposition
\begin{eqnarray*}
    \log(N(x))&=& \frac{m}{2}\log\left(\frac{\sigma_-^2+\sigma_{\theta_-}^2}{\sigma_+^2+\sigma_{\theta_+}^2}\right)-\frac{(x-\hat{\theta}_+)^\top(x-\hat{\theta}_+)}{2(\sigma^2_++\sigma^2_{\theta_+})}+\frac{(x-\hat{\theta}_-)^\top(x-\hat{\theta}_-)}{2(\sigma^2_-+\sigma^2_{\theta_-})}\\
    &=&\underbrace{\frac{m}{2}\log\left(\frac{\sigma_-^2+\sigma_{\theta_-}^2}{\sigma_+^2+\sigma_{\theta_+}^2}\right)-\frac{(x-\hat{\theta}_+)^\top(x-\hat{\theta}_+)}{2\sigma^2_+}+\frac{(x-\hat{\theta}_-)^\top(x-\hat{\theta}_-)}{2\sigma^2_-}}_{\triangleq \log(\tilde{N}(x))}\\
    &&+\underbrace{\frac{(x-\hat{\theta}_+)^\top(x-\hat{\theta}_+)}{2\sigma^2_+}-\frac{(x-\hat{\theta}_+)^\top(x-\hat{\theta}_+)}{2(\sigma^2_++\sigma^2_{\theta_+})}-\frac{(x-\hat{\theta}_-)^\top(x-\hat{\theta}_-)}{2\sigma^2_-}+\frac{(x-\hat{\theta}_-)^\top(x-\hat{\theta}_-)}{2(\sigma^2_-+\sigma^2_{\theta_-})}}_{\triangleq R(x)}.
\end{eqnarray*}
Recall that $$z_k= \log\left(\frac{\#(y_i=-1)+1}{\#(y_i=+1)+1}\right),$$
then we have
\begin{eqnarray*}
    &&P(correct|y=+1, \{(x_i,y_i)\},\Theta,M)\\
    &=&\int_{\log(N(x))-z_k \ge 0}P(x|y=+1)dx\\
    &=&\underbrace{\int_{\log(\tilde N(x))-z_k \ge 0}P(x|y=+1)dx}_{\triangleq A_1}\\
    &&-\underbrace{\int_{\log( N(x))-z_k < 0, \log(\tilde N(x))-z_k \ge 0}P(x|y=+1)dx}_{\triangleq A_2}
    +\underbrace{\int_{\log( N(x))-z_k \ge 0, \log(\tilde N(x))-z_k < 0}P(x|y=+1)dx}_{\triangleq A_3},
\end{eqnarray*}
where
\begin{eqnarray*}
    |A_2|&=&\left|\int_{\log( N(x))-z_k < 0, \log(\tilde N(x))-z_k \ge 0}P(x|y=+1)dx\right|\\
    &\leq& \int_{\log( \tilde N(x))-|R(x)|-z_k < 0, \log(\tilde N(x))-z_k \ge 0}P(x|y=+1)dx\\
    &\leq& \int_{\log( \tilde N(x))-c\log k/k-z_k < 0, \log(\tilde N(x))-z_k \ge 0}P(x|y=+1)dx + P\left(|R(x)|>\frac{c\log k}{k}\bigg|\Theta\right),
\end{eqnarray*}
and
\begin{eqnarray*}
    |A_3|&=&\left|\int_{\log( N(x))-z_k \ge 0, \log(\tilde N(x))-z_k < 0}P(x|y=+1)dx\right|\\
    &\leq&\int_{\log( \tilde N(x))+c\log k/k-z_k \ge 0, \log(\tilde N(x))-z_k < 0}P(x|y=+1)dx + P\left(|R(x)|>\frac{c\log k}{k}\bigg|\Theta\right).
\end{eqnarray*}
In the folllowing, we calculate $A_1$ and provide the detailed bounds for $A_2$ and $A_3$.

\textbf{For $A_1$,} based on the definition of $m_k$, we have
\begin{eqnarray*}
    \{x: \;\log(\tilde N(x))-z_k\geq 0\}&=&\left\{x:(\hat{\theta}_+-\hat{\theta}_-)^\top x-\frac{\hat{\theta}_+^\top\hat{\theta}_+-\hat{\theta}_-^\top\hat{\theta}_-}{2}\ge \sigma^2\left(z_k-\frac{m}{2}\log\left(\frac{\sigma_-^2+\sigma_{\theta_-}^2}{\sigma_+^2+\sigma_{\theta_+}^2}\right)\right)\right\}\\
    &=&\left\{x:(\hat{\theta}_+-\hat{\theta}_-)^\top x-\frac{\hat{\theta}_+^\top\hat{\theta}_+-\hat{\theta}_-^\top\hat{\theta}_-}{2}\ge m_k\right\}.
\end{eqnarray*}
Let $z=(\hat{\theta}_--\hat{\theta}_+)^\top x-\frac{\hat{\theta}_+^\top\hat{\theta}_+-\hat{\theta}_-^\top\hat{\theta}_-}{2}$, then we have 
\begin{eqnarray*}
z|y=+1,\{(x_i,y_i)\},M&\sim&N\left((\hat{\theta}_+-\hat{\theta}_-)^\top\theta_+^e-\frac{\hat{\theta}_+^\top\hat{\theta}_+-\hat{\theta}_-^\top\hat{\theta}_-}{2}, \|\hat{\theta}_--\hat{\theta}_+\|^2_2(\sigma^2+\sigma^2_{e+})\right),
\end{eqnarray*}
which is still a Gaussian distribution. Therefore, we have
\begin{eqnarray*}
    \int_{z\ge m_k}P(z|y=+1)dz
    &=&\left(1-\Phi\left(\frac{m_k-(\hat{\theta}_++\hat{\theta}_-)^\top\theta_+^e-\frac{\hat{\theta}_+^\top\hat{\theta}_+-\hat{\theta}_-^\top\hat{\theta}_-}{2}}{\|\hat{\theta}_--\hat{\theta}_+\|_2\sqrt{(\sigma^2+\sigma^2_{e+})}}\right)\right)\\
    &=&\left(1-\Phi\left(  \frac{m_k}{\sqrt{(\sigma^2+\sigma^2_{e+})}\|\hat{\theta}_--\hat{\theta}_+\|}-
    \frac{\theta_+^e-(\hat\theta_++\hat\theta_-)/2}{\sqrt{(\sigma^2+\sigma^2_{e+})}}\frac{\hat{\theta}_--\hat{\theta}_+}{\|\hat{\theta}_--\hat{\theta}_+\|}
    \right)\right),
    \end{eqnarray*}
    and
    \begin{eqnarray*}
    \int_{z< m_k}P(z|y=-1)dz
    &=&
    \Phi\left(\frac{m_k-(\hat{\theta}_++\hat{\theta}_-)^\top\theta_-^e-\frac{\hat{\theta}_+^\top\hat{\theta}_+-\hat{\theta}_-^\top\hat{\theta}_-}{2}}{\|\hat{\theta}_--\hat{\theta}_+\|_2\sqrt{(\sigma^2+\sigma^2_{e-})}}\right)\\
    &=&
    \Phi\left(  \frac{m_k}{\sqrt{(\sigma^2+\sigma^2_{e-})}\|\hat{\theta}_--\hat{\theta}_+\|}-
    \frac{\theta_-^e-(\hat\theta_++\hat\theta_-)/2}{\sqrt{(\sigma^2+\sigma^2_{e-})}}\frac{\hat{\theta}_--\hat{\theta}_+}{\|\hat{\theta}_--\hat{\theta}_+\|}
    \right).
\end{eqnarray*}

\textbf{In terms of $A_2$ and $A_3$,} we need to figure out $P(|R(x)|>c\log k/k)$. Denote $\tilde\theta_+$ and $\tilde\theta_-$ as the center of $x_i$s for the positive and negative classes respectively (with possible flipped examples). For a fixed $x$, we have
\begin{eqnarray*}
    \|x-\hat\theta_+\|^2\left(\frac{1}{2\sigma_+^2}-\frac{1}{2(\sigma_+^2+\sigma_{\theta_+}^2)}\right)
    &\leq& 2\left(\|x-\tilde\theta_+\|^2+\|\tilde\theta_+-\hat\theta_+\|^2\right)\frac{\sigma_{\theta_+}^2}{2\sigma_+^4}\\
    &=&2\left(\|x-\tilde\theta_+\|^2+\|\tilde\theta_+-\hat\theta_+\|^2\right)\frac{\sigma_M^2}{\sigma_+^2}\frac{1}{\sigma_+^2+\#(y_i=+1)\sigma_M^2},
\end{eqnarray*}
thus for any $t>0$, given the parameters $\tilde\theta_+$, $\theta^e_+$, $\pi$, and the examples $\{(x_i,y_i)\}$,
\begin{eqnarray*}
    &&P\left( \|x-\hat\theta_+\|^2\left(\frac{1}{2\sigma_+^2}-\frac{1}{2(\sigma_+^2+\sigma_{\theta_+}^2)}\right)>t\bigg|\Theta, \{(x_i,y_i)\} \right)\\
    &=&P\left( \|x-\hat\theta_+\|^2\left(\frac{1}{2\sigma_+^2}-\frac{1}{2(\sigma_+^2+\sigma_{\theta_+}^2)}\right)>t\bigg|\Theta, \hat\theta_+, \#(y_i=+1) \right)\\
    &\leq& P\left(\left(\|x-\tilde\theta_+\|^2+\|\tilde\theta_+-\hat\theta_+\|^2\right)\frac{\sigma_M^2}{\sigma_+^2}\frac{1}{\sigma_+^2+\#(y_i=+1)\sigma_M^2}>t\bigg|\Theta,  \hat\theta_+,\#(y_i=+1)\right)\\
    &\leq&P\left(\|x-\tilde\theta_+\|^2\frac{\sigma_M^2}{\sigma_+^2}\frac{1}{\sigma_+^2+\#(y_i=+1)\sigma_M^2}>\frac{t}{2}\bigg|\Theta,  \hat\theta_+,\#(y_i=+1)\right)\\
    &&+1\left\{\|\tilde\theta_+-\hat\theta_+\|^2\frac{\sigma_M^2}{\sigma_+^2}\frac{1}{\sigma_+^2+\#(y_i=+1)\sigma_M^2}>\frac{t}{2}\right\}.
\end{eqnarray*}
Approximating the probability of a Bernoulli distribution using the standard Gaussian distribution, we have for any $t>0$,
\begin{eqnarray}\label{eqn:k_11}
    P\left( \#(y_i=+1)<t \right)=\Phi\left( \frac{t-\pi k}{\sqrt{k\pi(1-\pi)}} \right)+O\left(\frac{1}{k^{3/2}}\right),
\end{eqnarray}
which means that
\begin{eqnarray}\label{eqn:k_12}
    P\left(\#(y_i=+1)<\pi k -c_k\sqrt{k}\log k\right)&=&\Phi\left( -\frac{c_k\log k}{\sqrt{\pi(1-\pi)}} \right)+O\left(\frac{1}{k^{3/2}}\right).
\end{eqnarray}
Therefore, with probability greater than $1-\Phi\left( -c_k(\log k)/\sqrt{\pi(1-\pi)} \right)+O\left(\frac{1}{k^{3/2}}\right)$ over the randomness of $\#(y_i=+1)$, we have
\begin{eqnarray*}
    &&P\left(\|x-\tilde\theta_+\|^2\frac{\sigma_M^2}{\sigma_+^2}\frac{1}{\sigma_+^2+\#(y_i=+1)\sigma_M^2}>\frac{t}{2}\bigg|\Theta,  \hat\theta_+,\#(y_i=+1)\right)\\
    &=&P\left( \#(y_i=+1)<\frac{1}{\sigma_M^2}\left[ \frac{2\|x-\tilde\theta_+\|^2}{t}\frac{\sigma_M^2}{\sigma_+^2}-\sigma_+^2\right] \bigg|\Theta,  \hat\theta_+,\#(y_i=+1)\right)\\
    &\leq&P\left( \pi k -c_k\sqrt{k}\log k <\frac{1}{\sigma_M^2}\left[ \frac{2\|x-\tilde\theta_+\|^2}{t}\frac{\sigma_M^2}{\sigma_+^2}-\sigma_+^2\right] \bigg|\Theta,  \hat\theta_+,\#(y_i=+1)\right)\\
    &\leq&P\left( \pi k -c_k\sqrt{k}\log k <\frac{1}{\sigma_M^2}\left[ \frac{4\|x-\theta^e_+\|^2+4\|\tilde\theta_+-\theta^e_+\|^2}{t}\frac{\sigma_M^2}{\sigma_+^2}-\sigma_+^2\right] \bigg|\Theta,  \hat\theta_+,\#(y_i=+1)\right).
\end{eqnarray*}

$$\theta_+\sim N\left(\frac{\sigma_+^2\theta_M+\sigma_M^2\sum_{y_i=+1}x_i}{\sigma_+^2+\#(y_i=+1)\sigma^2_M},\frac{\sigma_+^2\sigma_M^2}{\sigma_+^2+\#(y_i=+1)\sigma^2_M}I\right)\triangleq N(\hat{\theta}_+,\sigma^2_{\theta_+}I),$$

In terms of $1\left\{\|\tilde\theta_+-\hat\theta_+\|^2\frac{\sigma_M^2}{\sigma_+^2}\frac{1}{\sigma_+^2+\#(y_i=+1)\sigma_M^2}>\frac{t}{2}\right\}$, given $\tilde\theta_+,\theta^e_+,\{(x_i,y_i)\}$, we have for some constants $c_x$ and $c_x'$ so that when $t\geq 0$ and $t\ll \#(y_i=+1)$,
\begin{eqnarray*}
    && P\left( \|(\hat\theta_+-\tilde\theta_+)\|>t \;\bigg| \#(y_i=+1) \right)\\
    &=&P\left( \frac{ \sigma_M^2 }{ \sigma_+^2+\#(y_i=+1)\sigma_M^2 }\left\|
     \sum_{y_i=+1} x_i-\mathbb{E}x_i
    \right\|>t\;\bigg| \#(y_i=+1) \right)\\
    &=&P\left( \left\|
     \sum_{y_i=+1} x_i-\mathbb{E}x_i
    \right\|>t\frac{ \sigma_+^2+\#(y_i=+1)\sigma_M^2 }{ \sigma_M^2 }\;\bigg| \#(y_i=+1) \right)\\
    &\leq& \exp\left( -\frac{  c_xt^2}{\#(y_i=+1)\mathbb{E}_{y_i=+1}\|x_i-\mathbb{E}x_i\|^2} \left( \frac{ \sigma_+^2+\#(y_i=+1)\sigma_M^2 }{ \sigma_M^2 } \right)^2 \right)\\
    &\leq&  \exp\left( - {c_x't^2 }(\pi k -c_k\sqrt{k}\log k)  \right) + 1\left\{\#(y_i=+1)<\pi k -c_k\sqrt{k}\log k\right\}.
\end{eqnarray*}

Therefore, aggregating all the above into $R(x)$, we have
\begin{eqnarray*}
    &&P(|R(x)|>t|\Theta,\{(x_i,y_i)\})\\
    &=&P\left(\left|\frac{(x-\hat{\theta}_+)^\top(x-\hat{\theta}_+)}{2\sigma^2_+}-\frac{(x-\hat{\theta}_+)^\top(x-\hat{\theta}_+)}{2(\sigma^2_++\sigma^2_{\theta_+})}-\frac{(x-\hat{\theta}_-)^\top(x-\hat{\theta}_-)}{2\sigma^2_-}+\frac{(x-\hat{\theta}_-)^\top(x-\hat{\theta}_-)}{2(\sigma^2_-+\sigma^2_{\theta_-})}\right|>t\bigg|\Theta,\{(x_i,y_i)\}\right)\\
    &\leq&P\left(\left|\frac{(x-\hat{\theta}_+)^\top(x-\hat{\theta}_+)}{2\sigma^2_+}-\frac{(x-\hat{\theta}_+)^\top(x-\hat{\theta}_+)}{2(\sigma^2_++\sigma^2_{\theta_+})}\right|>\frac{t}{2}\bigg|\Theta,\{(x_i,y_i)\}\right)\\
    &&+P\left(\left|\frac{(x-\hat{\theta}_-)^\top(x-\hat{\theta}_-)}{2\sigma^2_-}-\frac{(x-\hat{\theta}_-)^\top(x-\hat{\theta}_-)}{2(\sigma^2_-+\sigma^2_{\theta_-})}\right|>\frac{t}{2}\bigg|\Theta,\{(x_i,y_i)\}\right),
\end{eqnarray*}
where
\begin{eqnarray}\label{eqn:k_21}
   &&P\left(\left|\frac{(x-\hat{\theta}_+)^\top(x-\hat{\theta}_+)}{2\sigma^2_+}-\frac{(x-\hat{\theta}_+)^\top(x-\hat{\theta}_+)}{2(\sigma^2_++\sigma^2_{\theta_+})}\right|>\frac{t}{2}\bigg|\Theta,\{(x_i,y_i)\}\right)\\
   &\leq& P\left( \pi k -c_k\sqrt{k}\log k <\frac{1}{\sigma_M^2}\left[ \frac{4\|x-\theta^e_+\|^2+4\|\tilde\theta_+-\theta^e_+\|^2}{t}\frac{\sigma_M^2}{\sigma_+^2}-\sigma_+^2\right] \bigg|\Theta,  \hat\theta_+,\#(y_i=+1)\right)\nonumber\\
   &&+1\left\{\|\tilde\theta_+-\hat\theta_+\|^2\frac{\sigma_M^2}{\sigma_+^2}\frac{1}{\sigma_+^2+\#(y_i=+1)\sigma_M^2}>\frac{t}{2}\right\}
   +1\left\{\#(y_i=+1)<\pi k -c_k\sqrt{k}\log k\right\}\nonumber\\
   &=&O\left( P(\mathcal{X}^2_m >t\pi k(\pi k-c_k\sqrt{k}\log k) ) \right)+1\left\{\|\tilde\theta_+-\hat\theta_+\|^2\frac{\sigma_M^2}{\sigma_+^2}\frac{1}{\sigma_+^2+\#(y_i=+1)\sigma_M^2}>\frac{t}{2}\right\}\nonumber\\
   &&+1\left\{\#(y_i=+1)<\pi k -c_k\sqrt{k}\log k\right\}.\nonumber
\end{eqnarray}
And similarly,
\begin{eqnarray}\label{eqn:k_22}
    &&P\left(\left|\frac{(x-\hat{\theta}_-)^\top(x-\hat{\theta}_-)}{2\sigma^2_-}-\frac{(x-\hat{\theta}_-)^\top(x-\hat{\theta}_-)}{2(\sigma^2_-+\sigma^2_{\theta_-})}\right|>\frac{t}{2}\bigg|\Theta,\{(x_i,y_i)\}\right)\\
    &=&O\left( P(\mathcal{X}^2_m >t(1-\pi) k((1-\pi) k-c_k\sqrt{k}\log k) ) \right)+1\left\{\|\tilde\theta_--\hat\theta_-\|^2\frac{\sigma_M^2}{\sigma_-^2}\frac{1}{\sigma_-^2+\#(y_i=+1)\sigma_M^2}>\frac{t}{2}\right\}\nonumber\\
    &&
   +1\left\{\#(y_i=+1)<\pi k -c_k\sqrt{k}\log k\right\}.\nonumber
\end{eqnarray}
Finally, taking $t=c\log k/k$, we have
\begin{eqnarray*}
    &&|A_2|+|A_3|\\
    &\leq& \int_{\log( \tilde N(x))-\log k/k-z_k < 0, \log(\tilde N(x))-z_k \ge 0}P(x|y=+1)dx\\
    &&+\int_{\log( \tilde N(x))+\log k/k-z_k \ge 0, \log(\tilde N(x))-z_k < 0}P(x|y=+1)dx \\
    &&+ 2P\left(|R(x)|>t\big|\Theta,\{(x_i,y_i)\}\right)\\
    &\leq&\Phi\left(\frac{c\log k/k +m_k-(\hat{\theta}_++\hat{\theta}_-)^\top\theta_+^e-\frac{\hat{\theta}_+^\top\hat{\theta}_+-\hat{\theta}_-^\top\hat{\theta}_-}{2}}{\|\hat{\theta}_--\hat{\theta}_+\|_2\sqrt{(\sigma^2+\sigma^2_{e+})}}\right)\\
    &&-\Phi\left(\frac{m_k-c\log k/k-(\hat{\theta}_++\hat{\theta}_-)^\top\theta_+^e-\frac{\hat{\theta}_+^\top\hat{\theta}_+-\hat{\theta}_-^\top\hat{\theta}_-}{2}}{\|\hat{\theta}_--\hat{\theta}_+\|_2\sqrt{(\sigma^2+\sigma^2_{e+})}}\right)\\
    &&+O\left( P\left(\mathcal{X}^2_m >\pi k\frac{c\log k}{k}(\pi k-c_k\sqrt{k}\log k) \right) \right)+O\left( P\left(\mathcal{X}^2_m >(1-\pi) k\frac{c\log k}{k}((1-\pi) k-c_k\sqrt{k}\log k) \right) \right)\\
    &&+1\left\{\|\tilde\theta_+-\hat\theta_+\|^2\frac{\sigma_M^2}{\sigma_+^2}\frac{1}{\sigma_+^2+\#(y_i=+1)\sigma_M^2}>\frac{t}{2}\right\}\\
    &&
   +1\left\{\|\tilde\theta_--\hat\theta_-\|^2\frac{\sigma_M^2}{\sigma_+^2}\frac{1}{\sigma_-^2+\#(y_i=+1)\sigma_M^2}>\frac{t}{2}\right\}+2*1\left\{\#(y_i=+1)<\pi k -c_k\sqrt{k}\log k\right\}\\
   &=&O\left(\frac{c\log k}{k}\right)\\
   &&+O\left( P\left(\mathcal{X}^2_m >\pi k\frac{c\log k}{k}(\pi k-c_k\sqrt{k}\log k) \right) \right)+O\left( P\left(\mathcal{X}^2_m >(1-\pi) k\frac{c\log k}{k}((1-\pi) k-c_k\sqrt{k}\log k) \right) \right)\\
   &&+1\left\{\|\tilde\theta_+-\hat\theta_+\|^2\frac{\sigma_M^2}{\sigma_+^2}\frac{1}{\sigma_+^2+\#(y_i=+1)\sigma_M^2}>\frac{c\log k}{2k}\right\}\\
   &&
   +1\left\{\|\tilde\theta_--\hat\theta_-\|^2\frac{\sigma_M^2}{\sigma_+^2}\frac{1}{\sigma_-^2+\#(y_i=+1)\sigma_M^2}>\frac{c\log k}{2k}\right\}+2*1\left\{\#(y_i=+1)<\pi k -c_k\sqrt{k}\log k\right\}.
\end{eqnarray*}
\end{proof}

\subsection{Proof of Proposition \ref{prop:asymp}}\label{sec:appendix:proof:bahadur}


To simplify the notation, we use a generic notation $c$ to represent the constant used in concentration inequalities and etc.

\begin{proof}[Proof of Proposition \ref{prop:asymp}]
    
To show Proposition \ref{prop:asymp}, the key is to quantify the difference between $P(correct|y=+1, \{(x_i,y_i)\},M)$ and $P_+^*$, where
\begin{eqnarray*}
    &&P(correct|y=+1,\Theta, \{(x_i,y_i)\},M)-P_+^*\\
    &=&-\phi\left(  \frac{\sigma^2\log ((1-\pi )/\pi )}{\sqrt{(\sigma^2+\sigma^2_{e+})}\|\tilde\theta_--\tilde\theta_+\|}-
  \frac{(\theta_+^e-(\tilde\theta_++\tilde\theta_-)/2)^\top(\tilde\theta_+-\tilde\theta_-)}{\|\tilde\theta_+-\tilde\theta_-\|\sqrt{(\sigma^2+\sigma^2_{e+})}}  
    \right)\\
    &&\qquad\times\Bigg(  \underbrace{ \frac{m_k}{\sqrt{(\sigma^2+\sigma^2_{e+})}\|\hat{\theta}_--\hat{\theta}_+\|}
    -\frac{\sigma^2\log ((1-\pi )/\pi )}{\sqrt{(\sigma^2+\sigma^2_{e+})}\|\tilde\theta_--\tilde\theta_+\|}}_{\triangleq B_1}\\
    &&\qquad\qquad\qquad\qquad\qquad
    \underbrace{- \frac{\theta_+^e-(\hat\theta_++\hat\theta_-)/2}{\sqrt{(\sigma^2+\sigma^2_{e+})}}\frac{\hat{\theta}_--\hat{\theta}_+}{\|\hat{\theta}_--\hat{\theta}_+\|}+\frac{(\theta_+^e-(\tilde\theta_++\tilde\theta_-)/2)^\top(\tilde\theta_+-\tilde\theta_-)}{\|\tilde\theta_+-\tilde\theta_-\|\sqrt{(\sigma^2+\sigma^2_{e+})}}  }_{\triangleq - B_2 } \Bigg)\\
    &&+O\left(\phi'\left(  \frac{\sigma^2\log((1-\pi )/\pi )}{\sqrt{(\sigma^2+\sigma^2_{e+})}\|\theta_-^e-\theta_+^e\|}-
   \frac{(\theta_+^e-(\tilde\theta_++\tilde\theta_-)/2)^\top(\tilde\theta_+-\tilde\theta_-)}{\|\tilde\theta_+-\tilde\theta_-\|\sqrt{(\sigma^2+\sigma^2_{e+})}} \right)(B_1-B_2)^2
    \right)\\
    &&+O\left(\frac{c\log k}{k}\right)+O\left( P\left(\mathcal{X}^2_m >\frac{c\log k}{k}(\pi k-c_k\sqrt{k}\log k) \right) \right).
\end{eqnarray*}

There are two steps in the proof: (1) Use probability bounds to show that $B_1$ and $B_2$ are consistent and the remainder terms are negligible. (2) Figure out the asymptotic distribution ignoring the remainder terms.



To bound $B_1$, we need to bound $m_k$ and $\|\hat\theta_--\hat\theta_+\|$. Based on Lemma \ref{lem:m_k} below, there exists some constant $c>0$ such that when $t\gg 1/k$,
\begin{eqnarray}
    P\left(\left| m_k- \sigma^2\log\left(\frac{\pi }{1-\pi }\right)\right|\geq t \right)\leq \exp\left( -c t^2 k \right)+\frac{c}{k}.\label{eqn:m_k}
\end{eqnarray}

In terms of $\hat\theta_-$ and $\hat\theta_+$ in $B_1$, following Lemma \ref{lem:hat_theta}, we have 
\begin{eqnarray*}
     P\left(\left\| \hat\theta_+-\tilde\theta_+\right\| \geq t\right)
     \leq 2\Phi\left(ct\sqrt{k}\right)-1 + \frac{c}{k},
\end{eqnarray*}
and
\begin{eqnarray*}
    P\left(\left\| \hat\theta_--\tilde\theta_-\right\| \geq t\right)\leq 2\Phi\left(ct\sqrt{k}\right)-1 + \frac{c}{k}.
\end{eqnarray*}
Obtaining the bound for $m_k$ and $(\hat\theta_+,\hat\theta_-)$, one can further write $B_1$ as follows:
\begin{eqnarray*}
    B_1&=&\frac{m_k}{\sqrt{(\sigma^2+\sigma^2_{e+})}\|\hat{\theta}_--\hat{\theta}_+\|}
    -\frac{\sigma^2\log ((1-\pi )/\pi )}{\sqrt{(\sigma^2+\sigma^2_{e+})}\|\tilde\theta_--\tilde\theta_+\|}\\
    &=&\frac{m_k-\sigma^2\log((1-\pi )/\pi )}{ \sqrt{(\sigma^2+\sigma^2_{e+})}\|\tilde\theta_--\tilde\theta_+\| }+m_k\left(\frac{1}{\sqrt{(\sigma^2+\sigma^2_{e+})}\|\hat{\theta}_--\hat{\theta}_+\|}
    -\frac{1}{\sqrt{(\sigma^2+\sigma^2_{e+})}\|\tilde\theta_--\tilde\theta_+\|} \right)\\
    &=&\frac{m_k-\sigma^2\log((1-\pi )/\pi )}{ \sqrt{(\sigma^2+\sigma^2_{e+})}\|\tilde\theta_--\tilde\theta_+\| }+\sigma^2\log((1-\pi )/\pi )\left(\frac{1}{\sqrt{(\sigma^2+\sigma^2_{e+})}\|\hat{\theta}_--\hat{\theta}_+\|}
    -\frac{1}{\sqrt{(\sigma^2+\sigma^2_{e+})}\|\tilde\theta_--\tilde\theta_+\|} \right)\\
    &&+\underbrace{\left(m_k-\sigma^2\log((1-\pi )/\pi )\right)\left(\frac{1}{\sqrt{(\sigma^2+\sigma^2_{e+})}\|\hat{\theta}_--\hat{\theta}_+\|}
    -\frac{1}{\sqrt{(\sigma^2+\sigma^2_{e+})}\|\tilde\theta_--\tilde\theta_+\|} \right)}_{\triangleq R_1},
\end{eqnarray*}
where when $t\gg(\log k)/k$,
\begin{eqnarray*}
    P\left(|R_1|\geq t\right)\leq P(m_k>\sqrt{t})+P(\|\hat\theta_--\hat\theta_+\|>\sqrt{t})\leq \exp\left(-\frac{ct}{k}\right),
\end{eqnarray*}
which implies that $R_1$ is a remainder terms compared to the other terms in $B_1$.

After bounding the remainder term $R_1$ in $B_1$, we are now able to derive the Bahadur representation of $B_1$. For $z_k$, we have
\begin{eqnarray*}
    &&z_k-\log\left(\frac{(1-\pi )k+1}{\pi  k+1}\right)\\
    &=&\log\left( \frac{\#(y_i=-1)+1}{\#(y_i=+1)+1} \right)-\log\left(\frac{(1-\pi )k+1}{\pi  k+1}\right)\\
    &=& \frac{\#(y_i=-1)-(1-\pi )k}{\pi  k+1}\frac{\pi  k+1}{(1-\pi )k+1} + \frac{(1-\pi )k+1}{(\pi k+1)^2}(\#(y_i=+1)-\pi  k)\frac{\pi  k+1}{(1-\pi )k+1}+O\left( (\#(y_i=+1)/k-\pi)^2 \right)\\
    &=&\frac{\#(y_i=-1)-(1-\pi )k}{(1-\pi )k+1}+\frac{\#(y_i=+1)-\pi  k}{\pi  k+1}+O\left( (\#(y_i=+1)/k-\pi)^2 \right).
\end{eqnarray*}
Thus for $m_k$, we obtain that
\begin{eqnarray*}
    m_k-\sigma^2\log\left(\frac{1-\pi }{\pi }\right) = \sigma^2\left(\frac{\#(y_i=-1)-(1-\pi )k}{(1-\pi )k+1}+\frac{\#(y_i=+1)-\pi  k}{\pi  k+1}\right)+R_2,
\end{eqnarray*}
where when $t\gg (\log k)/k$, there exists some $c>0$ such that
\begin{eqnarray*}
    P(|R_2|\geq t)\leq \exp\left(-\frac{ct}{k}\right).
\end{eqnarray*}
On the other hand, for the term $\|\hat\theta_--\hat\theta_+\|$ in $B_1$, we have
\begin{eqnarray*}
    &&\|\hat\theta_--\hat\theta_+\|-\|\tilde\theta_--\tilde\theta_+\|\\
    &=& \left(\hat\theta_--\hat\theta_+-\tilde\theta_-+\tilde\theta_+\right)^{T}\frac{\tilde\theta_--\tilde\theta_+}{\|\tilde\theta_--\tilde\theta_+\|}+O(\|\hat\theta_--\hat\theta_+-\tilde\theta_-+\tilde\theta_+\|^2)\\
    &=&\left(\frac{1}{(1-\pi )k}\sum x_i 1\{y_i=-1\}- \frac{1}{\pi k}\sum x_i\{y_i=+1\}\right)^{T}\frac{\tilde\theta_--\tilde\theta_+}{\|\tilde\theta_--\tilde\theta_+\|}\\
    &&+\left(\frac{\tilde\theta_-}{(1-\pi )k}(\#(y_i=-1)-(1-\pi )k)- \frac{\tilde\theta_+}{\pi k}(\#(y_i=+1)-\pi k) \right)^{T}\frac{\tilde\theta_--\tilde\theta_+}{\|\tilde\theta_--\tilde\theta_+\|}-\|\tilde\theta_--\tilde\theta_+\|\\
    &&+O\left(\left\|\frac{1}{\pi k}\sum x_i1\{y_i=+1\}-\tilde\theta_+\right\|^2\right) + O\left( (\#(y_i=+1)-\pi k)^2 \right).
\end{eqnarray*}
As a result,
\begin{eqnarray}
    &&B_1\label{eqn:B1}\\
    &=& \frac{\sigma^2}{\sqrt{\sigma^2+\sigma_{e+}^2}\|\tilde\theta_--\tilde\theta_+\|}\left(\frac{\#(y_i=-1)-(1-\pi )k}{(1-\pi )k+1}+\frac{\#(y_i=+1)-\pi  k}{\pi  k+1}\right)\nonumber\\
    &&+\frac{\sigma^2\log((1-\pi )/\pi )}{\sqrt{\sigma^2+\sigma_{e+}^2}}\frac{1}{\|\tilde\theta_--\tilde\theta_+\|^2} \left(\frac{1}{(1-\pi )k}\sum x_i 1\{y_i=-1\}- \frac{1}{\pi k}\sum x_i\{y_i=+1\}\right)^{T}\frac{\tilde\theta_--\tilde\theta_+}{\|\tilde\theta_--\tilde\theta_+\|}\nonumber\\
    &&+\frac{\sigma^2\log((1-\pi )/\pi )}{\sqrt{\sigma^2+\sigma_{e+}^2}}\frac{1}{\|\tilde\theta_--\tilde\theta_+\|^2} \left(\frac{\tilde\theta_-}{(1-\pi )k}(\#(y_i=-1)-(1-\pi )k)- \frac{\tilde\theta_+}{\pi k}(\#(y_i=+1)-\pi k) \right)^{T}\frac{\tilde\theta_--\tilde\theta_+}{\|\tilde\theta_--\tilde\theta_+\|}\nonumber\\
    &&-\frac{\sigma^2\log((1-\pi )/\pi )}{\sqrt{\sigma^2+\sigma_{e+}^2}}\frac{1}{\|\tilde\theta_--\tilde\theta_+\|} +R_3.\nonumber
\end{eqnarray}
\begin{eqnarray*}
    P(|R_3|\geq t)\leq \exp\left(-\frac{ct}{k}\right).
\end{eqnarray*}
Similarly, for $B_2$, we have
\begin{eqnarray*}
    &&(\hat{\theta}_++\hat\theta_-)^\top\frac{\hat\theta_--\hat\theta_+}{\|\hat\theta_--\hat\theta_+\|}-(\tilde\theta_++\tilde\theta_-)^\top\frac{(\tilde\theta_--\tilde\theta_+)}{\|\tilde\theta_--\tilde\theta_+\|}\\
    &=&(\tilde\theta_++\tilde\theta_-)^\top\left( \frac{I}{\| \tilde\theta_--\tilde\theta_+\|} -\frac{(\tilde\theta_--\tilde\theta_+)(\tilde\theta_--\tilde\theta_+)^\top}{\|\tilde\theta_--\tilde\theta_+\|^3}\right)\left((\hat\theta_--\hat\theta_+)-(\tilde\theta_--\tilde\theta_+)\right)\\
    &&+\left((\hat{\theta}_++\hat\theta_-)-(\tilde\theta_++\tilde\theta_-)\right)^\top\frac{(\tilde\theta_--\tilde\theta_+)}{\|\tilde\theta_--\tilde\theta_+\|}+O\left(\left\|\frac{1}{\pi k}\sum x_i1\{y_i=+1\}-\tilde\theta_+\right\|^2\right) + O\left( (\#(y_i=+1)-\pi k)^2 \right),
\end{eqnarray*}
with
\begin{eqnarray*}
    &&(\hat\theta_--\hat\theta_+)-(\tilde\theta_--\tilde\theta_+)\\
    &=&\frac{1}{(1-\pi )k}\sum x_i 1\{y_i=-1\}- \frac{1}{\pi k}\sum x_i\{y_i=+1\}\\
    &&+\frac{\tilde\theta_-}{(1-\pi )k}(\#(y_i=-1)-(1-\pi )k)- \frac{\tilde\theta_+}{\pi k}(\#(y_i=+1)-\pi k)-(\tilde\theta_--\tilde\theta_+)\\
    &&+O\left(\left\|\frac{1}{\pi k}\sum x_i1\{y_i=+1\}-\tilde\theta_+\right\|^2\right) + O\left( (\#(y_i=+1)-\pi k)^2 \right)\\
    &\triangleq& \frac{1}{(1-\pi)k}\sum \xi^-_i - \frac{1}{\pi k}\sum \xi^+_i\\
    &&+O\left(\left\|\frac{1}{\pi k}\sum x_i1\{y_i=+1\}-\tilde\theta_+\right\|^2\right) + O\left( (\#(y_i=+1)-\pi k)^2 \right).
\end{eqnarray*}
and
\begin{eqnarray*}
    &&(\hat\theta_-+\hat\theta_+)-(\tilde\theta_-+\tilde\theta_+)\\
    &=&\frac{1}{(1-\pi )k}\sum x_i 1\{y_i=-1\}+ \frac{1}{\pi k}\sum x_i\{y_i=+1\}\\
    &&+\frac{\tilde\theta_-}{(1-\pi )k}(\#(y_i=-1)-(1-\pi )k)+ \frac{\tilde\theta_+}{\pi k}(\#(y_i=+1)-\pi k)-(\tilde\theta_-+\tilde\theta_+)\\
    &&+O\left(\left\|\frac{1}{\pi k}\sum x_i1\{y_i=+1\}-\tilde\theta_+\right\|^2\right) + O\left( (\#(y_i=+1)-\pi k)^2 \right)\\
    &\triangleq& \frac{1}{(1-\pi)k}\sum \xi^-_i + \frac{1}{\pi k}\sum \xi^+_i\\
    &&+O\left(\left\|\frac{1}{\pi k}\sum x_i1\{y_i=+1\}-\tilde\theta_+\right\|^2\right) + O\left( (\#(y_i=+1)-\pi k)^2 \right).
\end{eqnarray*}
Consequently,
\begin{eqnarray}
    B_2&=&\frac{\theta_+^e-(\hat\theta_++\hat\theta_-)/2}{\sqrt{(\sigma^2+\sigma^2_{e+})}}\frac{\hat{\theta}_--\hat{\theta}_+}{\|\hat{\theta}_--\hat{\theta}_+\|}-\frac{(\theta_+^e-(\tilde\theta_++\tilde\theta_-)/2)^\top(\tilde\theta_+-\tilde\theta_-)}{\|\tilde\theta_+-\tilde\theta_-\|\sqrt{(\sigma^2+\sigma^2_{e+})}} \nonumber\\
    &=&\frac{1}{\sqrt{\sigma^2+\sigma_{e+}^2}}\left(\theta_+^e-\frac{\tilde\theta_++\tilde\theta_-}{2}\right)^\top\left( \frac{I}{\| \tilde\theta_--\tilde\theta_+\|} -\frac{(\tilde\theta_--\tilde\theta_+)(\tilde\theta_--\tilde\theta_+)^\top}{\|\tilde\theta_--\tilde\theta_+\|^3}\right)\left(\frac{1}{(1-\pi)k}\sum \xi^-_i + \frac{1}{\pi k}\sum \xi^+_i\right)\nonumber\\
    &&-\frac{1}{2\sqrt{\sigma^2+\sigma_{e+}^2}}\frac{(\tilde\theta_--\tilde\theta_+)^{\top}}{\|\tilde\theta_--\tilde\theta_+\|}\left(\frac{1}{(1-\pi)k}\sum \xi^-_i + \frac{1}{\pi k}\sum \xi^+_i\right)\nonumber\\
    &&+O\left(\left\|\frac{1}{\pi k}\sum x_i1\{y_i=+1\}-\tilde\theta_+\right\|^2\right) + O\left( (\#(y_i=+1)-\pi k)^2 \right).\label{eqn:B2}
\end{eqnarray}

Finally, after obtaining the Bahadur representation of $B_1$ and $B_2$, one can obtain that
\begin{eqnarray*}
    &&P(correct|y=+1, \{(x_i,y_i)\},M)-P_+^*\\
    &=&-\phi\left(  \frac{\sigma^2\log ((1-\pi )/\pi )}{\sqrt{(\sigma^2+\sigma^2_{e+})}\|\tilde\theta_--\tilde\theta_+\|}-
  \frac{(\theta_+^e-(\tilde\theta_++\tilde\theta_-)/2)^\top(\tilde\theta_+-\tilde\theta_-)}{\|\tilde\theta_+-\tilde\theta_-\|\sqrt{(\sigma^2+\sigma^2_{e+})}}  
    \right)(B_1-B_2)\\
    &&+O\left(\phi'\left(  \frac{\sigma^2\log((1-\pi )/\pi )}{\sqrt{(\sigma^2+\sigma^2_{e+})}\|\theta_-^e-\theta_+^e\|}-
   \frac{(\theta_+^e-(\tilde\theta_++\tilde\theta_-)/2)^\top(\tilde\theta_+-\tilde\theta_-)}{\|\tilde\theta_+-\tilde\theta_-\|\sqrt{(\sigma^2+\sigma^2_{e+})}} \right)(B_1-B_2)^2
    \right)\\
    &&+O\left(\frac{c\log k}{k}\right)+O\left( P\left(\mathcal{X}^2_m >\frac{c\log k}{k}(\pi k-c_k\sqrt{k}\log k) \right) \right).
\end{eqnarray*}

\end{proof}

\begin{lemma}\label{lem:m_k}
    Under the conditions of Theorem \ref{thm:icl_acc}, there exists some constant $c>0$ such that when $t\gg 1/k$,
\begin{eqnarray*}
    P\left(\left| m_k- \sigma^2\log\left(\frac{\pi }{1-\pi }\right)\right|\geq t \right)\leq \exp\left( -c t^2 k \right)+\frac{c}{k}.
\end{eqnarray*}
\end{lemma}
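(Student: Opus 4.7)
The plan is to split $m_k - \sigma^2\log\bigl(\pi/(1-\pi)\bigr)$ into two pieces according to its definition,
$$m_k - \sigma^2\log\left(\tfrac{\pi}{1-\pi}\right) = \sigma^2\!\left(z_k - \log\tfrac{(1-\pi)}{\pi}\right) \;-\; \tfrac{\sigma^2 m}{2}\log\!\left(\tfrac{\sigma^2+\sigma_{\theta_-}^2}{\sigma^2+\sigma_{\theta_+}^2}\right),$$
(matching the sign convention that $z_k\to\log((1-\pi)/\pi)$) and control each piece separately, then combine via a union bound. The first piece is a concentration statement about a log of a ratio of binomial counts; the second piece is deterministic once $\#(y_i=\pm1)\asymp k$, since each $\sigma_{\theta_\pm}^2=O(1/k)$ by Lemma~\ref{lem:posterior_param}.

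For the variance-ratio piece, I would first invoke a Chernoff bound on $\#(y_i=+1)\sim\mathrm{Bin}(k,\pi)$ to assert that on an event $\mathcal{E}_0$ of probability at least $1-e^{-ck}$, both $\#(y_i=+1)$ and $\#(y_i=-1)$ lie in $[\tfrac{\pi}{2}k,\,\tfrac{2-\pi}{2}k]\cdot$ (or analogous balanced windows). On $\mathcal{E}_0$ one has $\sigma_{\theta_\pm}^2\le C/k$, and a first order Taylor expansion of $\log(1+u)$ gives $\bigl|\tfrac{\sigma^2 m}{2}\log((\sigma^2+\sigma_{\theta_-}^2)/(\sigma^2+\sigma_{\theta_+}^2))\bigr|\le C'/k$. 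Thus whenever $t\gg 1/k$, this deterministic piece is already smaller than $t/2$ on $\mathcal{E}_0$ and contributes only the $e^{-ck}$ term (absorbed into $e^{-ct^2 k}$ since $t^2 k\le k$ when $t$ is bounded).

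For the $z_k$ piece, the key identity is
$$z_k - \log\tfrac{1-\pi}{\pi} = \log\!\left(1 + \tfrac{\#(y_i=-1)-(1-\pi)k}{(1-\pi)k+1}\right) - \log\!\left(1 + \tfrac{\#(y_i=+1)-\pi k}{\pi k+1}\right) + O(1/k),$$
where the $O(1/k)$ comes from the $+1$ offsets inside the logs. On the event $\mathcal{E}_0$ both Taylor-expansion arguments lie in a compact subset of $(-1,\infty)$, so $|z_k-\log((1-\pi)/\pi)|\le C\bigl|\#(y_i=+1)/k-\pi\bigr| + C/k$. Hoeffding's inequality then gives $P\bigl(\bigl|\#(y_i=+1)/k-\pi\bigr|\ge t/(2C\sigma^2)\bigr)\le 2\exp(-c t^2 k)$. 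Combining with the $O(1/k)$ deterministic remainder (which is absorbed into the error for $t\gg 1/k$) and the $e^{-ck}$ probability of $\mathcal{E}_0^c$, a union bound yields the required tail $\exp(-c t^2 k)+c/k$, with the additive $c/k$ term absorbing all $O(1/k)$ deterministic remainders and any slack from the $+1$ offsets and from higher-order Taylor remainders.

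The main obstacle I anticipate is bookkeeping: I have to make sure that the Taylor-expansion error terms, the $+1$ offsets (which are why the bound loses only $O(1/k)$ rather than being purely sub-Gaussian), and the boundary event $\mathcal{E}_0^c$ all aggregate into the clean form $\exp(-c t^2 k)+c/k$. The regime $t\gg 1/k$ is exactly what lets the $O(1/k)$ remainders be dominated by $t$, so it is essential that I use this hypothesis at the right step (when converting $|z_k-\log((1-\pi)/\pi)|\le C|\hat\pi-\pi|+C/k$ into a tail bound on $|z_k-\log((1-\pi)/\pi)|\ge t$).
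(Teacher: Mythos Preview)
Your proposal is correct and takes essentially the same approach as the paper: split $m_k$ into the $z_k$ piece and the variance-ratio piece, use Hoeffding/Chernoff concentration on the binomial count $\#(y_i=+1)$ to control the former, and observe that on a high-probability event the latter is deterministically $O(1/k)$ (the paper uses a $\sqrt{k\log k}$ threshold rather than your constant-fraction Chernoff window, and manipulates the log ratio via $|\log x|\ge t\Rightarrow|x-1|\ge 1-e^{-t}$ rather than Taylor expansion, but these are cosmetic differences). One minor remark: the $c/k$ additive term in the paper arises from the failure probability of the threshold event on $\#(y_i=\pm1)$, not from deterministic $O(1/k)$ remainders as you suggest---those remainders are already swallowed by $t$ under the hypothesis $t\gg 1/k$ and contribute nothing to the probability bound; your accounting here is slightly off but harmlessly conservative.
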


\begin{proof}[Proof of Lemma \ref{lem:m_k}]
Using Hoeffding inequality, we know that for $t>0$,
\begin{eqnarray*}
    &&P\left(|\#(y_i=+1)-\pi k|\geq t\right)\leq 2\exp\left(-\frac{2t^2}{k}\right),\\
    &&P\left(|\#(y_i=-1)-(1-\pi ) k|\geq t\right)\leq 2\exp\left(-\frac{2t^2}{k}\right),
\end{eqnarray*}
which implies that
\begin{eqnarray}
    &&P\left( \left|z_k-\log\left(\frac{(1-\pi  )k+1}{\pi  k+1}\right)\right| \geq t\right)\nonumber\\
    &=&P\left( \left|\log\left( \frac{\#(y_i=-1)+1}{\#(y_i=+1)+1} \frac{\pi  k+1}{(1-\pi  )k+1}\right)\right| \geq t\right)\nonumber\\
    &=&P\left( \exp(-t)\leq  \frac{\#(y_i=-1)+1}{\#(y_i=+1)+1} \frac{\pi  k+1}{(1-\pi  )k+1}\leq \exp(t)\right)\nonumber\\
    &\leq&P\left(  \left| \frac{\#(y_i=-1)+1}{\#(y_i=+1)+1} \frac{\pi  k+1}{(1-\pi  )k+1}-1\right|\geq 1-\exp(-t)\right)\nonumber\\
    &\leq& P\left( \left| \frac{\#(y_i=-1)+1}{(1-\pi  )k+1}-1\right|\geq {1-\exp(-t)} \right)+P\left(  \left| \frac{\pi  k+1}{\#(y_i=+1)+1} -1\right|\geq {1-\exp(-t)}\right)\nonumber\\
    &\leq&P\left( \left|\#(y_i=-1)-(1-\pi  )k \right|\leq{1-\exp(-t)}[(1-\pi )k+1 ] \right)\nonumber\\
    &&+P\left( \left| \#(y_i=+1)-\pi  k \right|\leq {1-\exp(-t)}[\pi  k+1] \right)\nonumber\\
    &\leq& 2\exp\left(-\frac{2(1-\exp(-t))^2[(1-\pi )k+1]^2}{k}\right)+2\exp\left(-\frac{2(1-\exp(-t))^2[\pi k+1]^2}{k}\right)\nonumber\\
    &\leq&2\exp\left(-\frac{2t^2[(1-\pi )k+1]^2}{k}\right)+2\exp\left(-\frac{2t^2[\pi k+1]^2}{k}\right).\label{eqn:z_k}
\end{eqnarray}
On the other hand, in terms of $\sigma_{\theta_-}^2$ and $\sigma_{\theta_+}^2$, recall that
\begin{eqnarray*}
    \sigma_{\theta_+}^2=\frac{\sigma_+^2\sigma_M^2}{\sigma_+^2+\#(y_i=+1)\sigma_M^2},\;\sigma_{\theta_-}^2=\frac{\sigma_-^2\sigma_M^2}{\sigma_-^2+\#(y_i=-1)\sigma_M^2},
\end{eqnarray*}
thus using the probability bound for $\#(y_i=+1)$ and $\#(y_i=-1)$, we can also obtain that
\begin{eqnarray*}
    &&P\bigg( \bigg| \sigma_{\theta_+}^2-\underbrace{\frac{\sigma_+^2\sigma_M^2}{\sigma_+^2+\pi  k\sigma_M^2}}_{\triangleq (\sigma_{\theta_+}^*)^2} \bigg|\geq t \bigg)\\
    &\leq& P\left( \sigma_+^2\sigma_M^2\left(\frac{1}{\sigma_+^2+(\pi  k+\sqrt{k\log k})\sigma_M^2}\right)^2\left| \#(y_i=+1)-\pi  k \right|\geq t \right)+P\left( \left| \#(y_i=+1)-\pi  k \right|\geq \sqrt{k\log k} \right)\\
    &\leq&2\exp\left(-\frac{2t^2}{k}\frac{1}{\sigma_+^2\sigma_M^2}\left({\sigma_+^2+(\pi  k+\sqrt{k\log k})\sigma_M^2)}\right)^2\right)+\frac{2}{k},
\end{eqnarray*}
and
\begin{eqnarray*}
    P\bigg( \bigg| \sigma_{\theta_-}^2-\underbrace{\frac{\sigma_-^2\sigma_M^2}{\sigma_-^2+(1-\pi ) k\sigma_M^2}}_{\triangleq (\sigma_{\theta_-}^*)^2} \bigg|\geq t \bigg)\leq 2\exp\left(-\frac{2t^2}{k}\frac{1}{\sigma_-^2\sigma_M^2}\left({\sigma_0^2+((1-\pi ) k+\sqrt{k\log k})\sigma_M^2)}\right)^2\right)+\frac{2}{k}.
\end{eqnarray*}
Thus following similar steps as to bound $z_k$, recalling that $\sigma_+^2=\sigma_-^2=\sigma^2$, we obtain
\begin{eqnarray}
    &&P\left(\left|\log\left(\frac{\sigma^2+\sigma_{\theta_-}^2}{\sigma^2+\sigma_{\theta_+}^2}\right) -\log\left( \frac{\sigma^2+(\sigma_{\theta_-}^*)^2}{\sigma^2+(\sigma_{\theta_+}^*)^2} \right) \right|\geq t\right)\nonumber\\
    &\leq& 2\exp\left(-\frac{2t^2(\sigma^2+(\sigma_{\theta_+}^*)^2)}{k}\frac{1}{\sigma^2\sigma_M^2}\left({\sigma^2+(\pi  k+\sqrt{k\log k})\sigma_M^2)}\right)^2\right)\nonumber\\
    &&+2\exp\left(-\frac{2t^2(\sigma^2+(\sigma_{\theta_-}^*)^2)}{k}\frac{1}{\sigma^2\sigma_M^2}\left({\sigma^2+((1-\pi ) k+\sqrt{k\log k})\sigma_M^2)}\right)^2\right)+\frac{4}{k}.\label{eqn:m_k_1}
\end{eqnarray}
Besides, when $k$ is large enough, it is easy to see that for some $c>0$,
\begin{eqnarray}
    \log\left( \frac{\sigma^2+(\sigma_{\theta_-}^*)^2}{\sigma^2+(\sigma_{\theta_+}^*)^2}\right) \leq \frac{c}{k}.\label{eqn:m_k_2}
\end{eqnarray}

Aggregating (\ref{eqn:z_k}), (\ref{eqn:m_k_1}), and (\ref{eqn:m_k_2}), there exists some constant $c>0$ such that when $t\gg 1/k$,
\begin{eqnarray}
    P\left(\left| m_k- \sigma^2\log\left(\frac{\pi }{1-\pi }\right)\right|\geq t \right)\leq \exp\left( -c t^2 k \right)+\frac{c}{k}.\label{eqn:m_k}
\end{eqnarray}
\end{proof}

\begin{lemma}\label{lem:hat_theta}
    Under the conditions in Theorem \ref{thm:icl_acc},
    \begin{eqnarray*}
     P\left(\left\| \hat\theta_+-\tilde\theta_+\right\| \geq t\right)
     \leq 2\Phi\left(ct\sqrt{k}\right)-1 + \frac{c}{k},
\end{eqnarray*}
and
\begin{eqnarray*}
    P\left(\left\| \hat\theta_--\tilde\theta_-\right\| \geq t\right)\leq 2\Phi\left(ct\sqrt{k}\right)-1 + \frac{c}{k}.
\end{eqnarray*}
\end{lemma}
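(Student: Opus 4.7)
The plan is to separate the two sources of randomness in $\hat\theta_+$ — the count $N_+ \triangleq \#(y_i=+1)$ and the conditional sum $S_+ \triangleq \sum_{y_i=+1} x_i$ — and control each contribution individually. Using Lemma \ref{lem:posterior_param} I would rewrite
$$
\hat\theta_+ \;=\; \frac{\sigma_+^2 \theta_M + \sigma_M^2 S_+}{\sigma_+^2 + N_+ \sigma_M^2},
$$
and introduce $\mu_+ \triangleq p_+^e \theta_+^e + (1-p_+^e)\theta_-^e$, the marginal mean of $x_i$ given $y_i=+1$ after integrating out both the label-noise indicator and the prior on $\theta_+,\theta_-$. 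Conditional on $N_+$, the vectors $\{x_i\}_{y_i=+1}$ are i.i.d.\ samples from a Gaussian mixture with mean $\mu_+$ and bounded covariance, and are therefore sub-Gaussian with a fixed constant parameter depending only on $\sigma^2,\sigma^2_{e+},\sigma^2_{e-}$ and on $\|\theta_+^e-\theta_-^e\|$.

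The next step is the decomposition
$$
\hat\theta_+ - \tilde\theta_+ \;=\; \underbrace{\frac{\sigma_M^2 (S_+ - N_+\mu_+)}{\sigma_+^2 + N_+\sigma_M^2}}_{(A)} \;+\; \underbrace{\mathbb{E}\!\left[\hat\theta_+ \,\big|\, N_+\right] - \tilde\theta_+}_{(B)}.
$$
For $(A)$, I would condition on $N_+$ and apply a sub-Gaussian tail bound to $S_+ - N_+\mu_+$. On the high-probability event $\mathcal{E}_k \triangleq \{|N_+ - \pi k| \le \sqrt{k \log k}\}$, which by Hoeffding (as already used in Lemma \ref{lem:m_k}) has complement probability $O(1/k)$, the prefactor $\sigma_M^2/(\sigma_+^2 + N_+\sigma_M^2)$ is of order $1/k$, while the norm $\|S_+ - N_+\mu_+\|$ is sub-Gaussian of order $\sqrt{N_+} = O(\sqrt{k})$. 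Multiplying these gives that $(A)$ is sub-Gaussian with scale of order $1/\sqrt{k}$, producing a tail of the form $\Phi(-ct\sqrt{k})$.

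For $(B)$, I would observe that $\mathbb{E}[\hat\theta_+\mid N_+]$ is the deterministic function $g(N_+) = \mu_+ + \sigma_+^2(\theta_M - \mu_+)/(\sigma_+^2 + N_+\sigma_M^2)$, whose derivative is of order $1/k^2$. A first-order Taylor expansion about $N_+ = \pi k$ together with the Hoeffding bound on $|N_+-\pi k|$ shows $(B)$ is of strictly smaller order than $(A)$ on $\mathcal{E}_k$ and is absorbed into the $c/k$ remainder. The argument for $\hat\theta_-$ is symmetric after exchanging the roles of the two classes.

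The main obstacle I anticipate is the nonlinear coupling between the random denominator $\sigma_+^2 + N_+\sigma_M^2$ and the random numerator $S_+$: the ratio is not a linear statistic of the examples, so one must work on the event $\mathcal{E}_k$ and propagate Hoeffding fluctuations of $N_+$ carefully through both the prefactor of $(A)$ and the Taylor remainder of $(B)$. The label-noise mixture structure enters only through $\mu_+$ and through the conditional variance, and since $p_+^e,p_-^e$ are fixed this affects only the implicit constant $c$ rather than the shape of the bound.
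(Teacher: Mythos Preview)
Your approach is essentially the same as the paper's: both condition on $N_+=\#(y_i=+1)$, control the conditional fluctuation of the sum $S_+$, and then restrict to the Hoeffding event $|N_+-\pi k|\le\sqrt{k\log k}$ to integrate out $N_+$. The differences are minor. First, the paper simplifies $\hat\theta_+$ to the sample mean $S_+/N_+$ (dropping the shrinkage toward $\theta_M$), whereas you keep the full posterior mean from Lemma~\ref{lem:posterior_param} and explicitly split off the bias term $(B)$; your treatment is more careful here, and the extra term you isolate is indeed $O(1/k)$ on $\mathcal{E}_k$ and harmless. Second, for the conditional fluctuation the paper invokes Berry--Esseen to get the Gaussian-shaped tail plus a $c/\sqrt{N_+}$ correction, while you use a direct sub-Gaussian tail bound; either tool works since the $x_i$'s are Gaussian mixtures with bounded parameters, and your route avoids the Berry--Esseen remainder. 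Overall the skeleton is the same and your version is a slightly more meticulous execution of it.
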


\begin{proof}[Proof of Lemma \ref{lem:hat_theta}]
For $\hat\theta_+$, we have
\begin{eqnarray*}
    \hat{\theta}_+=\frac{\sum_{i=1}^k x_i1\{y_i=+1\} }{\#(y_i=+1)},
\end{eqnarray*}
thus fixing $\#(y_i=+1)$ and applying Berry–Esseen inequality, we have for some constant $c>0$,
\begin{eqnarray*}
    P\left(\left\| \hat\theta_+-\tilde\theta_+\right\| \leq t\,\bigg| \,\#(y_i=+1)\right)\leq 2\Phi\left( ct\sqrt{\#(y_i=+1)} \right)-1+\frac{c}{\sqrt{\#(y_i=+1)} }.
\end{eqnarray*}
Integrating all possible $\#(y_i=+1)$, we have for some $c>0$,
\begin{eqnarray*}
     &&P\left(\left\| \hat\theta_+-\tilde\theta_+\right\| \geq t\right)\\
     &=&\sum_{\#(y_i=+1)=j}P\left(\left\| \hat\theta_+-\tilde\theta_+\right\| \leq t\,\bigg| \,\#(y_i=+1)=j\right)P(\#(y_i=+1)=j)\\
     &\leq&\sum_{|\#(y_i=+1)-\pi  k|< \sqrt{k\log k}} P\left(\left\| \hat\theta_+-\tilde\theta_+\right\| \leq t\,\bigg| \,\#(y_i=+1)=j\right)P(\#(y_i=+1)=j)\\
     &&+P(|\#(y_i=+1)-\pi  k|\geq \sqrt{k\log k})\\
     &\leq&2\Phi\left(ct\sqrt{k}\right)-1 + \frac{c}{k}.
\end{eqnarray*}
And similarly,
\begin{eqnarray*}
    P\left(\left\| \hat\theta_--\tilde\theta_-\right\| \geq t\right)\leq 2\Phi\left(ct\sqrt{k}\right)-1 + \frac{c}{k}.
\end{eqnarray*}    
\end{proof}

\subsection{Proof of Proposition \ref{prop:contradict}}\label{sec:appendix:proof:contradict}

\begin{proof}[Proof of Proposition \ref{prop:contradict}]

We first consider the case where $k\sigma_M^2\gg \sigma^2$, which can directly utilize the result from Proposition \ref{prop:asymp}. Then, we turn to the regime where $k\sigma_M^2\ll \sigma^2$.

\textbf{Regime 1: $k\sigma_M^2\gg \sigma^2$.} Recall that
\begin{eqnarray*}
    &&P_+^*=
    \left(1-\Phi\left(  \frac{\sigma^2\log ((1-\pi)/\pi)}{\sqrt{(\sigma^2+\sigma^2_{e+})}\|\tilde\theta_--\tilde\theta_+\|}-
    \frac{(\theta_+^e-(\tilde\theta_++\tilde\theta_-)/2)^\top(\tilde\theta_+-\tilde\theta_-)}{\|\tilde\theta_+-\tilde\theta_-\|\sqrt{(\sigma^2+\sigma^2_{e+})}} 
    \right)\right),
\end{eqnarray*}
and
\begin{eqnarray*}
    &&P_-^*= 
    \Phi\left(  \frac{\sigma^2\log ((1-\pi)/\pi)}{\sqrt{(\sigma^2+\sigma^2_{e-})}\|\tilde\theta_--\tilde\theta_+\|}-
  \frac{(\theta_-^e-(\tilde\theta_++\tilde\theta_-)/2)^\top(\tilde\theta_+-\tilde\theta_-)}{\|\tilde\theta_+-\tilde\theta_-\|\sqrt{(\sigma^2+\sigma^2_{e-})}} 
    \right),
\end{eqnarray*}
and based on Proposition \ref{prop:asymp}, 
$$P(correct|y=+1,\Theta, \{(x_i,y_i)\},M)\rightarrow P_+^*,$$
$$P(correct|y=-1,\Theta, \{(x_i,y_i)\},M)\rightarrow P_-^*.$$
Therefore, we directly check how contradict and matched knowledge affects $P_+^*$ and $P_-^*$ respectively.

\begin{itemize}
    \item Contradict knowledge: When $\pi=1/2$, $p_+=p_-=1$, and $\theta_-^e=\theta_M=-\theta_+^e$, we have
\begin{eqnarray*}
    P_+^*= 1-\Phi \left(\frac{-\|\theta_M\|_2}{\sqrt{\sigma^2+\sigma_{e+}^2}}\right).
\end{eqnarray*}
    \item Matched knowledge: When $\pi=1/2$, $p_+=p_-=1$, and $\theta_-^e=-\theta_M=-\theta_+^e$,we have
\begin{eqnarray*}
    P_+^*= 1-\Phi \left(\frac{-\|\theta_M\|_2}{\sqrt{\sigma^2+\sigma_{e+}^2}}\right).
\end{eqnarray*}
\end{itemize}
\textbf{Regime 2: $k\sigma_M^2\ll \sigma^2$.} 
\begin{itemize}
    \item Matched knowledge:
    Recall that
$$\theta_+\sim N\left(\frac{\sigma_+^2\theta_M+\sigma_M^2\sum_{y_i=+1}x_i}{\sigma_+^2+\#(y_i=+1)\sigma^2_M},\frac{\sigma_+^2\sigma_M^2}{\sigma_+^2+\#(y_i=+1)\sigma^2_M}I\right)\triangleq N(\hat{\theta}_+,\sigma^2_{\theta_+}I),$$
and
$$\theta_-\sim N\left(\frac{\sigma_M^2\sum_{y_i=-1}x_i-\sigma_-^2\theta_M}{\sigma_-^2+\#(y_i=-1)\sigma_M^2},\frac{\sigma_M^2\sigma_-^2}{\sigma_-^2+\#(y_i=-1)\sigma_M^2}I\right)\triangleq N(\hat{\theta}_-,\sigma^2_{\theta_-}I).$$
When $k\sigma_M^2\ll \sigma^2$,
\begin{eqnarray*}
    \hat\theta_+\rightarrow \theta_M,\qquad \hat\theta_-\rightarrow-\theta_M.
\end{eqnarray*}
As a result,
\begin{eqnarray*}
    P(correct|y=+1,\Theta, \{(x_i,y_i)\},M)\rightarrow1-\Phi \left(-\frac{\|\theta_M\|_2}{\sqrt{\sigma^2+\sigma_{e+}^2}}\right).
\end{eqnarray*}
\item Contradict knowledge: One can follow similar arguments as the above to obtain
\begin{eqnarray*}
    P(correct|y=+1,\Theta, \{(x_i,y_i)\},M)\rightarrow 1-\Phi \left(\frac{\|\theta_M\|_2}{\sqrt{\sigma^2+\sigma_{e+}^2}}\right).
\end{eqnarray*}
\end{itemize}

\end{proof}

\subsection{Proof of Proposition \ref{prop:imbalance}}\label{sec:appendix:proof:imbalance}
\begin{proof}[Proof of Proposition \ref{prop:imbalance}]
One can directly obtain the result by discussing the value of $\pi$ in $P^*_+$ and $P^*_-$.

\end{proof}

\subsection{Proof of Proposition \ref{prop:noise}}\label{sec:appendix:proof:noise}
\begin{proof}[Proof of Proposition \ref{prop:noise}]

Following the definition of $\hat\theta_+$ and $\hat\theta_-$, when taking $k\rightarrow\infty$ and $\theta_+^e=\theta_M=-\theta_-^e$, we  obtain
 \begin{eqnarray*}
   \hat\theta_-+\hat\theta_+&\rightarrow& (1+p_+^e-p_-^e)\theta_+^e+(1-p_+^e+p_-^e)\theta_-^e=2(p_+^e-p_-^e)\theta_M,\\
    \hat\theta_--\hat\theta_+&\rightarrow&
    (1-p_+^e-p_-^e)(\theta_+^e-\theta_-^e)=2(1-p_+^e-p_-^e)\theta_M.
\end{eqnarray*}   
Taking the above into the formula of $P(correct|y=+1)$ in Theorem \ref{thm:icl_acc}, we obtain

\begin{eqnarray*}
    P(correct|y=+1, \{(x_i,y_i)\},M)&=&1-\Phi\left(
    \frac{\theta^e_+-(\hat\theta_++\hat\theta_-)/2}{\sqrt{(\sigma^2+\sigma^2_{e+})}}\frac{\hat{\theta}_--\hat{\theta}_+}{\|\hat{\theta}_--\hat{\theta}_+\|}+\frac{(\sigma^2+\sigma^2_{\theta})\log z_k}{\|\hat{\theta}_--\hat{\theta}_+\|\sqrt{(\sigma^2+\sigma^2_{e+})}}
    \right)\\
    &=&1-\Phi\left(C_1(1-p_+^e+p_-^e)\|\theta_M\|\text{sign}(1-p_+^e-p_-^e)+C_2\frac{\log z_k}{|1-p_+^e-p_-^e|}
    \right),
\end{eqnarray*}
and
    \begin{eqnarray*}
    P(correct|y=-1, \{(x_i,y_i)\},M)&=&
    \Phi\left(\frac{(\theta^e_--(\hat{\theta}_++\hat{\theta}_-)/2)^T}{\sqrt{\sigma^2+\sigma^2_{e-}}}\frac{\hat{\theta}_--\hat{\theta}_+}{\|\hat{\theta}_--\hat{\theta}_+\|_2}+\frac{(\sigma^2+\sigma^2_{\theta})\log z_k}{\|\hat{\theta}_--\hat{\theta}_+\|\sqrt{(\sigma^2+\sigma^2_{e+})}}\right)\\
    &=&
    \Phi\left(-C_1(1+p_+^e-p_-^e)\text{sign}(1-p_+^e-p_-^e) +C_2\frac{\log z_k}{|1-p_+^e-p_-^e|}\right),
\end{eqnarray*}
where $C_1=2\frac{\|\theta_M\|}{\sqrt{(\sigma^2+\sigma^2_{e+})}}>0, C_2=\frac{\sigma^2}{\sqrt{(\sigma^2+\sigma^2_{e+})}\|\theta_M\|}>0$.
\end{proof}

\subsection{Proof of Theorem \ref{them: dependent}} \label{app: proof dependent}

\begin{proof}[Proof of Theorem \ref{them: dependent}]
    During pre-training, since there are $k$ examples and one test sample in the prompt, the fraction of positive labels can only take values in the form of $i/(k+1)$ for $i=0,\ldots,k$, rather than a continuous variable in $[0,1]$. As a result, to connect the distribution of $frac$ with the Beta distribution, we assume $k+1$ is odd and denote $B$ as a random variable following $\text{Beta}(\alpha,\beta)$. Then we set the following:
    \begin{eqnarray*}
        P(frac=i/(k+1)|M)=\begin{cases}
            P\left(B<\frac{1}{2(k+1)}\right) & i=0\\
            P\left( \frac{2i-1}{2(k+1)}\leq B<\frac{2i+1}{(2k+1)} \right) & 0<i<k+1\\
            P\left( B\geq \frac{2k+1}{2(k+1)} \right) & i=k+1
        \end{cases}.
    \end{eqnarray*}
    In addition, we assume that all $y_i$s and $y$ have equal chance of being positive.
    
    Based on our assumption, when LLM learns from the pre-training data, it can exactly learn the distribution of $frac$, and use the likelihood to make a decision when receiving the testing data.

    In the testing stage, when receiving $\{(x_i,y_i)\}_{i\in[k]}$ and $x$. From the definition of conditional probability, we know that
\begin{eqnarray*}
    P(y=+1|x,\{(x_i,y_i)\},M)=\frac{P((x,y=+1)|\{(x_i,y_i)\},M)}{P((x,y=+1)|\{(x_i,y_i)\},M)+P((x,y=-1)|\{(x_i,y_i)\},M)},
\end{eqnarray*}
where 
\begin{equation*}
    \begin{aligned}
        &P((x,y=+1)|\{(x_i,y_i)\},M)=P(x|y=+1, \{(x_i,y_i)\},M)P(y=+1|\{(x_i,y_i)\},M).
    \end{aligned}
\end{equation*}
    From the above, we need to figure out the following quantity:
    \begin{eqnarray*}
        &&P((x,y=+1)|\{(x_i,y_i)\},M)\\
        &=&P(x|y=+1, {(x_i,y_i)},M)P(y=+1|\{(x_i,y_i)\},M)\\
        &=&P(x|y=+1, \{(x_i,y_i)\},M)P\left(frac=\frac{1+\#(y_i=+1)}{k+1}\bigg|\{(x_i,y_i)\},M\right),
    \end{eqnarray*}
    where
    \begin{eqnarray}
&&P\left(frac=\frac{1+\#(y_i=+1)}{k+1}\bigg|\{(x_i,y_i)\},M\right)\nonumber\\
&=& \frac{P\left(frac=\frac{1+\#(y_i=+1)}{k+1},\{(x_i,y_i)\}\bigg|M\right)}{P\left(frac=\frac{1+\#(y_i=+1)}{k+1},\{(x_i,y_i)\}\bigg|M\right)+P\left(frac=\frac{\#(y_i=+1)}{k+1},\{(x_i,y_i)\}\bigg|M\right)}.\label{eqn:frac}
\end{eqnarray}
To calculate $P(frac=(1+\#(y_i=+1))/{(k+1)},\{(x_i,y_i)\}|M)$, when $frac=(1+\#(y_i=+1))/{(k+1)}$, it means that there are $1+\#(y_i=+1)$ examples (and the query) which have a positive label. Given a total of $k+1$ data, there are $C_{k+1}^{1+\#(y_i=+1)}$ different combinations. As a result, for a fixed $\{(x_i,y_i)\}$, we have
$$P\left(frac=\frac{1+\#(y_i=+1)}{k+1},\{(x_i,y_i)\}\bigg| M\right)=\frac{1}{C_{k+1}^{1+\#(y_i=+1)}} P\left( frac=\frac{1+\#(y_i=+1)}{k+1} \bigg|M\right).$$
Similarly, we obtain that
$$P\left(frac=\frac{\#(y_i=+1)}{k+1},\{(x_i,y_i)\}\bigg|M\right)=\frac{1}{C_{k+1}^{\#(y_i=+1)}} P\left( frac=\frac{\#(y_i=+1)}{k+1}\bigg|M \right).$$
Taking the above into (\ref{eqn:frac}), it becomes
\begin{eqnarray*}
&&P\left(frac=\frac{1+\#(y_i=+1)}{k+1}\bigg|\{(x_i,y_i)\},M\right)\nonumber\\
&=& \frac{P\left(frac=\frac{1+\#(y_i=+1)}{k+1}\bigg|M\right)C_{k+1}^{\#(y_i=+1)}}{P\left(frac=\frac{1+\#(y_i=+1)}{k+1}\bigg|M\right)C_{k+1}^{\#(y_i=+1)}+P\left(frac=\frac{\#(y_i=+1)}{k+1}\bigg|M\right)C_{k+1}^{1+\#(y_i=+1)}}\\
&=& \frac{1}{1+\frac{P\left(frac=\frac{\#(y_i=+1)}{k+1}|M\right)}{P\left(frac=\frac{1+\#(y_i=+1)}{k+1}|M\right)}\frac{k-\#(y_i=+1)+1}{1+\#(y_i=+1)}}.
\end{eqnarray*}
To further look at the exact value of $P\left(frac=\frac{1+\#(y_i=+1)}{k+1}\bigg|\{(x_i,y_i)\},M\right)$, we need to figure out $P(frac=i/(k+1)|M)$ using the Beta distribution. Recall that the probability density function $f$ of Beta$(\alpha,\beta)$ satisfies
\begin{eqnarray*}
    f(u)=\frac{u^{\alpha-1}(1-u)^{\beta-1}}{B(\alpha,\beta)}
\end{eqnarray*}
for Beta function $B(\alpha,\beta)$. Recall that we assume that $\alpha/\beta=\pi/(1-\pi)$ for some $\pi\in (0,1)$, and both $\alpha,\beta\rightarrow\infty$. When $u<(\alpha-1)/(\alpha+\beta-2)\approx \pi$, we have $f$ is an increasing function in $u$, otherwise $f$ is decreasing. This implies that the largest probability of $frac$ may be taken from $P(frac=(\lfloor \pi(k+1) \rfloor+1)/(k+1))$, $P(frac=\lfloor \pi(k+1) \rfloor/(k+1))$ or $P(frac=(\lfloor \pi(k+1) \rfloor-1)/(k+1))$.

When $frac<(\lfloor \pi(k+1) \rfloor-1)/(k+1)$, when $\alpha$ and $\beta$ are large enough, one can obtain that
\begin{eqnarray*}
    \frac{P\left(frac=\frac{\#(y_i=+1)}{k+1}|M\right)}{P\left(frac=\frac{1+\#(y_i=+1)}{k+1}|M\right)}\frac{k-\#(y_i=+1)+1}{1+\#(y_i=+1)}\rightarrow 0,
\end{eqnarray*}
which implies that
\begin{eqnarray*}
P\left(frac=\frac{1+\#(y_i=+1)}{k+1}\bigg|\{(x_i,y_i)\},M\right)=\frac{1}{1+\frac{P\left(frac=\frac{\#(y_i=+1)}{k+1}|M\right)}{P\left(frac=\frac{1+\#(y_i=+1)}{k+1}|M\right)}\frac{k-\#(y_i=+1)+1}{1+\#(y_i=+1)}}\rightarrow 1.
\end{eqnarray*}
Similarly, when $frac>(\lceil \pi(k+1) \rceil+1)/(k+1)$,
\begin{eqnarray*}
P\left(frac=\frac{1+\#(y_i=+1)}{k+1}\bigg|\{(x_i,y_i)\},M\right)=\frac{1}{1+\frac{P\left(frac=\frac{\#(y_i=+1)}{k+1}|M\right)}{P\left(frac=\frac{1+\#(y_i=+1)}{k+1}|M\right)}\frac{k-\#(y_i=+1)+1}{1+\#(y_i=+1)}}\rightarrow 0.
\end{eqnarray*}
Finally, we put $P\left(frac=\frac{1+\#(y_i=+1)}{k+1}\bigg|\{(x_i,y_i)\},M\right)$ into $P(y=+1|x,\{(x_i,y_i),M\})$:
{\tiny
    \begin{eqnarray*}
        &&P(y=+1|x,\{(x_i,y_i)\},M)\\
        &=&\frac{P((x,y=+1)|\{(x_i,y_i)\},M)}{P((x,y=+1)|\{(x_i,y_i)\},M)+P((x,y=-1)|\{(x_i,y_i)\},M)}\\
        &=&\frac{P(x|y=+1, \{(x_i,y_i)\}|M)P(frac=\frac{1+\#(y_i=+1)}{k+1}|\{(x_i,y_i)\},M)}{P(x|y=+1, \{(x_i,y_i),M\})P(frac=\frac{1+\#(y_i=+1)}{k+1}|\{(x_i,y_i)\},M)+P(x|y=-1, \{(x_i,y_i)\},M)P(frac=\frac{\#(y_i=+1)}{k+1}|\{(x_i,y_i)\},M)}.
    \end{eqnarray*}
    }
When $P(x|y=+1,\{(x_i,y_i),M\})$ and $P(x|y=-1,\{(x_i,y_i),M\})$ are both bounded and bounded away from zero, we have
\begin{equation*}
    P(y=+1|x,\{(x_i,y_i)\},M)\rightarrow \begin{cases}
        1 &\text{if }\frac{\#(y_i=+1)}{k+1}<\frac{\lfloor \pi(k+1)\rfloor-1}{k+1}\\
        0 &\text{if }\frac{\#(y_i=+1)}{k+1}>\frac{\lceil \pi(k+1)\rceil+1}{k+1}
    \end{cases},
\end{equation*}
which completes the proof.
\end{proof}
\newpage
\section{Simulation setups}\label{sec:appendix:simulation setup}
In this section, we provide details of experimental setups for simulation.

\textbf{Model structure.}~~Following previous work \citep{garg2022can}, we use a decoder-only Transformer architecture \citep{vaswani2017attention} from the GPT-2 family \citep{radford2019language}. This model consists of 2 layers, 8 attention heads, and a 256-dimensional
embedding space.

\textbf{Pre-training.}~~We train the model using a cross-entropy loss function for binary classification. We sample a batch of random prompts at each training step and update the model through a gradient update. We train with a batch size of 64 and for 50k steps. This training is done from scratch, that is, we do not fine-tune a pre-trained language model, nor do we train on actual text. Following previous work \citep{garg2022can}, we also use curriculum learning \citep{bengio2009curriculum, elman1993learning}. In particular, we start with a shorter length of prompts (10 input-output pairs) and increase the length by 2 every 2000 training steps. For the other hyperparameters, e.g., learning rate, we use the default values as in \citep{garg2022can}.

\textbf{Pre-training data.}~~We follow the data generation model in Section \ref{sec:assumption}. We first select label $y\in \{+1,-1\}$ with probability $\pi$ (positive probability). Then for inputs with positive labels, we first sample a mean value $\theta_+$ from a Gaussian distribution $N(\theta_M,\sigma_M^2I)$, and then sample data $x$ from Gaussian distribution $N(\theta_+, \sigma^2I)$; similar for the inputs with label $-1$, we sample $\theta_-$ from $N(-\theta_M, \sigma_M^2I)$, and sample $x$ from $N(\theta_-,\sigma^2I)$. Specifically, we let $\theta_M=0.5\mathbf{1}, \sigma_M^2=\sigma^2=1$. During the pre-training, to ensure the transformer can learn the population information rather than overfitting a particular set of data, we sample a new pair of $(\theta_+,\theta_-)$ for each iteration and generate corresponding sample pair $(x_i,y_i)$. 

\textbf{Computation resources.}~~Both simulations and real-world experiments are running on a server with 8 Nvidia RTX A6000 GPU (48G GPU memory each) and 32 AMD EPYC 7302 16-Core Processors.

\section{Additional Experiment Results}\label{sec:appendix:experiments}
\subsection{Simulation for Independent Exampels}\label{sec:appendix:experiments:independent_simulation}

Figure \ref{fig:contradict1_}, \ref{fig:contradict2}, \ref{fig:contradict4} represents additional results corresponding to the contradict knowledge setting in Figure \ref{fig:contradict1}. The observations are similar to Figure \ref{fig:contradict1}.

\begin{figure}[h]
    \centering
    \begin{minipage}{0.32\textwidth}
        \includegraphics[width=\linewidth]{figures/contradict1.0.png} 
        \caption{$\sigma^2=1$}
        \label{fig:contradict1_}
    \end{minipage}\hfill
    \begin{minipage}{0.32\textwidth}
        \includegraphics[width=\linewidth]{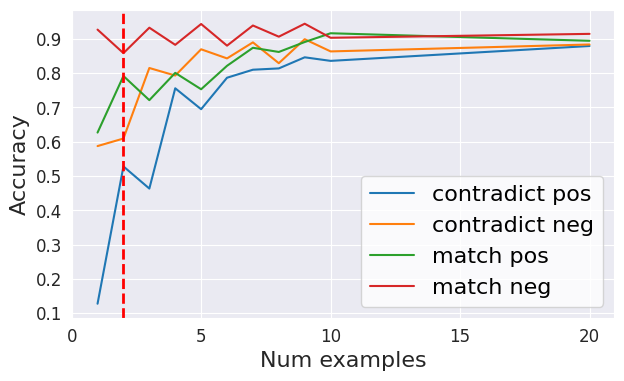} 
        \caption{$\sigma^2=2$}
        \label{fig:contradict2}
    \end{minipage}\hfill
    \begin{minipage}{0.32\textwidth}
        \includegraphics[width=\linewidth]{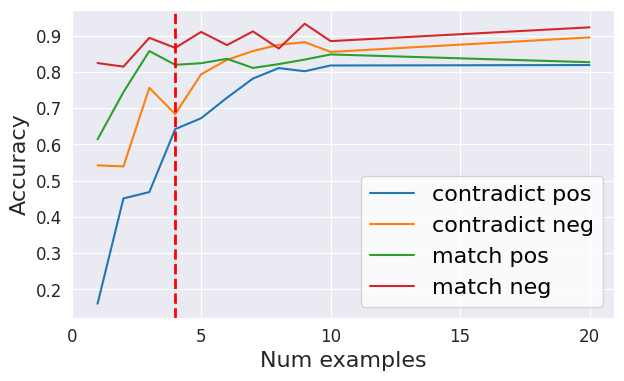} 
        \caption{$\sigma^2=4$}
        \label{fig:contradict4}
    \end{minipage}
\end{figure}

\subsection{Mean Reversion}\label{sec:appendix:experiment:dependent}
We further pre-train GPT models with different fractions ($frac=0.2,0.5,0.8$) and test the posterior distribution of labels when the fraction of positive labels within examples varies. We do not add noise and keep other settings unchanged. We observe a dramatic change around 0.2,0.5,0.8 respectively in Figure \ref{fig:dep frac}, and these figures directly verify our results: in Theorem \ref{them: dependent}, the cutting point are $0.2,0.5,0.8$ respectively in the three settings.

\begin{figure}[!ht]
    \centering
        \includegraphics[width=\linewidth]{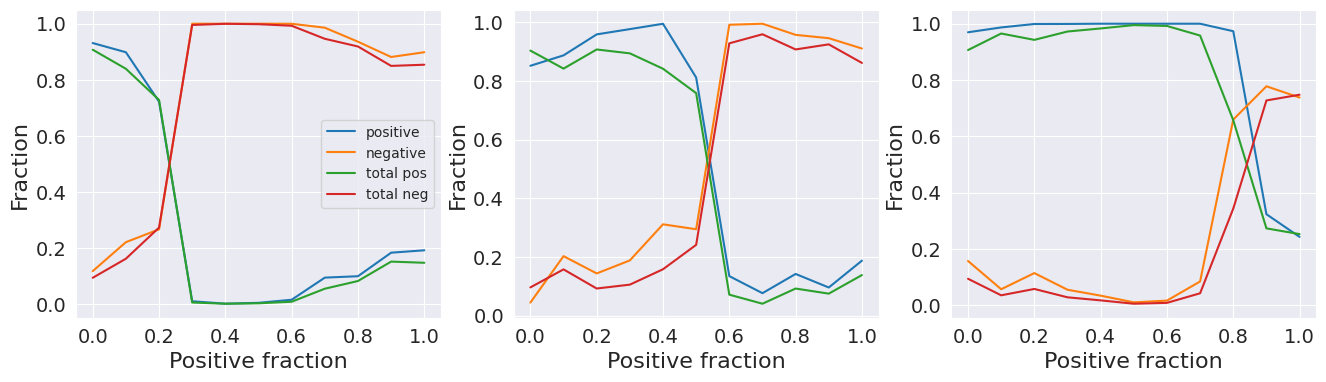} 
        \caption{Pre-training with a fraction of positive 0.2.}
        \label{fig:dep frac}
\end{figure}

\end{document}